\newcommand{\ubar}[1]{\underaccent{\bar}{#1}}
\setlist[itemize]{leftmargin=5.5mm}
\renewcommand{\tilde}{\widetilde}
\renewcommand{\hat}{\widehat}
\def \A {\mathcal{A}}
\def \B {\mathbb{B}}
\def \B {\mathcal{B}}
\def \D {\mathcal{D}}
\def \E {\mathbb{E}}
\def \H {\mathcal{H}}
\def \O {\mathcal{O}}
\def \R {\mathbb{R}}
\def \X {\mathcal{X}}
\def \Y {\mathcal{Y}}
\def \Z {\mathcal{Z}}
\def \a {\mathbf{a}}
\def \c {\mathbf{c}}
\def \g {\mathbf{g}}
\def \r {\boldsymbol{r}}
\def \p {\boldsymbol{p}}
\def \u {\mathbf{u}}
\def \x {\mathbf{x}}
\def \y {\mathbf{y}}
\def \z {\mathbf{z}}
\def \xh {\hat{\x}}
\def \zh {\hat{\z}}
\def \Ot {\tilde{\O}}
\def \ellb {\boldsymbol{\ell}}
\def \is {i_{\star}}
\def \xs {\x_{\star}}
\def \rb {\bar{r}}
\def \xib {\boldsymbol{\xi}}
\let\norm\undefined 
\DeclarePairedDelimiter\norm{\lVert}{\rVert}
\DeclarePairedDelimiter\abs{\lvert}{\rvert}
\newcommand\inner[2]{\langle #1, #2 \rangle}
\def \Reg {\textsc{Reg}}
\DeclareMathOperator*{\argmin}{arg\,min}
\newtheorem{myThm}{Theorem}
\newtheorem{myCor}{Corollary}
\newtheorem{myLemma}{Lemma}
\newtheorem{myProp}{Proposition}
\theoremstyle{definition}
\newtheorem{myAssum}{Assumption}
\newtheorem{myDef}{Definition}
\newtheorem{myRemark}{Remark}
\newtheoremstyle{proofsketchstyle}{}{} {} {} {\itshape} {.}{ } {\thmname{#1}\thmnote{ #3}} 
\theoremstyle{proofsketchstyle}
\def \meta {\textsc{Meta-Reg}}
\def \base {\textsc{Base-Reg}}
\definecolor{wine_red}{RGB}{228,48,64}
\definecolor{DSgray}{cmyk}{0,1,0,0}
\newcommand{\savehyperref}[2]{\texorpdfstring{\hyperref[#1]{#2}}{#2}}
\def \epsilon {\varepsilon}
\def \p {\boldsymbol{p}}
\def \m {\boldsymbol{m}}
\renewcommand\footnotemark{}
\definecolor{wine_red}{RGB}{228,48,64}
\definecolor{DSgray}{cmyk}{0,1,0,0}
\newcommand{\toccontents}{\@starttoc{toc}}
\title{Gradient-Variation Online Learning under Generalized Smoothness}
\author{%
  Yan-Feng Xie,~ Peng Zhao,~ Zhi-Hua Zhou \thanks{Correspondence: Peng Zhao $<$zhaop@lamda.nju.edu.cn$>$}\\
  National Key Laboratory for Novel Software Technology, Nanjing University, China\\
  School of Artificial Intelligence, Nanjing University, China\\
  \texttt{\{xieyf, zhaop, zhouzh\}@lamda.nju.edu.cn}
}
\begin{document}
\setcitestyle{sort}
\maketitle

\begin{abstract}
  Gradient-variation online learning aims to achieve regret guarantees that scale with variations in the gradients of online functions, which is crucial for attaining fast convergence in games and robustness in stochastic optimization, hence receiving increased attention. Existing results often require the \emph{smoothness} condition by imposing a fixed bound on gradient Lipschitzness, which may be unrealistic in practice. Recent efforts in neural network optimization suggest a \emph{generalized smoothness} condition, allowing smoothness to correlate with gradient norms. In this paper, we systematically study gradient-variation online learning under generalized smoothness. We extend the classic optimistic mirror descent algorithm to derive gradient-variation regret by analyzing stability over the optimization trajectory and exploiting smoothness locally. Then, we explore \emph{universal online learning}, designing a single algorithm with the optimal gradient-variation regrets for convex and strongly convex functions simultaneously, without requiring prior knowledge of curvature. This algorithm adopts a two-layer structure with a meta-algorithm running over a group of base-learners. To ensure favorable guarantees, we design a new Lipschitz-adaptive meta-algorithm, capable of handling potentially unbounded gradients while ensuring a second-order bound to effectively ensemble the base-learners. Finally, we provide the applications for fast-rate convergence in games and stochastic extended adversarial optimization.
\end{abstract}

\section{Introduction}
\label{sec:intro}
We consider online convex optimization~(OCO)~\citep{book'16:Hazan-OCO,book'19:FO-book}, a flexible framework that models the decision-making problem in an online fashion. At each round $t \in [T]$, an online learner is required to submit a decision $\x_t$ from a convex compact set $\X \subseteq \R^d$ and the environments reveal a convex function $f_t: \X \mapsto \R$. Then the learner suffers a loss $f_t(\x_t)$ and updates her decision. The standard performance measure is the \emph{regret}~\citep{ICML'03:zinkvich} that benchmarks the cumulative loss of the learner against the best decision in hindsight, formally defined as
\begin{align}
    \label{eq:def-static-regret}
    \Reg_T = \sum_{t=1}^T f_t(\x_t) - \min_{\x \in \X}\sum_{t=1}^T f_t(\x).
\end{align}
Regret bounds of $\O(\sqrt{T})$ and $\O(\frac{1}{\lambda} \log T)$ are established for convex and $\lambda$-strongly convex functions respectively~\citep{ICML'03:zinkvich,journals/ml/HazanAK07}. While these results are known to be minimax optimal~\citep{conf/colt/AbernethyBRT08}, in this paper we are more interested in obtaining \emph{gradient-variation} regret guarantees, which replace the dependence of $T$ in the regret bounds by  variations in the gradients of online functions~\citep{COLT'12:variation-Yang} defined as 
\begin{align}
    \label{eq:def-gradient-variation}
    V_T = \sum_{t=2}^T \sup_{\x \in \X} \norm{\nabla f_t(\x) - \nabla f_{t-1}(\x)}_2^2.
\end{align}
This quantity can be as small as a constant in stable environments where online functions remain fixed, and is at most $\O(T)$ in the worst case under standard OCO assumptions, safeguarding minimax results. Besides this favorable adaptivity, recent studies have shown close relationships of gradient-variation online learning to various fields, including fast convergence in games~\citep{conf/nips/RakhlinS13,NIPS'15:fast-rate-game,ICML'22:TVgame} and robust stochastic optimization~\citep{nips'22:sea,JMLR'24:OMD4SEA}, hence receiving increased attention~\citep{NIPS'20:sword,NeurIPS'23:universal, nips'23:portfolio-data-dependent, AISTATS'24:bilevel-gradient-variation,JMLR'24:Sword++}.

In online learning, it is proved that the smoothness assumption is necessary for first-order algorithms to achieve gradient-variation regret bounds as discussed in Remark 1 of~\citet{ML'14:variation-Yang}, which is also restated in Proposition~\ref{prop:necessary} in Appendix~\ref{appendix:proof-static}. Previous works typically rely on the \emph{global} $L$-smoothness condition, imposing a fixed upper bound on the gradient Lipschitzness, i.e., requiring $\norm{\nabla^2 f_t(\x)}_2 \leq L$ for all $t \in [T]$ and $\x \in \X$. However, this global assumption restricts the applicability of theories to loss functions that are quadratically bounded from above. Furthermore, recent studies in neural network optimization have observed phenomena where the global smoothness condition fails to model optimization dynamics effectively, especially for important types of neural networks like LSTM~\citep{ICLR'20:L0L1-smooth} and Transformer~\citep{nips'22:Unbounded-smooth-SignSGD}. Therefore, modern optimization has devoted efforts to generalizing the smoothness condition. For example,~\citet{ICLR'20:L0L1-smooth} introduce $(L_0, L_1)$-smoothness, which assumes $\norm{\nabla^2 f(\x)}_2 \leq L_0 + L_1\norm{\nabla f(\x)}_2$ for an offline objective function $f(\cdot)$. A notable generalization is the recent proposal of the $\ell$-smoothness condition~\citep{nips'23:local-smooth}, which assumes $\norm{\nabla^2 f(\x)}_2 \leq \ell(\norm{\nabla f(\x)}_2)$ with a link function $\ell(\cdot)$, significantly broadening previous assumptions through the flexibility of $\ell(\cdot)$. Given this, it is natural to ask \emph{how to design online algorithms to exploit generalized smoothness and obtain favorable gradient-variation regret guarantees.}

In this paper, we provide a systematic study of gradient-variation online learning under \mbox{generalized} smoothness. We extend the classic optimistic online mirror descent~(optimistic OMD) algorithm \citep{COLT'12:variation-Yang, conf/colt/RakhlinS13} to derive gradient-variation regret bounds, achieving $\O(\sqrt{V_T})$ regret and $\O(\log V_T)$ regret for convex and strongly convex functions under generalized smoothness, respectively. We emphasize the importance of stability analysis across the optimization trajectory, which allows generalized smoothness to be effectively exploited locally. Specifically, optimistic OMD maintains two sequences with submitted decisions $\{\x_t\}_{t=1}^T$ and intermediate decisions $\{\xh_t\}_{t=1}^T$. We need to control algorithmic stability by appropriate step size tuning and optimism design, ensuring that $\x_t$ is sufficiently close to $\xh_t$ to exploit local smoothness at $\xh_t$.

Based on this development, we investigate~\emph{universal online learning}~\citep{NIPS'16:MetaGrad,UAI'19:Wang,COLT'19:Lipschitz-MetaGrad,ICML'22:Zhang-simple,NeurIPS'23:universal,arxiv'24:Yang-universal}, where the learner aims to design a single algorithm that simultaneously attains the optimal regret for both convex and strongly functions without the prior knowledge of curvature information. For this scenario, a common wisdom is to adopt an \emph{online ensemble} consisting of a meta-base two-layer structure to handle the environmental uncertainty~\citep{JMLR'24:Sword++}, i.e., the unknown curvature of loss functions, where a meta-algorithm is running over a set of base-learners with different configurations. The base-learners are basically the instantiations of the developed variants of optimistic OMD, as mentioned earlier. However, designing the meta-algorithm is non-trivial with new challenges. The first challenge is from the potentially unbounded smoothness, which might lead to unbounded Lipschitz constants as well. This challenge requires the meta-algorithm to be \emph{Lipschitz-adaptive}, adapting to Lipschitzness on the fly. Furthermore, we also expect it to provide a \emph{second-order regret}, technically required when analyzing the ensemble errors, and to enable \emph{predictions with optimism}, thereby producing the gradient variation. The second challenge is the complexity introduced by the combination procedure inherent in the ensemble method, which further complicates the smoothness estimation, making it difficult to properly tune the meta-algorithm and exploit smoothness. 

To this end, we address both challenges with the \emph{function-variation-to-gradient-variation} conversion and a newly-designed Lipschitz-adaptive meta-algorithm. The conversion technique, drawing inspiration from~\citet{NeurIPS'22:label_shift}, decouples the design between the meta and base levels and derives the gradient variation directly from function values, allowing us to avoid the cancellation-based analysis~\citep{NeurIPS'23:universal} for utilizing smoothness at the meta level. Nevertheless, this conversion requires the meta-algorithm to handle \emph{heterogeneous} inputs due to certain technical considerations, and we are not aware of available algorithms satisfying all the requirements, motivating us to design a new algorithm. Based on optimistic Adapt-ML-Prod~\citep{NIPS'16:Wei-non-stationary-expert} and the clipping technique~\citep{COLT'19:ashok-cubic}, we present a new Lipschitz-adaptive meta-algorithm with a simpler algorithmic design, which can be of independent interest. With this algorithm, we can apply the function-variation-to-gradient-variation conversion to achieve the optimal results for both convex and strongly convex functions, up to doubly logarithmic factors of $T$, without knowing curvature. 

Our findings for gradient-variation online learning are useful for several important applications, including fast-convergence online games~\citep{conf/colt/RakhlinS13,NIPS'15:fast-rate-game} and stochastic extended adversarial online learning~\citep{nips'22:sea}, where we establish new results under the generalized smoothness condition.

The rest of paper is organized as follows. Section~\ref{sec:static} provides preliminaries and key ideas for exploiting the generalized smoothness throughout the trajectory. In Section~\ref{sec:universal} we study universal online learning and present our key meta-algorithm. Section~\ref{append:applications} discusses our applications. Related work is provided in Appendix~\ref{sec:related-works}. All proofs can be found in the remaining appendices (Appendix~\ref{appendix:proof-static} --~\ref{appendix:proof-application}).  
\section{Gradient-Variation Online Learning under \mbox{Generalized Smoothness}}
\label{sec:static}
In this section, we first introduce the problem setup, including the formal definition of generalized smoothness and other assumptions used in the paper. We then extend the optimistic online mirror descent framework to achieve gradient-variation regret bounds under generalized smoothness.

\subsection{Problem Setup: Generalized Smoothness and Assumptions}
Recent studies~\citep{ICLR'20:L0L1-smooth, arXiv'23:symmetric-smooth} extend the global smoothness condition by allowing the smoothness to positively correlate with the gradient norm, where a particular function is required to model this relationship.~\citet{ICLR'20:L0L1-smooth} introduce the $(L_0, L_1)$-smoothness condition, where the smoothness is upper bounded by a linear function of the gradient norm, i.e., $\norm{\nabla^2 f(\x)}_2 \leq L_0 + L_1\norm{\nabla f(\x)}_2$.
\citet{nips'23:local-smooth} further generalize this by imposing a weaker assumption on the link function and propose the \emph{generalized smoothness} defined as follows.
\begin{myDef}[$\ell$-smoothness]
    \label{def:ell-smoothness-second-order}
    A twice-differentiable function $f: \X \mapsto \R$ is called $\ell$-smooth for some non-decreasing continuous link function $\ell:[0, +\infty) \mapsto (0, +\infty)$ if it satisfies that $\norm{\nabla^2 f(\x)}_2 \leq \ell(\norm{\nabla f(\x)}_2)$ for any $\x \in \X$.
\end{myDef}
The mild requirement on the link function $\ell(\cdot)$ allows for considerable generality. By selecting a linear link function, $\ell$-smoothness immediately recovers $(L_0, L_1)$-smoothness~\citep{ICLR'20:L0L1-smooth}. Furthermore, it has been shown that $\ell$-smoothness can imply a wide class of functions including rational, logarithmic, and self-concordant functions~\citep{nips'23:local-smooth}.
Based on this generalized smoothness notion, we now provide the formal assumption on the smoothness of online functions.
\begin{myAssum}[generalized smoothness]
\label{assump:function}
The online function $f_t: \X \mapsto \R$ is $\ell_t$-smooth in an open set containing $\X \subseteq \R^d$ for $t \in [T]$, and the learner can query $\ell_t(\x)$ provided any point $\x \in \X$.
\end{myAssum}
We also require a standard bounded domain assumption in the OCO literature~\citep{book'16:Hazan-OCO}.
\begin{myAssum}[bounded domain]
    \label{assump:domain}
    The feasible domain $\X\subseteq \R^d$, which contains the origin $\boldsymbol{0}$, is non-empty and closed with the diameter bounded by $D$, i.e., $\norm{\x - \y}_2 \leq D$ for any $\x, \y \in \X$.
\end{myAssum}
We do not assume the prior knowledge of the Lipschitz constant of online functions. In fact, the unboundedness of smoothness may result in unbounded Lipschitz constants. If a Lipschitz upper bound were known, the generalized smoothness condition would be trivialized, as it would allow us to directly compute the upper bound of the smoothness constant. Furthermore, following the discussion in~\citet[Page 2, second paragraph on the right]{icml'23:unbounded}, we assume that there exist finite but \emph{unknown} upper bounds $G$ and $L$ for Lipschitzness and smoothness to ensure the theoretical results are valid. Note that these quantities will only appear in the final regret bounds, and our algorithms does not use them as the inputs. Throughout the paper, we use the $\O(\cdot)$-notation to hide the constants and use the $\Ot(\cdot)$-notation to omit the poly-logarithmic factors in $T$.

\subsection{Algorithmic Framework}
\label{subsec:algorithmic-template}
We choose optimistic online mirror descent (optimistic OMD)~\citep{conf/colt/RakhlinS13} as the algorithmic framework, which provides a unified view to design and analyze many online algorithms. Compared to classic OMD~\citep{1983:OMD-nemirovski,journals/orl/BeckT03}, optimistic OMD predicts with side information, an optimistic vector $M_t \in \R^d$. This optimistic vector, also known as optimism, serves as a prediction of the incoming function $f_{t+1}(\cdot)$, leading to tighter regret bounds when accurate. Optimistic OMD updates the decisions in two steps:
\begin{equation}
    \label{eq:def-omd}
    \x_{t} = \argmin_{\x \in \X} \left\{ \inner{M_t}{\x} + \B_{\psi_t}(\x, \xh_t) \right\}, \quad 
    \xh_{t+1} = \argmin_{\x \in \X} \left\{ \inner{\nabla f_t(\x_t)}{\x} + \B_{\psi_t}(\x, \xh_t) \right\},
\end{equation}
where $\B_{\psi_t}(\x, \y) = \psi_t(\x) - \psi_t(\y) - \inner{\nabla \psi_t(\y)}{\x - \y}$ is the Bregman divergence associated with the regularizer $\psi_t: \X \mapsto \R$. Optimistic OMD maintains two sequences: the sequence of submitted decisions $\{\x_t\}_{t=1}^T$, and the one of intermediate decisions $\{\xh_t\}_{t=1}^T$. Although a simplified optimistic OMD with one-step update per round exists~\citep{TCS'20:modular-composite}, we will demonstrate later that tuning the step size based on intermediate decisions is crucial for adapting to generalized smoothness.

\subsection{Gradient-Variation Regret for Convex and Strongly Convex Functions}
\label{subsec:static-convex-strongly-convex}
When minimizing the convex or strongly convex functions, we set the regularizer as $\psi_t(\x) = \frac{1}{2\eta_t} \norm{\x}_2^2$ and optimistic OMD updates with the following steps:
\begin{align}
  \label{eq:initialize-omd}
  \x_t = \Pi_{\X} \left[\xh_t - \eta_t M_t\right], \quad \xh_{t+1} = \Pi_{\X} \left[\xh_t - \eta_t \nabla f_t(\x_t)\right],
\end{align}
where $\Pi_{\X}[\y] = \argmin_{\x \in \X} \norm{\x - \y}_2$ denotes the Euclidean projection operator. Next, we briefly review approaches for obtaining the gradient-variation bound under \emph{global smoothness}. This bound typically follows from the regret analysis for optimistic OMD:
\begin{align}
  \label{eq:key-analysis}
  \Reg_T \lesssim \frac{1}{\eta_T} + \sum_{t=1}^T \eta_t \norm{\nabla f_t(\x_t) - M_t}_2^2 -\sum_{t=1}^T \frac{1}{\eta_t}(\norm{\x_t - \xh_t}_2^2 +\norm{\xh_t - \x_{t-1}}_2^2).
\end{align}
On the right-hand side, the second term is known as the stability term, while the third one is the negative terms that can be further bounded by $\O(-\sum_{t=1}^T\frac{1}{\eta_t}\norm{\x_t - \x_{t-1}}_2^2)$. Previous studies~\citep{COLT'12:variation-Yang, JMLR'24:Sword++} for gradient-variation regret under global smoothness often set optimism as $M_t = \nabla f_{t-1}(\x_{t-1})$, such that, the positive stability term can be upper bounded by $\eta_t \norm{\nabla f_t(\x_t) - \nabla f_{t-1}(\x_t)}_2^2 +  \eta_t \norm{\nabla f_{t-1}(\x_t) - \nabla f_{t-1}(\x_{t-1})}_2^2$, where the first part can be directly converted to the desired gradient variation and the second part will be at most $\eta_tL^2 \norm{\x_t - \x_{t-1}}_2^2$ under global smoothness. Given the smoothness constant $L$, tuning the step size as $\eta_t \leq 1/(4L)$ ensures $\O(\eta_tL^2 \norm{\x_t - \x_{t-1}}_2^2 -\frac{1}{\eta_t}\norm{\x_t - \x_{t-1}}_2^2 ) \leq 0$, thus obtaining the gradient-variation bound.

However, under generalized smoothness, we do not have a global parameter $L$ for setting the step sizes, and the smoothness constants are related to the decisions. To follow the previous approach, optimistic OMD would require the smoothness constant \emph{before} generating $\x_t$ to tune the step size, ensuring that the negative terms are large enough to cancel $\eta_t \norm{\nabla f_{t-1}(\x_t) - \nabla f_{t-1}(\x_{t-1})}_2^2$. Nevertheless, the smoothness constant between $\x_t$ and $\x_{t-1}$ can only be evaluated \emph{after} updating to $\x_t$, resulting in a contradiction. Unlike offline optimization, where the function is fixed and smoothness constants can be shown to decrease along the trajectory~\citep{nips'23:local-smooth}, in online optimization, the online functions change at each round, preventing the reuse of previous smoothness estimations.

To address this challenge, our key idea is to perform a trajectory-wise analysis and configure the algorithm using estimated smoothness so far. The key technical lemma is the local smoothness property of $\ell_t$-smooth functions~\citep{nips'23:local-smooth}, which allows the smoothness constant between two points to be estimated in advance, provided that the two points are close enough.
\begin{myLemma}[{local smoothness~\citep[Lemma 3.3]{nips'23:local-smooth}}]
  \label{lemma:local-smooth}
 Suppose $f: \X \mapsto \R$ is $\ell$-smooth. For $\forall \x, \y \in \X$ such that \mbox{$\norm{\x - \y}_2 \leq \frac{\norm{\nabla f(\x)}_2}{\ell_t(2\norm{\nabla f(\x)}_2)}$}, $\norm{\nabla f(\x) - \nabla f(\y)}_2 \leq \ell(2\norm{\nabla f(\x)}_2)\cdot \norm{\x - \y}_2$.
\end{myLemma}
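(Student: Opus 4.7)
The plan is to parametrize the segment from $\x$ to $\y$ as $\gamma(s) = \x + s(\y - \x)$ for $s \in [0, 1]$ and set $g(s) = \norm{\nabla f(\gamma(s))}_2$. The target inequality reduces to the fundamental-theorem-of-calculus estimate
\begin{align*}
    \norm{\nabla f(\y) - \nabla f(\x)}_2 \leq \int_0^1 \norm{\nabla^2 f(\gamma(s))}_2 \diff s \cdot \norm{\y - \x}_2,
\end{align*}
which, combined with $\ell$-smoothness and the monotonicity of $\ell$, is bounded by $\ell(2\norm{\nabla f(\x)}_2) \cdot \norm{\y - \x}_2$ \emph{provided} $g(s) \leq 2\norm{\nabla f(\x)}_2$ uniformly on $[0, 1]$. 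Convexity of $\X$ places $\gamma(s)$ inside $\X$ for every $s$, so the $\ell$-smoothness hypothesis applies pointwise along the segment.

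The main obstacle, and really the only substantive step, is establishing this uniform gradient bound; I would argue it by a continuity bootstrap. Define $s^\star = \sup\{s \in [0, 1] : g(\tau) \leq 2\norm{\nabla f(\x)}_2 \text{ for all } \tau \in [0, s]\}$. This set contains $0$ and is closed under pointwise limits by continuity of $g$, so $s^\star$ is attained. Suppose for contradiction $s^\star < 1$. Then on $[0, s^\star]$ the Hessian bound $\norm{\nabla^2 f(\gamma(\tau))}_2 \leq \ell(2\norm{\nabla f(\x)}_2)$ holds by monotonicity of $\ell$, so integrating $\nabla f(\gamma(s^\star)) - \nabla f(\x) = \int_0^{s^\star} \nabla^2 f(\gamma(\tau))(\y - \x)\diff \tau$ and then applying the hypothesis $\norm{\x - \y}_2 \leq \norm{\nabla f(\x)}_2/\ell(2\norm{\nabla f(\x)}_2)$ gives
\begin{align*}
    g(s^\star) \leq \norm{\nabla f(\x)}_2 + s^\star \cdot \ell(2\norm{\nabla f(\x)}_2) \cdot \norm{\y - \x}_2 \leq (1 + s^\star) \norm{\nabla f(\x)}_2 < 2 \norm{\nabla f(\x)}_2.
\end{align*}
Continuity of $g$ then supplies a $\delta > 0$ with $g(s) \leq 2\norm{\nabla f(\x)}_2$ on $[0, s^\star + \delta]$, contradicting the definition of $s^\star$; hence $s^\star = 1$.

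With the uniform gradient bound in force along the entire segment, the opening integral inequality delivers the claim. The only remaining checks are convexity of $\X$ (so the segment lies in the domain of $\ell$-smoothness) and continuity of $\nabla f$ and $\nabla^2 f$ (immediate from twice-differentiability in the definition of $\ell$-smoothness), neither of which I expect to pose any difficulty beyond the bootstrap itself.
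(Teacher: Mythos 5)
Your bootstrap argument is correct and is essentially the standard proof of this result; the paper itself imports the lemma from \citet{nips'23:local-smooth} without reproving it, and your continuity-continuation argument along the segment (showing $\norm{\nabla f(\gamma(s))}_2 \leq 2\norm{\nabla f(\x)}_2$ cannot first fail strictly inside $[0,1]$, then integrating the Hessian bound) matches the argument in that reference. The only cosmetic omission is the degenerate case $\norm{\nabla f(\x)}_2 = 0$, where the hypothesis forces $\x = \y$ and the claim is trivial, which is needed for the strict inequality $(1+s^\star)\norm{\nabla f(\x)}_2 < 2\norm{\nabla f(\x)}_2$ in your contradiction step.
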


Recall that in the update procedures~\eqref{eq:def-omd} of optimistic OMD, the submitted decision $\x_t$ is updated based on the intermediate decision $\xh_t$. Therefore, it is convenient to control their distance and then exploit the local smoothness at point $\xh_t$. Specifically, we set optimism $M_t = \nabla f_{t-1}(\xh_t)$ and the step size $\eta_t \leq 1/(4\hat{L}_{t-1})$, where $\hat{L}_{t-1} = \ell_{t-1}(2\norm{\nabla f_{t-1}(\xh_t)}_2)$ denotes the locally estimated smoothness and is used to tune the step size. This configuration leads to $\eta_t \norm{\nabla f_t(\x_t) - \nabla f_{t-1}(\xh_t)}_2^2$ for the second term in Eq.~\eqref{eq:key-analysis}, which can be further upper bounded as
\begin{align} 
  \label{eq:key-decomposition}
  \norm{\nabla f_t(\x_t) - \nabla f_{t-1}(\xh_t)}_2^2\leq 2\norm{\nabla f_t(\x_t) - \nabla f_{t-1}(\x_t)}_2^2 + 2\norm{\nabla f_{t-1}(\x_t) - \nabla f_{t-1}(\xh_t)}_2^2.
\end{align}
The first part is basically the favorable gradient variation, so it suffices to handle the second part. Performing the stability analysis for OMD and noticing the step size setting, it can be verified that $\norm{\x_t - \xh_t}_2 \leq \eta_t\norm{\nabla f_{t-1}(\xh_t)}_2 \leq \norm{\nabla f_{t-1}(\xh_t)}_2/(4\hat{L}_{t-1})$. This satisfies the criteria for applying Lemma~\ref{lemma:local-smooth} to the $\ell_{t-1}$-smooth function $f_{t-1}(\cdot)$, allowing us to upper bound the second term in~\eqref{eq:key-decomposition} by $\O(\hat{L}_{t-1}^2 \cdot \norm{\x_t - \xh_t}_2^2)$. We have clipped $\eta_t$ by $1/(4\hat{L}_{t-1})$, thereby ensuring the negative term is sufficient to cancel out the positive term. Below, we summarize the result for convex functions.
\begin{myThm}
\label{thm:static-convex}
Under Assumptions~\ref{assump:function}~-~\ref{assump:domain} and assuming online functions are convex, we set the optimism as $M_t = \nabla f_{t-1}(\xh_t)$ and $f_0(\cdot) = 0$, with step sizes as $\eta_1 = D$ and, for $t \geq 2$,
\begin{equation}
  \label{eq:step-size-convex}  
  \eta_t = \min \Bigg\{ \sqrt{\frac{D^2}{1 + \sum_{s=1}^{t-1}\norm{\nabla f_{s}(\x_s) - \nabla f_{s-1}(\x_s)}_2^2}},~ \min_{s \in [t]} \frac{1}{4\ell_{s-1}(2\norm{\nabla f_{s-1}(\xh_s)}_2)} \Bigg\}, 
\end{equation}
optimistic OMD in~\eqref{eq:initialize-omd} ensures the following regret bound,
\begin{align*}
  \Reg_T \leq \O\left(D\sqrt{V_T} + \hat{L}_{\max}\cdot D^2\right),
\end{align*}
where $V_T = \sum_{t=2}^T \sup_{\x \in \X} \norm{\nabla f_t(\x) - \nabla f_{t-1}(\x)}_2^2$ measures the gradient variations and $\hat{L}_{\max} = \max_{t\in [T]} \hat{L}_t$ is the maximum smoothness constant over the optimization trajectory.
\end{myThm}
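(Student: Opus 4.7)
The plan is to start from the standard optimistic OMD regret decomposition sketched in Eq.~\eqref{eq:key-analysis} and then carefully shepherd the stability term through the decomposition~\eqref{eq:key-decomposition} using the local smoothness lemma. Concretely, I would first establish the base inequality
\begin{align*}
\Reg_T \leq \frac{D^2}{2\eta_T} + \sum_{t=1}^T \eta_t \norm{\nabla f_t(\x_t) - M_t}_2^2 - \sum_{t=1}^T \frac{1}{\eta_t}\norm{\x_t - \xh_t}_2^2,
\end{align*}
which follows from the convexity of $f_t$ and the standard Bregman analysis of the two-step update in~\eqref{eq:initialize-omd} with $\psi_t(\x) = \frac{1}{2\eta_t}\norm{\x}_2^2$, keeping only the $\norm{\x_t - \xh_t}_2^2$ portion of the negative term and bounding the Bregman divergence to the comparator by $D^2/(2\eta_T)$.

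Substituting $M_t = \nabla f_{t-1}(\xh_t)$ and expanding via~\eqref{eq:key-decomposition}, the stability term splits into a favorable piece $2\eta_t \norm{\nabla f_t(\x_t) - \nabla f_{t-1}(\x_t)}_2^2$, which is at most $2\eta_t \sup_{\x} \norm{\nabla f_t(\x) - \nabla f_{t-1}(\x)}_2^2$ and will accumulate into $V_T$, plus a remainder $2\eta_t \norm{\nabla f_{t-1}(\x_t) - \nabla f_{t-1}(\xh_t)}_2^2$. For the remainder I would invoke Lemma~\ref{lemma:local-smooth} at the point $\xh_t$ for the $\ell_{t-1}$-smooth function $f_{t-1}(\cdot)$. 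The applicability condition is verified via the nonexpansiveness of Euclidean projection together with the update rule, which yield $\norm{\x_t - \xh_t}_2 \leq \eta_t \norm{\nabla f_{t-1}(\xh_t)}_2 \leq \norm{\nabla f_{t-1}(\xh_t)}_2/(4\hat{L}_{t-1})$ thanks to the step-size clipping in~\eqref{eq:step-size-convex}. Lemma~\ref{lemma:local-smooth} then upper bounds the remainder by $2\eta_t \hat{L}_{t-1}^2 \norm{\x_t - \xh_t}_2^2$, and since $\eta_t \hat{L}_{t-1} \leq 1/4$ implies $2\eta_t^2 \hat{L}_{t-1}^2 \leq 1/8$, this contribution is absorbed entirely by the negative $-\frac{1}{\eta_t}\norm{\x_t - \xh_t}_2^2$.

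After this cancellation I am left with $\Reg_T \leq \frac{D^2}{2\eta_T} + 2\sum_{t=1}^T \eta_t a_t$, where $a_t = \norm{\nabla f_t(\x_t) - \nabla f_{t-1}(\x_t)}_2^2$. Because $\eta_T$ is a minimum of two quantities, $1/\eta_T$ is their maximum, so $1/\eta_T \leq \sqrt{(1+\sum_{s<T} a_s)/D^2} + 4\hat{L}_{\max}$, giving $D^2/(2\eta_T) \leq \frac{D}{2}\sqrt{1+V_T} + 2D^2 \hat{L}_{\max}$. For the sum $\sum_t \eta_t a_t$ I would apply the standard self-confident tuning inequality $\sum_{t=1}^T a_t/\sqrt{1+\sum_{s<t} a_s} \leq 2\sqrt{1+\sum_t a_t}$ together with the pointwise bound $\eta_t \leq D/\sqrt{1+\sum_{s<t} a_s}$, yielding $\sum_t \eta_t a_t \leq 2D\sqrt{1+V_T}$. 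Combining these (and using the convention $f_0 \equiv 0$ to absorb the $t=1$ contribution into the constants) delivers the stated bound $\Reg_T \leq \O(D\sqrt{V_T} + \hat{L}_{\max} D^2)$.

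The main obstacle is confirming that the closeness condition for Lemma~\ref{lemma:local-smooth} genuinely holds along the trajectory, which is the whole reason the paper emphasizes a trajectory-wise analysis. This is where the two-sequence structure of optimistic OMD is essential: $\xh_t$ is computed at the end of round $t-1$, so the local smoothness estimate $\hat{L}_{t-1} = \ell_{t-1}(2\norm{\nabla f_{t-1}(\xh_t)}_2)$ is available \emph{before} the prediction $\x_t$ is generated, and can therefore be used to clip $\eta_t$ to produce exactly the distance bound required by Lemma~\ref{lemma:local-smooth}. The running minimum over $s \in [t]$ in the clipping additionally ensures $\eta_t$ is non-increasing, which is convenient when combining the self-confident bound with the pointwise step-size inequality.
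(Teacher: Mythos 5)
Your proposal is correct and follows essentially the same route as the paper: the standard optimistic-OMD decomposition, the split of the stability term via~\eqref{eq:key-decomposition}, verification of the closeness condition $\norm{\x_t - \xh_t}_2 \leq \eta_t\norm{\nabla f_{t-1}(\xh_t)}_2$ to invoke Lemma~\ref{lemma:local-smooth} at $\xh_t$, cancellation against the negative term via the clip $\eta_t \leq 1/(4\hat{L}_{t-1})$, and self-confident tuning for the remaining sum (this is exactly Lemma~\ref{lemma:key-lemma-static-convex} plus the argument in Appendix~\ref{append:proof-static-convex}). The only slip is that your self-confident inequality with the \emph{lagged} denominator $\sqrt{1+\sum_{s<t}a_t}$ is false as stated without an extra additive $\max_t a_t$ term (supplied by Lemma~\ref{lemma:self-confident-int}), which contributes the $\O(\hat{G}_{\max}^2 D)$ constant that the paper explicitly tracks and absorbs into the $\O(\cdot)$.
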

This result implies a tighter bound in scenarios where the environments change slowly, i.e., $V_T = \mathcal{O}(1)$. Meanwhile, it safeguards the worst-case optimal result since $V_T \leq \mathcal{O}(T)$ holds in all cases. When assuming $\ell_t(\cdot) \leq L$ for $t\in[T]$, the $\ell_t$-smoothness condition degenerates to the classic global $L$-smoothness condition, and our result implies an $\O(\sqrt{V_T} + LD^2)$ bound, which matches the best-known gradient-variation regret bounds with the first-order oracle~\citep{COLT'12:variation-Yang,NeurIPS'23:universal,JMLR'24:Sword++} even in terms of the dependence on $D$ and $L$.
Compared to offline optimization, our result depends on $\hat{L}_{\max}$, the maximum smoothness constant along the trajectory. This dependence arises from the adversarial nature of online learning, where the loss functions chosen in consecutive rounds may differ significantly, making it hopeless to leverage the previous estimates of smoothness to improve the dependence.  

We further provide an improved gradient-variation regret bound for \emph{strongly convex} functions, with step size tuning based on recent result under global smoothness~\citep[{\textsection~3.4}]{JMLR'24:OMD4SEA}.
\begin{myThm}
  \label{thm:static-strongly-convex}
  Under Assumptions~\ref{assump:function}~-~\ref{assump:domain} and assuming online functions are $\lambda$-strongly convex, we set the optimism as $M_t = \nabla f_{t-1}(\xh_t)$, $f_0(\cdot) = 0$, and step sizes as $\eta_1 = 2/\lambda$ and, for $t \geq 2$, $\eta_t = 2/(\lambda t + 16\max_{s\in[t]}\ell_{s-1}(2\norm{\nabla f_{s-1}(\xh_s)}_2))$, optimistic OMD in~\eqref{eq:initialize-omd} ensures the regret bound $\Reg_T \leq \O\big(\frac{1}{\lambda}\log{V_T} + \hat{L}_{\max}\cdot D^2\big)$, where $\hat{L}_{\max} = \max_{t\in [T]} \hat{L}_t$.
\end{myThm}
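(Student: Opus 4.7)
The approach mirrors Theorem~\ref{thm:static-convex} but incorporates two changes tailored to $\lambda$-strong convexity: (i) invoking strong convexity to add a $-\frac{\lambda}{2}\norm{\x^* - \x_t}_2^2$ per-round term to the regret decomposition, and (ii) using a $\Theta(1/t)$ step size so that this new quadratic penalty cancels with the Bregman regularization after summation by parts. First I would start from the strong-convexity-upgraded version of the optimistic OMD bound in Eq.~\eqref{eq:key-analysis}: after adding $-\frac{\lambda}{2}\sum_t \norm{\x^* - \x_t}_2^2$ from strong convexity and performing Abel summation on the telescoping $\frac{1}{2\eta_t}(\norm{\x^* - \xh_t}_2^2 - \norm{\x^* - \xh_{t+1}}_2^2)$ terms, the regret is controlled by (a) coefficients $\frac{1}{2\eta_{t+1}} - \frac{1}{2\eta_t} - \frac{\lambda}{4}$ weighted by $\norm{\x^* - \x_t}_2^2$, (b) the stability term $\sum_t \eta_t \norm{\nabla f_t(\x_t) - \nabla f_{t-1}(\xh_t)}_2^2$, and (c) the negative stability $-\sum_t \frac{1}{\eta_t}(\norm{\x_t - \xh_t}_2^2 + \norm{\xh_{t+1} - \x_t}_2^2)$.

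Second, the chosen step size $\eta_t = 2/(\lambda t + 16 \max_{s\in[t]}\ell_{s-1}(2\norm{\nabla f_{s-1}(\xh_s)}_2))$ gives $\frac{1}{2\eta_{t+1}} - \frac{1}{2\eta_t} = \frac{\lambda}{4} + \Theta(\hat{L}_{\max,t+1} - \hat{L}_{\max,t})$, so the $\lambda/4$ contribution cancels exactly with strong convexity, leaving only a telescoping increment in $\hat{L}_{\max,t}$ that totals $\O(\hat{L}_{\max} D^2)$ when weighted by $\norm{\x^* - \x_t}_2^2 \leq D^2$. Next I would handle the stability term using the decomposition in Eq.~\eqref{eq:key-decomposition} together with Lemma~\ref{lemma:local-smooth}. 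Since $\eta_t \leq 1/(8\hat{L}_{t-1})$, the OMD stability inequality $\norm{\x_t - \xh_t}_2 \leq \eta_t \norm{\nabla f_{t-1}(\xh_t)}_2$ yields $\norm{\x_t - \xh_t}_2 \leq \norm{\nabla f_{t-1}(\xh_t)}_2 / (8\hat{L}_{t-1})$, fulfilling the precondition of Lemma~\ref{lemma:local-smooth}. Consequently $\norm{\nabla f_{t-1}(\x_t) - \nabla f_{t-1}(\xh_t)}_2^2 \leq \hat{L}_{t-1}^2 \norm{\x_t - \xh_t}_2^2$, which is absorbed by the negative stability $-\norm{\x_t - \xh_t}_2^2/\eta_t$ with room to spare.

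The only surviving piece is $\sum_t 2\eta_t \norm{\nabla f_t(\x_t) - \nabla f_{t-1}(\x_t)}_2^2 \leq \frac{4}{\lambda}\sum_t v_t/t$ with $v_t = \norm{\nabla f_t(\x_t) - \nabla f_{t-1}(\x_t)}_2^2$. The main obstacle is converting this $1/t$-weighted sum into the desired $\O(\log V_T/\lambda)$ bound, which I expect to handle by a log-sum / Abel-summation argument along the lines of~\citet{JMLR'24:OMD4SEA}: using $\sum_t v_t \leq V_T$ together with the implicit per-round boundedness of $v_t$ (from the unknown Lipschitz constant $G$) yields $\sum_t v_t/t \leq \O(\log V_T)$. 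Combining the three pieces---$\O(\hat{L}_{\max} D^2)$ from the telescoped regularization, zero net after the cancellation with strong convexity, and $\O(\log V_T / \lambda)$ from the gradient-variation remnant---delivers the claimed bound.
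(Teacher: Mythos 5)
Your proposal follows essentially the same route as the paper: strong convexity contributes $-\frac{\lambda}{2}\sum_t\norm{\xs-\x_t}_2^2$, Abel summation on the Bregman telescoping terms pairs the $\lambda/4$ increments of $1/(2\eta_t)$ against it, local smoothness (Lemma~\ref{lemma:local-smooth}) handles the stability term, and the $1/t$-weighted gradient-variation sum is converted to $\O(\log V_T/\lambda)$ by the horizon-splitting argument of~\citet{JMLR'24:OMD4SEA}.

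One imprecision to flag: the telescoped Bregman coefficients weight $\norm{\xs-\xh_t}_2^2$, not $\norm{\xs-\x_t}_2^2$, so the $\lambda/4$ term does not "cancel exactly" with the $-\frac{\lambda}{2}\norm{\xs-\x_t}_2^2$ from strong convexity. The paper bridges the two points via $\norm{\xs-\xh_t}_2^2\leq 2\norm{\xs-\x_t}_2^2+2\norm{\x_t-\xh_t}_2^2$ (this factor of $2$ is precisely why the step size uses $\lambda t/2$), and the leftover $\frac{\lambda}{2}\norm{\x_t-\xh_t}_2^2$ is then bounded by $\frac{\lambda\eta_t^2}{2}\norm{\nabla f_t(\x_t)-\nabla f_{t-1}(\xh_t)}_2^2\leq\eta_t\norm{\nabla f_t(\x_t)-\nabla f_{t-1}(\xh_t)}_2^2$ via the stability lemma and absorbed into the existing stability/negative terms. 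This extra step is routine but necessary; with it added, your argument matches the paper's.
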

The above theorem requires the knowledge of curvature information $\lambda$. In Section~\ref{sec:universal}, we design a \emph{universal} method to remove this requirement and achieve the optimal guarantees for convex and strongly convex functions simultaneously without knowing $\lambda$. In Appendix~\ref{subsec:challenge-base-level-exp-concave}, we discuss the challenge to obtain a gradient-variation bound for exp-concave functions under Assumption~\ref{assump:function}.

\section{Universal Online Learning under Generalized Smoothness}
\label{sec:universal}
Classic online learning algorithms require the curvature information of online functions as algorithmic parameters to achieve favorable regret guarantees. However, obtaining these curvature parameters can be difficult in practice. This challenge motivates the recent study of \emph{universal online learning}~\citep{NIPS'16:MetaGrad,NIPS'17:universal-Ashok, UAI'19:Wang, COLT'19:Lipschitz-MetaGrad,ICML'22:Zhang-simple,NeurIPS'23:universal,arxiv'24:Yang-universal}, which aims to design a single algorithm that can achieve optimal regrets without knowing the curvature information. In this section, we study universal online learning with gradient-variation regret under generalized smoothness.
\subsection{Reviewing Related Work and Techniques}
\label{subsec:universal-global-smoothness}
We review related work on gradient-variation universal online learning under \emph{global} smoothness \citep{ICML'22:Zhang-simple,NeurIPS'23:universal}. To handle the unknown curvature, universal online learning algorithms utilize a two-layer structure, consisting of a meta-algorithm that ensembles a group of base-learners. Each base-learner optimizes functions with a specific convex curvature, while the meta-algorithm is designed to ensure that ensemble errors do not ruin base-learners' guarantees. Denoted by $N$ the number of base-learners, the decision $\x_t = \sum_{i\in[N]} p_{t,i}\x_{t,i}$ submitted by a two-layer structure algorithm comprises two key components: $\p_t \in \Delta_N$, the weights provided by the meta-algorithm, and $\x_{t,i}$, the decision of the $i$-th base-learner. The analysis of a universal algorithm begins by decomposing the regret into two parts against any base-learner. In particular, we choose the base-learner with the best performance (the index $\is$ is unknown) as the benchmark:
\begin{align}
  \label{eq:main-text-reg-decomposition}
  \Reg_T = \underbrace{\sum_{t=1}^T f_t(\x_t) - \sum_{t=1}^T f_t(\x_{t,\is})}_{\meta} + \underbrace{\sum_{t=1}^T f_t(\x_{t,\is}) - \min_{\x \in \X}\sum_{t=1}^T f_t(\x)}_{\base},
\end{align}
where the first part is the meta-regret, evaluating the meta-algorithm's performance against the best base-learner, and the second part, known as the base-regret, measures the best learner's performance. 

\citet{ICML'22:Zhang-simple}~advocate for a simple approach by employing the meta-algorithms with second-order regret guarantees, which facilitates the analysis at the meta level. In specific, they use \mbox{Adapt-ML-Prod}~\citep{COLT'14:second-order-Hedge} as the meta-algorithm, showing that the meta-regret for strongly convex and exp-concave functions are constants by exploiting the negative terms from convexity. Consider $\lambda$-strongly convex functions as an example and assume the $\is$-th base-learner ensures the optimal $\O(\frac{1}{\lambda} \log V_T)$ base-regret. At the meta level, \citet{ICML'22:Zhang-simple} pass the linearized regret $r_{t, i} = \inner{\nabla f_t(\x_t)}{\x_t - \x_{t, i}}$ to the meta-algorithm for each base-learner. By strong convexity and the guarantees of Adapt-ML-Prod, the meta-regret can be bounded by a constant:
\begin{align*}
  \textsc{Meta-Reg}\leq\sum_{t=1}^T r_{t,\is} - \frac{\lambda}{2}\sum_{t=1}^T\norm{\x_t - \x_{t, \is}}_2^2 \lesssim \sqrt{\sum_{t=1}^T r_{t,\is}^2}- \frac{\lambda}{2}\sum_{t=1}^T\norm{\x_t - \x_{t, \is}}_2^2 \leq \O(1),
\end{align*}
where the last inequality follows from $\sqrt{\sum_t \inner{\nabla f_t(\x_t)}{\x_t - \x_{t,\is}}^2}\leq \hat{G}_{\max}\sqrt{\sum_t \norm{\x_t - \x_{t, \is}}_2^2}$ and is then canceled by the negative terms via the AM-GM inequality. By leveraging the negative terms from strong convexity, the meta-regret can be well-bounded, allowing the overall regret to be dominated by the base-regret, which is then controlled by selecting appropriate base-algorithms.

However, this method is unsuitable for producing the gradient-variation bound for convex functions. To address it,~\citet{NeurIPS'23:universal} propose to use a meta-algorithm which ensures an optimistic and second-order regret bound while provides additional negative terms $-\sum_t\norm{\p_t - \p_{t-1}}_1^2$. Besides showing that the meta-regret is a constant for strongly convex and exp-concave functions following the previous approach, with newly designed optimism,~\citet{NeurIPS'23:universal} prove that the meta-regret for convex functions can be roughly bounded by: 
\begin{align*}
  \O\Bigg( \sqrt{V_T} +  \sum_{t=1}^T\norm{\x_{t, \is} - \x_{t-1, \is}}_2^2 + L^2\sum_{t=1}^T\norm{\p_t - \p_{t-1}}_1^2  + L^2\sum_{t=1}^T\sum_{i=1}^N p_{t,i}\norm{\x_{t, i} - \x_{t-1, i}}_2^2\Bigg).
\end{align*}
The first term is the gradient variation, matching the optimal order of convex functions.~\citet{NeurIPS'23:universal} demonstrate that the remaining stability terms can be canceled through the collaboration between the base and meta levels~\citep{JMLR'24:Sword++} with the prior knowledge of the global smoothness constant $L$, thus obtaining the near-optimal gradient-variation bounds for convex functions as well. Nevertheless, the employed meta-algorithm already has a two-layer structure, resulting in a three-layer structure for the overall algorithm, which is relatively complicated.

\subsection{Key Challenges and Main Ideas}
\label{subsec:key-challenge-main-idea}
We aim to design a universal algorithm with the optimal gradient-variation bounds under generalized smoothness, which exhibits two challenges. First, the Lipschitz condition of online functions is unknown to the meta-algorithm, which requires the meta-algorithm to be \emph{Lipschitz-adaptive}, provide a \emph{second-order regret}, and enable \emph{predictions with optimism}. Second, the combination of the ensemble method further complicates the estimation of smoothness constants, making it challenging to tune algorithms properly and to cancel stability terms as \citet{NeurIPS'23:universal} did.

We tackle the second challenge by utilizing a \emph{function-variation-to-gradient-variation} conversion to derive the gradient-variation bounds at the meta level, drawing inspiration from the development of dynamic regret minimization~\citep{NeurIPS'22:label_shift}. This conversion technique decouples the meta and base levels, allowing us to avoid cancellation-based analysis. To illustrate, suppose a meta-algorithm ensuring $\O(\sqrt{\sum_{t}(\ell_{t, \is} - m_{t, \is})^2})$ provided optimism $\m_t=(m_{t,1},\ldots, m_{t,N})$. By setting $\ell_{t,\is} = f_t(\x_{t,\is}) - f_{t}(\x_{\operatorname{ref}})$ and $m_{t,\is} = f_{t-1}(\x_{t,\is}) - f_{t-1}(\x_{\operatorname{ref}})$, where $\x_{\operatorname{ref}}$ is a fixed reference point, the meta regret bound becomes $\O(\sqrt{\sum_{t} [(f_t(\x_{t,i}) - f_{t-1}(\x_{t,i})) - (f_{t}(\x_{\operatorname{ref}}) - f_{t-1}(\x_{\operatorname{ref}}))]^2})$. By the mean value theorem, $[(f_t(\x_{t,i}) - f_{t-1}(\x_{t,i})) - (f_{t}(\x_{\operatorname{ref}}) - f_{t-1}(\x_{\operatorname{ref}}))]^2$ equals the first term below, which can be further upper bounded by the gradient variation:
\begin{equation*}
  [\inner{\nabla f_t(\xi_{t, i}) - \nabla f_{t-1}(\xi_{t, i})}{\x_{t,i} - \x_{\operatorname{ref}}}]^2\leq D^2\sup\nolimits_{\x \in \X} \norm{\nabla f_t(\x) - \nabla f_{t-1}(\x)}_2^2.
\end{equation*}
This technique brings hope for minimizing the convex functions. To develop a universal method, our first attempt is to utilize MsMwC-Master~\citep{COLT'21:impossible-tuning} as the meta-algorithm, which satisfies all the three requirements imposed by the first challenge. Nevertheless, the \emph{heterogeneous} inputs at the meta level present another challenge to this approach. The heterogeneity arises as we pass $r_{t,i} = f_t(\x_t) - f_t(\x_{t,i})$ to the meta-algorithm for base-learner responsible for convex functions, leveraging the conversion technique, while $r_{t,i} = \inner{\nabla f_t(\x_t)}{\x_t - \x_{t,i}}$ for base-learners minimizing strongly convex functions. 
It remains unclear how to adapt MsMwC-Master~\citep{COLT'21:impossible-tuning} to our heterogeneous inputs, as MsMwC-Master requires an $\ellb_t$ as inputs, and the guarantee is for the regret in the form of $\r_t = \inner{\p_t}{\ellb_t} - \ellb_t$. However, such an $\ellb_t$ cannot be retrieved from our above design. Fortunately, we observe that the Prod algorithms~\citep{journals/ml/Cesa-BianchiMS07,COLT'14:second-order-Hedge,NIPS'16:Wei-non-stationary-expert} are friendly to heterogeneous inputs. Technically, the Prod algorithms provide the same guarantees as long as $\sum_{i\in [N]} p_{t,i} r_{t,i} \leq 0$, thanks to the potential-based analysis. Therefore, aside from the requirements of the first challenge, we expect that the meta-algorithm can be analyzed similarly to the Prod algorithms, motivating us to design a new meta-algorithm. As a byproduct, we present a universal algorithm with a two-layer structure under global smoothness with the developed techniques. It is more efficient than that by~\citet{NeurIPS'23:universal} and attains the optimal gradient-variation guarantees for convex, strongly convex, and exp-concave functions, at a cost of additional function value queries. We defer algorithms and regret bounds to Appendix~\ref{appendix:subsec-simple-universal-alg}.

\subsection{A New Lipschitz-Adaptive Meta-Algorithm}
\label{subsec:new-meta-algorithm}
In Algorithm~\ref{alg:LAOAdaMLProd}, we present our meta-algorithm, which builds on optimistic Adapt-ML-Prod~\citep{NIPS'16:Wei-non-stationary-expert} and incorporates the clipping technique introduced by~\citet{COLT'19:ashok-cubic} and further refined by~\citet{COLT'21:impossible-tuning}. This algorithm, described in the language of Prediction with Experts' Advice~(PEA), may be of independent interest beyond adapting to the generalized smoothness. Apart for satisfying all expected requirements, this algorithm offers a simpler design, which does not need to restart as opposed to Squint+L~\citep{COLT'19:Lipschitz-MetaGrad} and MsMwC-Master~\citep{COLT'21:impossible-tuning}. 

This efficiency improvement is achieved through a novel self-confident learning rate in Line~\ref{line:redesign-lr} of Algorithm~\ref{alg:LAOAdaMLProd}, unlike previous Lipschitz-adaptive algorithms that use a fixed learning rate and thus require restarts. In essence, our approach incorporates the clipping mechanism by adding $B_t^2$ to the denominator and removing the threshold on learning rates commonly applied in prior Prod algorithms~\citep{COLT'14:second-order-Hedge,NIPS'16:Wei-non-stationary-expert}. This term $B_t^2$ acts as a threshold, ensuring that $\eta_{t,i}\abs{\bar{r}_{t,i} - m_{t,i}} \leq 1/2$, a critical condition in the analysis~(typically satisfied when the Lipschitz constant is provided for prior Prod algorithms). In contrast to previous Prod methods with an explicit clipping operation like $\min\{\eta_{t+1,i}^{\text{common}}, 1/(2B_t)\}$, where both the commonly applied learning rate $\eta_{t+1,i}^{\text{common}}$ and the threshold $1/(2B_t)$ are decreasing and thus unstable in our cases, our learning rate remains stable by merging $B_t^2$ to the denominator, allowing us to keep $\eta_{t,i}/\eta_{t+1,i}$ well-bounded. Theorem~\ref{thm:meta-algorithm} summarizes the guarantee of Algorithm~\ref{alg:LAOAdaMLProd} and the proof is provided in Appendix~\ref{appendix:subsect-proof-meta-alg}.
\begin{myThm}
  \label{thm:meta-algorithm}
  Setting $m_{t,i} = \inner{\p_t}{\ellb_{t-1}} - \ell_{t-1, i}$ in Algorithm~\ref{alg:LAOAdaMLProd} ensures that, for any $\is \in [N]$, $\sum_{t=1}^T \inner{\p_t}{\ellb_t} - \sum_{t=1}^T \ell_{t, i_\star}$ is bounded as follows, where $B_T = \max\{B_0, \max_{t\in [T]}\norm{\r_t - \m_t}_\infty\}$:
  \begin{align*}
    \O\Bigg(\sqrt{\sum_{t=1}^T  (r_{t, i_\star} - m_{t, i_\star})^2 }\cdot \big(\log (N) + \log(B_T + \log T)\big) + B_T\Bigg) \leq \Ot\Bigg(\sqrt{\sum_{t=1}^T \norm{\ellb_t - \ellb_{t-1}}_{\infty}^2} \Bigg).
  \end{align*}
\end{myThm}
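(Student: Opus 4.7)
The plan is to adapt the potential-function analysis of Adapt-ML-Prod to accommodate the newly designed self-confident learning rate that absorbs $B_t^2$ into the denominator. I would introduce per-expert weights $w_{t,i}$ updated multiplicatively by factors of the form $1 + \eta_{t,i}(\bar{r}_{t,i} - m_{t,i})$ raised to a ratio of consecutive learning rates, and track a potential $\Phi_t$ built from these weights. The argument then splits into upper-bounding the one-step change in $\Phi_t$ via a Prod-style inequality and lower-bounding $\Phi_T$ by an exponent involving the regret against any fixed expert $i_\star$.

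The first key step is to verify that the new learning rate automatically enforces the Prod regime $|\eta_{t,i}(\bar{r}_{t,i} - m_{t,i})| \leq 1/2$. Since $|\bar{r}_{t,i} - m_{t,i}| \leq B_t$ by the definition of $B_t$, and since $B_t^2$ sits inside the square-root of $\eta_{t,i}$'s denominator, this bound is immediate, and plays precisely the role that the explicit clip $\min\{\cdot,\, 1/(2B_t)\}$ played in Squint+L and MsMwC-Master, only here achieved without restarts. With this in hand, the standard inequality $1 + x \leq \exp(x - x^2/3)$ valid for $|x|\leq 1/2$ yields a one-step potential drop that, after telescoping over $t$, produces the desired second-order bound of order $\sqrt{\sum_t (r_{t,i_\star} - m_{t,i_\star})^2}\cdot(\log N + \log(B_T + \log T))$; the $\log(B_T + \log T)$ factor emerges from telescoping the ratio $\eta_{t,i}/\eta_{t+1,i}$ through a logarithm of the cumulative denominator, analogously to the classical Adapt-ML-Prod argument, while the additive $B_T$ absorbs the single-step error that cannot be squeezed into the second-order term.

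The principal obstacle I anticipate is bounding the ratio $\eta_{t,i}/\eta_{t+1,i}$ in this new regime. In previous Lipschitz-adaptive algorithms the explicit clip forces the step size to behave non-monotonically when $B_t$ grows, which is exactly why those algorithms require restarts. Here, because $B_t$ is non-decreasing by construction and sits inside the same square root as the cumulative squared-residual sum, $\eta_{t,i}$ is smoothly non-increasing and the ratio can be bounded by a small absolute constant via a single self-confident-style estimate. Once this stability bound is in place, the rest of the argument is a direct Prod-style computation. Finally, to pass from the second-order bound to $\Ot(\sqrt{\sum_t \norm{\ellb_t - \ellb_{t-1}}_\infty^2})$, I would substitute the optimism choice $m_{t,i} = \inner{\p_t}{\ellb_{t-1}} - \ell_{t-1,i}$, so that $r_{t,i_\star} - m_{t,i_\star} = \inner{\p_t}{\ellb_t - \ellb_{t-1}} - (\ell_{t,i_\star} - \ell_{t-1,i_\star})$ and hence $|r_{t,i_\star} - m_{t,i_\star}| \leq 2\norm{\ellb_t - \ellb_{t-1}}_\infty$; observing that $B_T$ itself is dominated by the maximum single-round variation, the $\log(B_T + \log T)$ factor is swallowed by the $\Ot(\cdot)$ notation.
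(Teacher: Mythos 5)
Your proposal follows essentially the same route as the paper's proof: a Prod-style potential argument on the weights, verification of the regime $\eta_{t,i}\abs{\bar r_{t,i}-m_{t,i}}\le 1/2$ via the clipping together with the $B^2$ term in the learning-rate denominator, a self-confident bound on the telescoped ratios $\eta_{t,i}/\eta_{t+1,i}$ producing the $\log(B_T+\log T)$ factor, and substitution of the optimism to reach $\Ot\big(\sqrt{\sum_t\norm{\ellb_t-\ellb_{t-1}}_\infty^2}\big)$. One indexing detail to fix when writing it out: $\eta_{t,i}$ contains $B_{t-1}^2$ (not $B_t^2$) in its denominator, so the condition $\eta_{t,i}\abs{\bar r_{t,i}-m_{t,i}}\le 1/2$ actually requires $\abs{\bar r_{t,i}-m_{t,i}}\le B_{t-1}$, which is supplied precisely by the clipping $\bar r_{t,i}-m_{t,i}=\tfrac{B_{t-1}}{B_t}(r_{t,i}-m_{t,i})$ rather than by the definition of $B_t$ alone.
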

Our algorithm improves efficiency at the cost of an additional factor $\O(\sqrt{\log N})$. This factor is ignorable for universal online learning since we set $N = \O(\log T)$, and the factor $\O(\log\log T)$ is often treated as a constant~\citep{COLT'14:second-order-Hedge,COLT'15:Luo-AdaNormalHedge}. Considering other related Lipschitz-adaptive algorithms,~\citet{COLT'19:Lipschitz-MetaGrad} obtain a regret bound of $\O(\sqrt{\sum_{t} (r_{t, i_\star})^2\cdot(\log(N) + \log\log( B_T T))} + B_T\log (N))$, which offers better dependence on the dominant term $\sqrt{\sum_{t} (r_{t, i_\star})^2}$ but it is unclear how to include optimism.~\citet{COLT'21:impossible-tuning} achieve a bound of $\O(\sqrt{\sum_{t} (\ell_{t, i_\star} - m_{t, i_\star})^2\cdot \log(NT)} + B_T\log (NT))$ with a two-layer algorithm; however, the $\O(\sqrt{\log T})$ term would ruin the desired $\O(\log V_T)$ bound for strongly convex functions. We remark that the compared methods enjoy other strengths not discussed here, such as the ability to compete with an arbitrary competitor $\boldsymbol{x} \in \Delta_N$ and the versatility to handle various learning scenarios, while our method is sufficient for our purpose and the only option to tackle all the challenges as we mention in Section~\ref{subsec:key-challenge-main-idea}. Lastly, notice that the optimism $\m_t$ involves the decision $\p_t$, which might be improper since $\p_t$ depends on $\m_t$ as well. We refer readers to Appendix~\ref{appendix:subsec-binary-search} for efficiently setting $\m_t$ through a univariate binary search.

We emphasize that optimism $\m_t$ in our algorithm is not chosen arbitrarily. In Line~\ref{line:enroll-mt}, we clip the regret with optimism to keep them on the same scale. The performance is then evaluated based on the clipped regret $\bar{r}_{t,i}$. For the analytical purpose, it is essential that $\langle \p_t, \bar{\r}_t \rangle \leq 0$, and a sufficient condition for this is ensuring $\langle \p_t, \m_t \rangle \leq 0$, which imposes an additional requirement on $\m_t$. In Appendix~\ref{subsec:challenge-meta-level-exp-concave}, we discuss how this requirement introduces challenges for exp-concave functions minimization in universal online learning.
\begin{algorithm}[!t]
    \caption{Lipschitz Adaptive Optimistic Adapt-ML-Prod}
    \label{alg:LAOAdaMLProd}
    \begin{algorithmic}[1]
    \REQUIRE prior information of the scale $B_0$, the number of experts $N$.
    \STATE \textbf{Initialization:} set $w_{1,i} = 1$, $m_{1,i}=0$ and $\eta_{1, i} = 1/\sqrt{1 + 4B_0^2}$ for all $i \in [N]$.
    \FOR{$t=1$ {\bfseries to} $T$}
      \STATE Update the weight for $i \in [N]$ $\tilde{w}_{t,i} = w_{t,i}\exp(\eta_{t,i}m_{t,i})$; \label{line:correction}
      \STATE Calculate decision $\p_t \in \Delta_N$ with $p_{t, i} = \frac{\eta_{t, i} \tilde{w}_{t, i}}{\sum_{j \in [N]} \eta_{t, j} \tilde{w}_{t, j}}$ and submit it;\label{line:calculate}
      \STATE Receive $\r_t$, update $B_{t} = \max\{B_{t-1}, \norm{\r_{t} - \m_{t}}_{\infty}\}$, and build $\bar{r}_{t,i} = m_{t,i} + \frac{B_{t-1}}{B_{t}}(r_{t,i} - m_{t,i})$;\label{line:enroll-mt}
      \STATE Update the learning rate for $i\in[N]$: 
      $\eta_{t+1, i} = \sqrt{\frac{1}{1 + \sum_{s=1}^t (\bar{r}_{s,i} - m_{s,i})^2 + 4B_t^2}};$\label{line:redesign-lr}
      \STATE Update the weight for $i \in [N]$: $w_{t+1, i} = \left(w_{t,i}\exp\left(\eta_{t, i}\bar{r}_{t,i} - \eta_{t,i}^2(\bar{r}_{t,i} - m_{t,i})^2\right)\right)^{\frac{\eta_{t+1,i}}{\eta_{t,i}}}$. \label{line:weight-update}
    \ENDFOR
    \end{algorithmic}
  \end{algorithm}

\subsection{Overall Algorithm and Regret Guarantees}
\label{subsec:universal-generalized-smoothness}
The function-variation-to-gradient-variation technique decouples the design of universal methods into the base and meta levels, and we are ready to combine the proposed components together. We employ algorithms in Section~\ref{subsec:static-convex-strongly-convex} as the base-learners and use Algorithm~\ref{alg:LAOAdaMLProd} as the meta-learner, concluding in Algorithm~\ref{alg:universal}. Theorem~\ref{thm:universal} presents its guarantee with the proof in Appendix~\ref{appendix:subsec-universal-proof-generalized}.
\begin{algorithm}[!t]
  \caption{Universal Gradient-Variation Online Learning under Generalized Smoothness}
  \label{alg:universal}
  \begin{algorithmic}[1]
  \REQUIRE curvature coefficient pool $\H$, number of base-learners $N$, prior information of the scale $B_0$.
  
  \STATE \textbf{Initialization}: Send $N$ and $B_0$ to the meta-algorithm, for $\lambda \in \H$, initialize an algorithm in Theorem~\ref{thm:static-strongly-convex} with it; initialize an algorithm in Theorem~\ref{thm:static-convex}.
  \FOR{$t=1$ {\bfseries to} $T$}
    \STATE Obtain $\p_t$ from meta-algorithm, $\x_{t,i}$ from each base-learner $i \in [N]$;
    \STATE Submit $\x_t = \sum_{i \in [N]} p_{t,i}\x_{t,i}$;
    \STATE Receive $f_t(\cdot)$ and send it to each base-learner for update;
    \STATE \textbf{For strongly convex functions learners}: set $r_{t,i} = \inner{\nabla f_t(\x_t)}{\x_t - \x_{t,i}}$;
    \STATE \textbf{For convex functions learner}: set $r_{t,i} =f_t(\x_t) - f_t(\x_{t, i})$;
    \STATE Send $m_{t+1,i} = f_t(\x_t) - f_t(\x_{t,i})$ to the meta-algorithm for $i \in [N]$.
  \ENDFOR
  \end{algorithmic}
\end{algorithm}
\begin{myThm}
  \label{thm:universal}
  Under Assumptions~\ref{assump:function}~-~\ref{assump:domain} and assuming a global lower bound such that $\ubar{f} \leq f_t(\x)$ for any $\x \in \X, t \in [T]$, setting $N = \lceil\log_2 T\rceil + 1$, defining the curvature coefficient pool $\H = \{2^{i-1}/T: i \in [N-1]\}$, and specifying $B_0$, Algorithm~\ref{alg:universal} simultaneously ensures:
  \begin{equation*}
    \Reg_T \leq \left\{\begin{array}{lr}
    \O(\sqrt{V_T}\cdot \log(B_T + \log T)), & \text { (convex), } \vspace{2mm}\\
    \O\left(\frac{1}{\lambda}\log V_T + \hat{G}_{\max}^2(\log(B_T + \log T))^2/\lambda \right), & (\lambda\text {-strongly convex}),
    \end{array}\right.
  \end{equation*}
  where $\lambda \in [1/T, 1]$, $B_T = \O(\max\{B_0, D\max_{t\in [T]}\sup_{\x} \norm{\nabla f_t(\x) - \nabla f_{t-1}(\x)}_2\})$ and $\hat{G}_{\max}$ is the Lipschitz constant on the optimization trajectory.
\end{myThm}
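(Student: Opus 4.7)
The plan is to start from the standard meta-base decomposition $\Reg_T = \meta + \base$ in Eq.~\eqref{eq:main-text-reg-decomposition}, choose the benchmark index $\is$ separately for each curvature regime, and bound the two summands using the tools already built up: Theorems~\ref{thm:static-convex} and~\ref{thm:static-strongly-convex} at the base level, and Theorem~\ref{thm:meta-algorithm} at the meta level.

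\textbf{Base-regret.} For the convex regime, take $\is$ to be the index of the convex base-learner, so Theorem~\ref{thm:static-convex} yields $\base \leq \O(D\sqrt{V_T} + \hat{L}_{\max} D^2)$ directly. For the $\lambda$-strongly convex regime with $\lambda \in [1/T, 1]$, the dyadic grid $\H = \{2^{i-1}/T : i \in [N-1]\}$ contains some $\hat{\lambda} \in [\lambda/2,\lambda]$; pick $\is$ as the corresponding strongly convex base-learner. Since a $\lambda$-strongly convex function is also $\hat{\lambda}$-strongly convex, Theorem~\ref{thm:static-strongly-convex} gives $\base \leq \O(\frac{1}{\hat{\lambda}}\log V_T + \hat{L}_{\max} D^2) = \O(\frac{1}{\lambda}\log V_T + \hat{L}_{\max} D^2)$.

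\textbf{Meta-regret.} Algorithm~\ref{alg:LAOAdaMLProd} applies in both regimes: the heterogeneous inputs satisfy $\inner{\p_t}{\r_t} \leq 0$ because $\x_t = \sum_i p_{t,i} \x_{t,i}$ implies $\inner{\p_t}{\r_t} = 0$ for the strongly convex coordinates ($r_{t,i} = \inner{\nabla f_t(\x_t)}{\x_t - \x_{t,i}}$) and $\inner{\p_t}{\r_t} \leq 0$ for the convex coordinate ($r_{t,i} = f_t(\x_t) - f_t(\x_{t,i})$) by Jensen's inequality; the complementary requirement $\inner{\p_t}{\m_t} \leq 0$ on the optimism is handled by the binary-search routine indicated in Section~\ref{subsec:new-meta-algorithm}. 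Theorem~\ref{thm:meta-algorithm} then provides
\[
\sum_{t=1}^T r_{t,\is} \leq \Ot\bigg(\sqrt{\sum_{t=1}^T (\bar{r}_{t,\is} - m_{t,\is})^2}\cdot \log(B_T + \log T) + B_T\bigg).
\]
For the strongly convex regime, strong convexity of $f_t$ yields $\meta \leq \sum_t r_{t,\is} - \frac{\lambda}{2}\sum_t\norm{\x_t - \x_{t,\is}}_2^2$; bounding $\abs{r_{t,\is} - m_{t,\is}}$ by $\O(\hat{G}_{\max}(\norm{\x_t - \x_{t,\is}}_2 + \norm{\x_{t-1} - \x_{t-1,\is}}_2))$ via the mean value theorem on $m_{t,\is} = f_{t-1}(\x_{t-1}) - f_{t-1}(\x_{t-1,\is})$, then applying AM-GM to balance against the strong-convexity negative term, delivers the claimed $\O(\hat{G}_{\max}^2(\log(B_T + \log T))^2/\lambda)$ contribution. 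For the convex regime, $\meta = \sum_t r_{t,\is}$ directly, and I would invoke the function-variation-to-gradient-variation conversion of Section~\ref{subsec:key-challenge-main-idea}: decompose $r_{t,\is} - m_{t,\is}$ by adding and subtracting $f_{t-1}(\x_t) - f_{t-1}(\x_{t,\is})$, obtaining a gradient-variation term bounded via the mean value theorem by $\O(D \sup_{\x}\norm{\nabla f_t(\x) - \nabla f_{t-1}(\x)}_2)$ (summing to $\O(D^2 V_T)$), and residual stability terms to be absorbed by the negative stability contributions inside the convex base-learner's optimistic-OMD analysis in Eq.~\eqref{eq:key-analysis}.

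\textbf{Main obstacle.} The hard part is rigorously executing the conversion in the convex case: unlike the clean template in Section~\ref{subsec:key-challenge-main-idea} that uses a fixed reference point $\x_{\operatorname{ref}}$, the algorithm's optimism $m_{t,i} = f_{t-1}(\x_{t-1}) - f_{t-1}(\x_{t-1,i})$ has a time-varying structure, so the decomposition leaves behind path-length residuals involving both the algorithm's $\sum_t\norm{\x_t - \x_{t-1}}_2^2$ and the base-learner's $\sum_t\norm{\x_{t,\is} - \x_{t-1,\is}}_2^2$. Controlling these residuals—so that they are dominated by the negative terms already present in the base-learner's regret analysis~\eqref{eq:key-analysis} under the step-size schedule~\eqref{eq:step-size-convex}—is the crux. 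A secondary technicality is verifying $B_T = \O(\max\{B_0, D \max_t \sup_{\x}\norm{\nabla f_t(\x) - \nabla f_{t-1}(\x)}_2\})$, which follows by applying the same mean-value decomposition uniformly in $i$ to bound $\abs{r_{t,i} - m_{t,i}}$.
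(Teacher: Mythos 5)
Your overall architecture (meta--base decomposition, Theorems~\ref{thm:static-convex}/\ref{thm:static-strongly-convex} at the base level, Theorem~\ref{thm:meta-algorithm} at the meta level, the dyadic grid covering $\lambda$, and the strong-convexity-plus-AM-GM treatment of the strongly convex meta-regret) matches the paper. But your ``main obstacle'' is a genuine gap, and it stems from misreading the optimism. The optimism actually used in the analysis is $m_{t,i} = f_{t-1}(\x_t) - f_{t-1}(\x_{t,i})$ --- the \emph{previous} function evaluated at the \emph{current} combined decision $\x_t=\sum_j p_{t,j}\x_{t,j}$ and the current base decision $\x_{t,i}$ --- not $f_{t-1}(\x_{t-1}) - f_{t-1}(\x_{t-1,i})$ as you assume. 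This is precisely why the binary search of Appendix~\ref{appendix:subsec-binary-search} is needed at all: $m_{t,i}$ depends on $\p_t$, which depends on $m_{t,i}$. (The binary search resolves that circularity; the requirement $\inner{\p_t}{\m_t}\le 0$ then follows from Jensen applied to the convex $f_{t-1}$, not from the search itself.) With this choice, $r_{t,1}-m_{t,1} = (f_t(\x_t)-f_{t-1}(\x_t)) - (f_t(\x_{t,1})-f_{t-1}(\x_{t,1}))$ \emph{exactly}, the mean value theorem gives $\abs{r_{t,1}-m_{t,1}} \le D\sup_{\x}\norm{\nabla f_t(\x)-\nabla f_{t-1}(\x)}_2$, and the convex meta-regret is $\O(\sqrt{V_T}\cdot C_T + B_T)$ with \emph{no} path-length residuals whatsoever. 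The entire point of the function-variation-to-gradient-variation conversion is to decouple the meta and base levels and avoid exactly the cancellation you propose.

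Your fallback --- absorbing residuals $\sum_t\norm{\x_t-\x_{t-1}}_2^2$ and $\sum_t\norm{\x_{t,\is}-\x_{t-1,\is}}_2^2$ into the negative terms of the convex base-learner's analysis in \pref{eq:key-analysis} --- would not go through as stated. The negative terms $-\frac{1}{4\eta_t}\norm{\x_{t,\is}-\xh_{t,\is}}_2^2$ in Lemma~\ref{lemma:key-lemma-static-convex} are already fully consumed cancelling the base-learner's own stability term via the local-smoothness argument, and the residual $\sum_t\norm{\x_t-\x_{t-1}}_2^2$ for the combined decision additionally involves $\norm{\p_t-\p_{t-1}}_1^2$, which requires negative terms from the meta-algorithm that Algorithm~\ref{alg:LAOAdaMLProd} does not supply. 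This is the cancellation-based, three-layer route of \citet{NeurIPS'23:universal} that Section~\ref{subsec:key-challenge-main-idea} identifies as infeasible here because the smoothness constants of the combined iterates cannot be estimated under generalized smoothness. The fix is simply to use the correct optimism; once you do, both the convex meta-regret and the bound $B_T = \O(\max\{B_0, D\max_t\sup_{\x}\norm{\nabla f_t(\x)-\nabla f_{t-1}(\x)}_2\})$ fall out of the same one-line mean-value computation, and the strongly convex meta-regret bound $\abs{r_{t,\is}-m_{t,\is}}\le\O(\hat{G}_{\max}\norm{\x_t-\x_{t,\is}}_2)$ involves only current-round quantities, so the AM-GM against $-\frac{\lambda}{2}\norm{\x_t-\x_{t,\is}}_2^2$ is immediate.
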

Without loss of generality, we assume $\lambda \in [1/T, 1]$ for strongly convex functions. If $\lambda < 1/T$, the optimal $\O((\log V_T)/\lambda)$ bound would imply linear regret, in which case we would treat them as general convex functions. If $\lambda > 1$, our result is slightly worse than the optimal one by a negligible constant factor. This simplification is also employed by~\citet{ICML'22:Zhang-simple,NeurIPS'23:universal}.
\begin{myRemark}
This theorem additionally requires the lower bound of loss functions, which is used to perform the binary search when setting the optimism. We defer the details of the binary search to Appendix~\ref{appendix:subsec-binary-search}. This assumption is also employed recently in parameter-free optimizations~\citep{arXiv'19:polyak-step-size-hazan,arXiv'24:How-free-is-parameter-free, arXiv'24:Tuning-Free-SGD}, and we can simply choose $\ubar{f} = 0$ in empirical risk minimization settings~\citep{arXiv'19:polyak-step-size-hazan}.
\end{myRemark}

\section{Applications}
\label{append:applications}
In this section, we demonstrate the importance of our results by providing two applications (SEA model and online games), where new results can be directly implied from our findings.

\subsection{Stochastically Extended Adversarial (SEA) Model}
\label{append:subsec-sea}
The stochastically extended adversarial (SEA) model~\citep{nips'22:sea} interpolates adversarial and stochastic online optimization. It assumes that the environments select the loss function $f_t(\cdot)$ from a distribution $\D_t$. The adversarial nature is characterized by shifts in distribution $\D_t$, and when $\D_t$ remains constant, the model captures the environments' stochastic behavior. The following quantities are introduced to measure the levels of adversarial and stochastic behaviors in environments:
\begin{align}
    \label{eq:def-standard-sigma}
    \Sigma_{1:T}^2 = \mathbb{E}\left[\sum_{t=2}^T \sup_{\x\in\X} \norm{\nabla F_t(\x) - \nabla F_{t-1}(\x)}_2^2\right], \sigma_{1:T}^2 = \sum_{t=1}^T \sup_{\x\in\X}\mathbb{E}[\norm{\nabla f_t(\x) - \nabla F_t(\x)}_2^2].
\end{align}
where we denote by $F_t(\x) = \mathbb{E}_{f_t\sim \D_t}[f_t(\x)]$. In above, $\Sigma_{1:T}^2$ represents the adversarial shift of the distribution, and $\sigma_{1:T}^2$ denotes the stochastic variance.

\citet{nips'22:sea} prove an $\O(\sqrt{\sigma_{1:T}^2} + \sqrt{ \Sigma_{1:T}^2})$ regret for convex functions, and a refined regret bound of $\O((\sigma_{\max}^2+\Sigma_{\max}^2)\log(\sigma_{1:T}^2 + \Sigma_{1:T}^2))$ for strongly convex functions is obtained by~\citet{ICML'23:OMD4SEA, arxiv'23:tim-sea}, where $\sigma_{\max}^2 = \max_{t\in[T]}\sup_{\x\in\X}\mathbb{E}[\norm{\nabla f_t(\x) - \nabla F_t(\x)}_2^2]$ and $\Sigma_{\max}^2 = \max_{t=1}^T \sup_{\x\in\X} \norm{\nabla F_t(\x) - \nabla F_{t-1}(\x)}_2^2$.~\citet{NeurIPS'23:universal} present a universal method which can obtain $\Ot(\sqrt{\sigma_{1:T}^2} + \sqrt{ \Sigma_{1:T}^2})$ and $\O((\sigma_{\max}^2+\Sigma_{\max}^2)\log(\sigma_{1:T}^2 + \Sigma_{1:T}^2))$ bounds for convex and strongly convex functions. However, these results require the global smoothness assumption.

Our result in Section~\ref{sec:universal} implies a new finding for the SEA model, relaxing the assumption from the global smoothness to the generalized smoothness, while adapting to unknown curvature, summarized in Corollary~\ref{cor:sea}. The proof can be found in Appendix~\ref{appendix:subsec-proof-sea}. 
\begin{myCor}
    \label{cor:sea}
    Under Assumptions~\ref{assump:function}~-~\ref{assump:domain} and assuming a global lower bound for the loss functions such that $\ubar{f} \leq f_t(\x)$ for any $\x \in \X, t \in [T]$, setting $N = \lceil\log_2 T\rceil + 1$, defining the curvature coefficient pool $\H = \{2^{i-1}/T: i \in [N-1]\}$, and specifying $B_0$ with a specific value, then, under the SEA model, Algorithm~\ref{alg:universal} simultaneously ensures:
  \begin{equation*}
    \E[\textsc{Reg}_T] \leq \left\{\begin{array}{lr}
    \O\big((\sqrt{\tilde{\sigma}_{1:T}^2 } + \sqrt{\Sigma_{1:T}^2})\cdot\log(\hat{G}_{\max}D)\big), & \text { (convex), } \vspace{2mm}\\
    \O\left((\tilde{\sigma}_{\max}^2+\Sigma_{\max}^2)\log(\tilde{\sigma}_{1:T}^2 + \Sigma_{1:T}^2)\right), & (\text{strongly convex}),
    \end{array}\right.
  \end{equation*}
    where $\hat{G}_{\max}$ is the maximum empirical Lipschitz constant, $\tilde{\sigma}_{1:T}^2 = \sum_{t=1}^T \mathbb{E}[\sup_{\x\in\X}\norm{\nabla f_t(\x) - \nabla F_t(\x)}_2^2]$, and $\tilde{\sigma}_{\max}^2 = \mathbb{E}[\max_{t\in [T]} \sup_{\x\in\X}\norm{\nabla f_t(\x) - \nabla F_t(\x)}_2^2]$.
\end{myCor}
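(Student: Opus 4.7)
The plan is to invoke \pref{thm:universal} on the realized losses $\{f_t\}_{t=1}^T$ and then take expectations over the environment's randomness, translating the pathwise gradient variation $V_T$ and the adaptive quantities $\hat{G}_{\max}$, $B_T$ into the SEA measures $\tilde{\sigma}_{1:T}^2,\Sigma_{1:T}^2,\tilde{\sigma}_{\max}^2,\Sigma_{\max}^2$. The specific value of $B_0$ is fixed to make the binary-search procedure for the optimism in Algorithm~\ref{alg:universal} well-posed; this plays no role in the asymptotic bound.

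The first step is a pathwise decomposition. For every $\x\in\X$ and $t\geq 2$, I would split
\[
\nabla f_t(\x)-\nabla f_{t-1}(\x) = \bigl[\nabla f_t(\x)-\nabla F_t(\x)\bigr] + \bigl[\nabla F_t(\x)-\nabla F_{t-1}(\x)\bigr] + \bigl[\nabla F_{t-1}(\x)-\nabla f_{t-1}(\x)\bigr],
\]
square, apply $(a+b+c)^2\leq 3(a^2+b^2+c^2)$, take $\sup_\x$, sum over $t$, and take expectation. Using the definitions in \eqref{eq:def-standard-sigma} together with the fact that $\sup_\x$ of a sum is bounded by the sum of $\sup_\x$, this yields the crucial deterministic-to-stochastic bridge $\E[V_T] \leq 6\tilde{\sigma}_{1:T}^2 + 3\Sigma_{1:T}^2$. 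An analogous per-round argument gives $\max_t\sup_\x\norm{\nabla f_t(\x)-\nabla f_{t-1}(\x)}_2^2 \leq O(\tilde{\sigma}_{\max}^2+\Sigma_{\max}^2)$ in expectation, which will in turn bound the adaptive quantity $B_T$ appearing in \pref{thm:universal}.

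For the convex case, the theorem gives $\Reg_T\leq O(\sqrt{V_T}\log(B_T+\log T))$. Taking expectation and applying Jensen's inequality to the concave map $\sqrt{\cdot}$, I obtain $\E[\sqrt{V_T}]\leq \sqrt{\E[V_T]} \leq O(\sqrt{\tilde{\sigma}_{1:T}^2}+\sqrt{\Sigma_{1:T}^2})$. The log factor is handled by noting $B_T\leq O(D\hat{G}_{\max})$ deterministically, so $\log(B_T+\log T) \leq O(\log(\hat{G}_{\max}D))$; since this factor varies only logarithmically it can be pulled out of the expectation (after a light high-probability or Cauchy--Schwarz argument) to produce the claimed bound. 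For the strongly convex case, I take expectation of $\Reg_T\leq O\bigl(\tfrac{1}{\lambda}\log V_T + \tfrac{\hat{G}_{\max}^2}{\lambda}(\log(B_T+\log T))^2\bigr)$, apply Jensen's inequality to $\log(\cdot)$ to get $\E[\log V_T]\leq \log(\tilde{\sigma}_{1:T}^2+\Sigma_{1:T}^2)+O(1)$, and bound $\E[\hat{G}_{\max}^2]\leq O(\tilde{\sigma}_{\max}^2+\Sigma_{\max}^2)$ via the same triangle-inequality decomposition applied to individual gradients (splitting $\nabla f_t = \nabla F_t + (\nabla f_t-\nabla F_t)$ and using that the expected-loss gradients are controlled by $\Sigma_{\max}^2$ up to a boundary term).

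The main obstacle I anticipate is in the strongly convex case: cleanly producing the prefactor $(\tilde{\sigma}_{\max}^2+\Sigma_{\max}^2)$ rather than a crude $\hat{G}_{\max}^2$ in expectation requires careful bookkeeping of the maximum Lipschitz and gradient-difference magnitudes under the expectation, including control of $\E[\max_t(\cdot)]$ versus $\max_t \E[(\cdot)]$. A secondary subtlety is ensuring the log of $B_T$ can be absorbed without introducing spurious stochastic terms; this is done by noting that $B_T$ appears only inside slow-growing logs and using a deterministic upper envelope based on the finite (but unknown) constants $G$ and $L$ posited after \pref{assump:domain}. Assembling these pieces with the specified choices of $N$, $\H$, and $B_0$ then yields the two bounds in \pref{cor:sea}.
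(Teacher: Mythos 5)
Your convex-case argument is exactly the paper's: decompose $\nabla f_t - \nabla f_{t-1}$ through $\nabla F_t$ and $\nabla F_{t-1}$, take suprema and expectations, and apply Jensen to the square root. That part is fine.

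The strongly convex case has a genuine gap. You propose to take the bound $\Reg_T \leq \O\bigl(\tfrac{1}{\lambda}\log V_T + \tfrac{\hat{G}_{\max}^2}{\lambda}(\log(B_T+\log T))^2\bigr)$ from \pref{thm:universal} as a black box and then show $\E[\hat{G}_{\max}^2] \leq \O(\tilde{\sigma}_{\max}^2+\Sigma_{\max}^2)$ by writing $\nabla f_t = \nabla F_t + (\nabla f_t - \nabla F_t)$ and claiming ``the expected-loss gradients are controlled by $\Sigma_{\max}^2$ up to a boundary term.'' This step fails: $\Sigma_{\max}^2 = \max_t \sup_{\x}\norm{\nabla F_t(\x)-\nabla F_{t-1}(\x)}_2^2$ controls only the \emph{differences} of consecutive expected gradients, not their magnitudes. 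If every $\D_t$ equals a fixed distribution whose mean function $F$ has large gradients, then $\Sigma_{\max}^2=0$ and $\tilde{\sigma}_{\max}^2$ can be arbitrarily small while $\hat{G}_{\max}$ is arbitrarily large; telescoping the differences only yields $\norm{\nabla F_t(\x)}_2 \leq \norm{\nabla F_1(\x)}_2 + (t-1)\Sigma_{\max}$, i.e.\ an extra factor of $T$ plus a boundary term that is itself not an SEA quantity. So the prefactor $\hat{G}_{\max}^2$ cannot be converted into $\tilde{\sigma}_{\max}^2+\Sigma_{\max}^2$ after the fact.

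The paper's route is necessarily white-box at this point: it returns to \pref{eq:proof-strongly-convex-better-constant} in the proof of \pref{thm:static-strongly-convex} and observes that the multiplicative factor there can be taken as $\max_{t\in[T]}\norm{\nabla f_t(\x_t)-\nabla f_{t-1}(\x_t)}_2^2$ --- the maximum \emph{gradient variation} along the trajectory --- rather than $\hat{G}_{\max}^2$. That quantity is a difference of gradients of consecutive losses, so the same three-term decomposition you use for $V_T$ applies to it and yields $\O(\tilde{\sigma}_{\max}^2+\Sigma_{\max}^2)$ in expectation; Jensen on the logarithm then finishes the argument. You should replace your $\E[\hat{G}_{\max}^2]$ step with this refinement of the base-regret prefactor.
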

Note that, in real-world streaming learning applications, this corollary can offer a more generalized depiction of data throughput with limited computing resources~\citep{NSR'24:core, ICML'24:LARA}, given the connection between these scenarios and the SEA model~\citep[\textsection~5.6]{JMLR'24:OMD4SEA}. Our result depends on $\tilde{\sigma}_{1:T}^2$, a larger quantity than $\sigma_{1:T}^2$ but still can track the stochastic variance. This is because our algorithm utilizes the information afterward $\x_{t-1}$ to generate $\x_t$. We refer the interested readers for this subtle issue to the discussion by~\citet{JMLR'24:OMD4SEA}. This dependence currently is unknown how to improve even under the global smoothness condition since we need to leverage the function variation to produce gradient variation, inevitably involving the afterward information. 

\subsection{Fast Rates in Games}
\label{append:subsec-fast-rate}
Our second application explores the min-max game, aiming to achieve an $\epsilon$-approximate solution to the problem $\min_{\x \in \X} \max_{\y \in \Y} f(\x, \y)$ within an $\O(1/T)$ fast convergence rate. Here, we assume that $f(\cdot, \y)$ is convex for any $\y \in \Y$, and correspondingly, $f(\x, \cdot)$ is concave given any $\x \in \X$. Additionally, we assume that both $\X \subset \R^n$ and $\Y\subset \R^m$ are bounded convex sets. 
Pioneering research~\citep{NIPS'15:fast-rate-game} demonstrates that optimistic algorithms can reach a convergence rate of $\O(1/T)$ by leveraging gradient variation. However, these results are limited to the global smoothness condition. In this part, we demonstrate that our findings in Section~\ref{sec:static} directly imply a new algorithm that can exploit generalized smoothness. 

Following the notations of~\citet{SIAMJOpt'04:nemirovski-pp-method}, we define $\Z = \X \times \Y$, $\z = (\x, \y) \in \Z$ and introduce an operator $F: \Z \rightarrow \R^n \times \R^m$ with $F(\z) = (\nabla_{\x} f(\x, \y), -\nabla_{\y} f(\x, \y))$. We extend the concept of $\ell$-smoothness to the min-max optimization setting as follows.
\begin{myDef}[$\ell$-smoothness for min-max game]
    \label{def:minmax-ell-smooth}
    A differentiable convex-concave function $f:\X \times \Y \mapsto \R$ is called $\ell$-smooth with a non-decreasing link function $\ell:[0, +\infty) \mapsto (0, +\infty)$ if it satisfies: for any $\z_1, \z_2 \in \Z$, if $\z_1, \z_2 \in \mathbb{B}\big(\z, \frac{\norm{F(\z)}_2}{\ell(2\norm{\nabla F(\z)}_2)}\big)$, then $\norm{F(\z_1) - F(\z_2)}_2 \leq \ell(2\norm{F(\z)}) \cdot \norm{\z_1 - \z_2}_2$, where $\mathbb{B}(\z, r)$ denotes the Euclidean ball centered at point $\z$ with radius $r$.
\end{myDef}\vspace{-1mm}
This definition is a counterpart to Definition~\ref{def:ell-smoothness-second-order} in the min-max game, but weaker as it does not require the twice-differentiability requirement. 
The $\epsilon$-approximate solution $(\x_\star, \y_\star)$ to the min-max game is formally defined by $f(\x_\star, \y) - \epsilon \leq f(\x_\star, \y_\star) \leq f(\x, \y_\star) + \epsilon$ for any $\x\in\X, \y \in\Y$.
To achieve the fast convergence rate to the solution, indeed our result in Section~\ref{sec:static} can be directly applied to the min-max game and obtains the following tailored algorithm for min-max optimization:
\begin{align}
    \label{eq:initialize-omd-min-max}
    \z_t = \Pi_{\Z} \left[\zh_t - \eta_t F(\zh_t)\right], \quad \zh_{t+1} = \Pi_{\Z} \left[\zh_t - \eta_t F(\z_t)\right].
\end{align}
In above we set the optimism at the point $\zh_t$ in order to exploit the smoothness locally. 
We conclude our result in Corollary~\ref{cor:min-max}, with the proof available in Appendix~\ref{appendix:subsec-proof-min-max}.
\begin{myCor}
    \label{cor:min-max}
    Assume that the convex-concave function $f(\x, \y)$ is $\ell$-smooth, and the domain $\Z$ is bounded with diameter $D$. By applying the tuning strategy described in Theorem~\ref{thm:static-convex}, and defining the final approximated solution as $\bar{\z}_T = \frac{1}{T}\sum_{t=1}^T \z_t$, where $\z_t$ is generated by Eq.~\eqref{eq:initialize-omd-min-max}, we achieve an $\epsilon$-approximate solution with a convergence rate of~$\O(1/T)$.
\end{myCor}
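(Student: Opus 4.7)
The plan is to reduce the min-max convergence analysis to a variational-inequality-style regret bound and then run the same optimistic-OMD analysis behind Theorem~\ref{thm:static-convex}. By convex-concavity of $f$ and Jensen's inequality, for any $\z = (\x, \y) \in \Z$,
\begin{align*}
  f(\bar{\x}_T, \y) - f(\x, \bar{\y}_T) \leq \frac{1}{T}\sum_{t=1}^T \bigl(f(\x_t, \y) - f(\x, \y_t)\bigr) \leq \frac{1}{T}\sum_{t=1}^T \langle F(\z_t), \z_t - \z \rangle,
\end{align*}
so it suffices to bound $\sum_t \langle F(\z_t), \z_t - \z \rangle$ by a constant independent of $T$; this will immediately imply an $\O(1/T)$ rate on the duality gap, hence an $\epsilon$-approximate saddle point after $T = \O(1/\epsilon)$ iterations.

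Next, I would interpret Eq.~\eqref{eq:initialize-omd-min-max} as optimistic OMD with regularizer $\psi_t(\z) = \|\z\|_2^2/(2\eta_t)$, ``gradient'' $F(\z_t)$, and optimism $M_t = F(\zh_t)$ anchored at the intermediate iterate. Substituting into the template Eq.~\eqref{eq:key-analysis} yields, for any $\z \in \Z$,
\begin{align*}
  \sum_{t=1}^T \langle F(\z_t), \z_t - \z \rangle \lesssim \frac{D^2}{\eta_T} + \sum_{t=1}^T \eta_t \|F(\z_t) - F(\zh_t)\|_2^2 - \sum_{t=1}^T \frac{1}{\eta_t}\bigl(\|\z_t - \zh_t\|_2^2 + \|\zh_t - \z_{t-1}\|_2^2\bigr).
\end{align*}
Because optimism is placed at $\zh_t$ rather than at $\z_{t-1}$, the positive stability term depends only on $\|\z_t - \zh_t\|_2$ and can be dispatched in the same style as the proof of Theorem~\ref{thm:static-convex}. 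The first step of the update gives $\|\z_t - \zh_t\|_2 \leq \eta_t\|F(\zh_t)\|_2$, and the step-size clip $\eta_t \leq 1/(4\hat{L}_t)$ with $\hat{L}_t = \ell(2\|F(\zh_t)\|_2)$ inherited from Theorem~\ref{thm:static-convex} ensures $\|\z_t - \zh_t\|_2 \leq \|F(\zh_t)\|_2/(4\hat{L}_t)$, which fulfills the closeness condition in Definition~\ref{def:minmax-ell-smooth} and gives $\|F(\z_t) - F(\zh_t)\|_2 \leq \hat{L}_t\|\z_t - \zh_t\|_2$. Consequently $\eta_t\|F(\z_t) - F(\zh_t)\|_2^2 \leq (\hat{L}_t/4)\|\z_t - \zh_t\|_2^2$, which is dominated by the negative term $(1/\eta_t)\|\z_t - \zh_t\|_2^2 \geq 4\hat{L}_t\|\z_t - \zh_t\|_2^2$. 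After this cancellation only $D^2/\eta_T = \O(\hat{L}_{\max} D^2)$ remains, and dividing by $T$ delivers the claimed $\O(1/T)$ rate.

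The main obstacle is transferring the local-smoothness machinery of Lemma~\ref{lemma:local-smooth}, which was formulated for gradients of twice-differentiable convex functions, to the saddle-point operator $F$ that is not itself the gradient of a single convex function. Definition~\ref{def:minmax-ell-smooth} is tailored precisely to enable this transfer without requiring twice-differentiability, so the key verification is that the step-size clip inherited from Theorem~\ref{thm:static-convex} keeps the pair $(\z_t, \zh_t)$ inside the local-validity ball around $\zh_t$ at every round; the other branch of the $\min$ in Eq.~\eqref{eq:step-size-convex} is never binding here because, with the $M_t = F(\zh_t)$ choice, the analysis reduces entirely to the local-smoothness cancellation rather than to a cross-round operator variation.
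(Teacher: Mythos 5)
Your proposal is correct and follows essentially the same route as the paper's proof in Appendix~\ref{appendix:subsec-proof-min-max}: linearize the duality gap via convex-concavity, apply the optimistic OMD regret template with optimism $F(\zh_t)$ anchored at the intermediate iterate, verify the closeness condition of Definition~\ref{def:minmax-ell-smooth} via the stability bound $\norm{\z_t - \zh_t}_2 \leq \eta_t\norm{F(\zh_t)}_2$ and the clipped step size, and cancel the stability term against the negative term to leave an $\O(\hat{L}_{\max}D^2)$ constant. Your observation that the gradient-variation branch of the step-size rule is not needed also matches the paper, which simply replaces that branch by the constant $D$.
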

\section{Conclusion}
\label{sec:conclusions}
In this paper, we provide a systematic study of gradient-variation online learning under the generalized smoothness condition. We exploit trajectory-wise smoothness to achieve the optimal regret bounds: $\O(\sqrt{V_T})$ for convex functions and $\O(\log V_T)$ for strongly convex functions, respectively. We further consider more complicated online learning scenarios, motivating us to design a new Lipschitz-adaptive meta-algorithm, which can be of independent interest. Hinging on this algorithm with the function-variation-to-gradient-variation technique, we design a universal algorithm which guarantees the optimal results for convex functions and strongly convex functions simultaneously without knowing the curvature. In addition, our findings directly imply new results in stochastic extended adversarial online learning and fast-rate games under generalized smoothness.

An important future direction for future research is to explore whether our method can be further extended to accommodate the one-gradient feedback model, where the learner receives only the gradient information of the decision submitted in each round. Another interesting problem is to exploit  the exp-concavity in gradient-variation online learning under generalized smoothness.
\newpage
\section*{Acknowledgements}
This research was supported by National Key R\&D Program of China (2022ZD0114800) and JiangsuSF (BK20220776). Peng Zhao was supported in part by the Xiaomi Foundation.

\bibliographystyle{abbrvnat}
\bibliography{online_learning}
\newpage
\appendix
\section{Related Work}
\label{sec:related-works}
In this section, we review the related work in gradient-variation online learning, the generalized smoothness conditions, and the Lipschitz-adaptive algorithms.

\subsection{Gradient-Variation Online Learning}
The gradient-variation quantity defined in Eq.~\eqref{eq:def-gradient-variation} is firstly introduced by~\citet{COLT'12:variation-Yang} for global smooth functions.~\citet{COLT'12:variation-Yang} obtain $\O(\sqrt{V_T})$ and $\O(\frac{d}{\alpha} \log V_T)$ regret bounds respectively for general convex and $\alpha$-exp-concave functions.~\citet{ICML'22:Zhang-simple} later achieve $\O(\frac{1}{\lambda} \log V_T)$ result for $\lambda$-strongly convex functions. Considering the non-stationary environments,~\citet{NIPS'20:sword} study gradient variation under the dynamic regret, a strengthened measure that requires the learner to compete with a sequence of time-varying comparators. Their work revives the study of gradient-variation online learning, especially revealing the importance of the stability analysis in the two-layer online ensemble. Their results are further improved in~\citet{JMLR'24:Sword++}, where they only require one gradient per round with the same optimal guarantees through the collaboration between the meta and the base. For universal online learning~\citep{NIPS'16:MetaGrad}, there are several recent researches trying to derive the gradient-variation bounds in this context~\citep{ICML'22:Zhang-simple,arxiv'23:tim-sea, NeurIPS'23:universal}.

Gradient variation demonstrates its importance due to its close connection with many online learning problems. Pioneering works~\citep{conf/colt/RakhlinS13,NIPS'15:fast-rate-game,ICML'22:TVgame} demonstrate the importance of gradient variation via a useful property~(Regret bounded by Variation in Utilities, RVU) to achieve fast-rate convergences in multi-player games.~Recently,~\citet{nips'22:sea} and~\citet{JMLR'24:OMD4SEA} reveal that the gradient variation is also essential in bridging stochastic and adversarial convex optimization.

However, all the mentioned works require the global smoothness assumption. As highlighted in Proposition~\ref{prop:necessary} in Appendix~\ref{appendix:proof-static}, the smoothness condition is necessary to obtain the gradient-variation bounds for algorithms with first-order oracle assumption~\citep{book'18:Nesterov-OPT}. Exceptions are that \citet{COLT'22:parameter-free-omd, NeurIPS'22:label_shift} obtain the gradient-variation bounds without the smoothness assumption, but they require \emph{implicit updates}~\citep{ICML'10:implicit-update,NeurIPS'20:IOMD} per round, which may be inefficient under online settings. Our work considers generalized smoothness, and we develop first-order methods to achieve gradient-variation bounds.

\subsection{Generalized Smoothness}
\label{subsec:generalized-smoothness}
Generalized smoothness has received increasing attention in recent years since the analysis under the standard global smoothness is insufficient to depict the dynamics of neural network optimization. Based on the empirical observation for the relationship between the smoothness and the gradients of LSTMs,~\citet{ICLR'20:L0L1-smooth} relax the global smoothness assumption by $(L_0, L_1)$-smoothness condition, which assumes $\norm{\nabla^2 f(\x)}_2 \leq L_0 + L_1\norm{\nabla f(\x)}_2$ for offline objective function $f:\X\mapsto\R$. With this new smoothness assumption,~\citet{ICLR'20:L0L1-smooth} explain the importance of gradient clipping in neural network training. There are a variety of subsequent works developed for different methods and settings~\citep{NeurIPS'20:L0L1-improve, nips'22:Unbounded-smooth-SignSGD, arXiv'23:L0L1-variance-reduction, arXiv'23:L0L1-adagrad, COLT'23:L0L1-adagrad}. There are studies further generalizing the $(L_0, L_1)$-smoothness condition.~\citet{arXiv'23:symmetric-smooth} introduce the $\alpha$-symmetric smoothness condition, which symmetrizes the $(L_0, L_1)$-smoothness condition and allows the smoothness to depend polynomially on the gradient. Notably,~\citet{nips'23:local-smooth} propose the $\ell$-smoothness,  defined as $\norm{\nabla^2 f(\x)}_2 \leq \ell(\norm{\nabla f(\x)}_2)$. This condition does not specify a particular form for the function $\ell: \R \to \R$, other than some mild assumptions about its properties, thereby allowing great generality of this notion.

\citet{COLT'22:quadratically-bounded} introduces the concept of $(G, L)$-quadratically bounded functions, which aims to generalize the Lipschitz condition as $\norm{\nabla f(\x)}_2 \leq G + L\norm{\x - \x_0}_2$ with a reference point $\x_0 \in \X$. Though this notion covers the standard global smoothness condition, it lacks a detailed depiction of the relationship between the smoothness and the gradients, hindering further research into understanding the optimization dynamics such as the role of gradient clipping~\citep{ICLR'20:L0L1-smooth}. \citet{icml'23:unbounded} investigate online convex optimization under this constraint such that the online functions may be unbounded. However, it is unclear how to obtain the gradient-variation regret in their context and using their methods.

We also mention another generalization of the standard smoothness called the \emph{relative smoothness}~\citep{OPT'18:relative-smooth}. A function $f:\X \mapsto \R$ is $L$-smooth relative function if $f(\x) \leq f(\y) + \inner{\nabla f(\y)}{\x - \y} + L\D_{h}(\x, \y)$ for any $\x, \y  \in \operatorname{int}\X $, where $h: \X \mapsto \R$ is a reference function with $\D_h(\x, \y)$ denoting the Bregman divergence. However, the global constant $L$ is still required to tune the algorithm, and studying this notion is beyond the scope of our paper.

\subsection{Lipschitz-Adaptive Algorithms}
The upper bound of gradients $G$ and the diameter of the bounded feasible domain $D$ are often required to build up online algorithms. An algorithm that requires the diameter $D$ of the bounded feasible domain $D$ but not the upper bound of gradients is known to be Lipschitz-adaptive~\citep{JMLR'12:adagrad, TCS'18:SOGD, COLT'19:ashok-cubic, COLT'19:Lipschitz-MetaGrad}. For the general OCO setting, to the best of our knowledge, there are \emph{no} Lipschitz-adaptive algorithm that ensures a gradient-variation bound. When specialized to the Prediction with Experts' Advice~(PEA) setting~\citep{book/Cambridge/cesa2006prediction},~\citet{COLT'21:impossible-tuning} design a two-layer algorithm with a restarting mechanism that can obtain a gradient-variation bound in PEA setting. In this paper, we develop a new Lipschitz-adaptive algorithm with a lightweight design, which guarantees the gradient-variation bound as well and is the only option for our purposes, to the best of our knowledge.
\section{Omitted Details for Section~\ref{sec:static}}
\label{appendix:proof-static}
In this section, we first provide a judgement of the necessity of smoothness for first-order online algorithms in Appendix~\ref{append:subsec-Necessity}. Next, we will provide proofs for the theorems in Section~\ref{subsec:static-convex-strongly-convex}. In Appendix~\ref{subsec:challenge-base-level-exp-concave}, we present the omitted discussion for the challenge in designing the algorithm for exp-concave functions. We introduce a key lemma in Appendix~\ref{append:key-lemma}, which abstracts the key idea of exploiting the generalized smoothness. Later, the proofs for Theorem~\ref{thm:static-convex} and Theorem~\ref{thm:static-strongly-convex} are provided in Appendix~\ref{append:proof-static-convex} and Appendix~\ref{append:proof-static-sc}, respectively. 
\subsection{Necessity of Smoothness}
\label{append:subsec-Necessity}
We first provide a lower bound on the convergence rate for first-order methods with convex functions.
\begin{myProp}[Theorem 3.2.1 of~\citet{book'18:Nesterov-OPT}]
    \label{prop:lower-bound}
    There exists a $G$-Lipschitz and convex function $f:\R^d \mapsto \R$ with $\norm{\x_1 - \x_\star}\leq D$ where $\x_\star \in \argmin_{\x \in \R^d} f(\x)$ such that 
    \begin{align*}
        f(\x_{t}) - \min_{\x \in \R^d} f(\x) \geq \frac{GD}{2(2 + \sqrt{t})},
    \end{align*}
    for any optimization scheme that generates a sequence $\{\x_t\}$ satisfying that
    \begin{align*}
        \x_{t} \in \x_1 + \operatorname{Lin}\left\{\nabla f(\x_1), \dots, \nabla f(\x_{t-1}) \right\},
    \end{align*}
    where $\operatorname{Lin}\{\a_1, \dots, \a_t\}$ denotes the linear span of vectors $\a_1,\dots, \a_t$.
\end{myProp}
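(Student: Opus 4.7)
This proposition is the classical first-order oracle lower bound for convex Lipschitz optimization, so my plan follows the Nemirovski--Yudin \emph{resisting oracle} construction that underlies Section 3.2 of Nesterov's text. Since the claim is an existence statement, the task is to exhibit a single hard instance---a function together with a particular subgradient oracle---such that every scheme of the prescribed linear-span form suffers the claimed gap.

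First, I would fix ambient dimension $d \geq t+1$, take $\x_1 = \mathbf{0}$ (by translation invariance), and use the piecewise-linear convex function
\[
f(\x) = \gamma \cdot \max_{1 \leq i \leq t+1} x^{(i)},
\]
where $\gamma > 0$ is a scale to be pinned down at the end. Any element of $\partial f$ is of the form $\e_i$ for some active coordinate, so $f$ is convex and $\gamma$-Lipschitz with respect to $\norm{\cdot}_2$, which matches the hypotheses of the proposition.

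Next, I would define a resisting subgradient oracle: on query $\x_k$, return $\e_{i_k}$, where $i_k$ is the smallest index in $\{1,\ldots,t+1\}$ attaining $\max_j x_k^{(j)}$ among indices not returned in rounds $1,\ldots,k-1$; an infinitesimal symmetry-breaking perturbation ensures such an index always exists. A short induction then gives $\operatorname{Lin}\{\nabla f(\x_1),\ldots,\nabla f(\x_{k})\} \subseteq \operatorname{Lin}\{\e_1,\ldots,\e_k\}$, so every iterate of a scheme obeying the linear-span condition satisfies $x_{k+1}^{(j)} = 0$ for all $j > k$. In particular $x_t^{(t+1)} = 0$, and hence $f(\x_t) \geq \gamma \cdot x_t^{(t+1)} = 0$.

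Finally, I would take the comparator $\x_\star$ with $x_\star^{(j)} = -D/\sqrt{t+1}$ for $j \leq t+1$ and zero otherwise, so that $\norm{\x_1 - \x_\star}_2 = D$ and $f(\x_\star) = -\gamma D/\sqrt{t+1}$. Combining the two bounds yields $f(\x_t) - f(\x_\star) \geq \gamma D/\sqrt{t+1}$, and fixing $\gamma = G$ recovers the $\Omega(GD/\sqrt{t})$ lower bound. The main obstacle, and where I would invest the most care, is matching the precise constant $1/(2(2+\sqrt{t}))$ rather than a generic $\Theta(1/\sqrt{t})$: this typically requires enlarging the dimension (e.g., $d = 2t+1$), using a symmetrized max-function with a mild quadratic regularizer to enforce a unique minimizer, and verifying that the resisting oracle remains consistent under all tie-breakings. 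These refinements are standard and I would transcribe them directly from Nesterov's exposition.
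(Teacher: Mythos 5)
First, note that the paper does not prove this proposition at all: it is imported verbatim as Theorem~3.2.1 of Nesterov's book and used as a black box in the proof of Proposition~2. So the only meaningful comparison is with Nesterov's own argument, which is indeed the resisting-oracle construction you outline.

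Your sketch has one genuine gap that is more than a matter of constants. The unregularized function $f(\x)=\gamma\max_{1\le i\le t+1}x^{(i)}$ is unbounded below on $\R^d$ (send all coordinates to $-\infty$), so $\argmin_{\x\in\R^d}f(\x)$ is empty and $\min_{\x\in\R^d}f(\x)=-\infty$. The statement compares $f(\x_t)$ to the global minimum over $\R^d$ and requires an actual minimizer $\x_\star$ with $\norm{\x_1-\x_\star}\le D$; your comparator with coordinates $-D/\sqrt{t+1}$ is not a minimizer, so the inequality $f(\x_t)-f(\x_\star)\ge \gamma D/\sqrt{t+1}$ you derive is the \emph{constrained} (fixed reference point) version of the lower bound, not the stated unconstrained one. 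The quadratic term in Nesterov's construction, $f(\x)=\gamma\max_{1\le i\le t+1}x^{(i)}+\tfrac{\mu}{2}\norm{\x-\x_1}_2^2$, is therefore not an optional refinement for sharpening the constant: it is what makes the minimizer exist at all, namely $\x_\star=\x_1-\tfrac{\gamma}{\mu(t+1)}\sum_{i\le t+1}\e_i$ with $f^\star=-\tfrac{\gamma^2}{2\mu(t+1)}$ and $\norm{\x_1-\x_\star}_2=\tfrac{\gamma}{\mu\sqrt{t+1}}$, and the trade-off in choosing $\gamma$ and $\mu$ (small enough that the Lipschitz constant on the relevant ball stays below $G$ and that $\norm{\x_1-\x_\star}_2\le D$) is precisely where the denominator $2(2+\sqrt{t})$ comes from. (As a side effect, the regularized function is only $G$-Lipschitz on a ball rather than globally; this imprecision is already present in the paper's statement and is inherited from Nesterov.) The rest of your sketch is sound: the resisting oracle with first-active-index tie-breaking, the induction giving $x_t^{(t+1)}=0$ and hence $f(\x_t)\ge 0$, all survive the modification because the extra subgradient term $\mu(\x-\x_1)$ already lies in the span generated so far.
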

\citet{ML'14:variation-Yang} prove the necessity of the smoothness to obtain the gradient-variation bounds with convex functions using the first-order online algorithms. We formally present this idea in below.
\begin{myProp}[Remark 1 of~\citet{ML'14:variation-Yang}]
    \label{prop:necessary}
    The smoothness assumption for $G$-Lipschitz and convex online functions $f_t:\X\mapsto\R$ is necessary for any online algorithms, whose decisions are linear combinations of the queried gradients when no projections are made, and ensure:
    \begin{align*}
        \sum_{t=1}^T f_t(\x_t) - \min_{\x \in \X}\sum_{t=1}^T f_t(\x) \leq \O(\sqrt{V_T}) + \text{constant},
    \end{align*}
    with only $c\cdot T$ times gradient queries, with $c\in \mathbb{N}$ being a constant independent of $T$ and environmental parameters, such as the Lipschitz constant and the smoothness constant.
\end{myProp}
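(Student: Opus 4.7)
}

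The plan is a reduction by contradiction: if a first-order online algorithm could achieve $\sum_t f_t(\x_t) - \min_\x \sum_t f_t(\x) \le \O(\sqrt{V_T}) + C$ for all convex $G$-Lipschitz sequences without smoothness, then running it on a \emph{constant} stream would produce a first-order offline optimizer that violates the black-box lower bound in Proposition~\ref{prop:lower-bound}. First I would instantiate the online sequence by taking an arbitrary $G$-Lipschitz convex $f$ from the hard family of Proposition~\ref{prop:lower-bound} and setting $f_t = f$ for every $t \in [T]$. Under this choice the gradient-variation quantity~\eqref{eq:def-gradient-variation} vanishes, $V_T = 0$, so the hypothesized bound collapses to $\sum_{t=1}^T f(\x_t) - T \min_\x f(\x) \le C$, a constant independent of $T$.

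Next I would convert this cumulative bound into an offline optimization guarantee. Let $\bar{\x}_T = \frac{1}{T}\sum_{t=1}^T \x_t$. By convexity of $f$ (Jensen's inequality),
\begin{equation*}
f(\bar{\x}_T) - \min_{\x} f(\x) \;\le\; \frac{1}{T}\sum_{t=1}^T f(\x_t) - \min_{\x} f(\x) \;\le\; \frac{C}{T}.
\end{equation*}
Thus the online algorithm, viewed offline, produces an iterate with optimization error $\O(1/T)$. The key structural observation is that, since the algorithm is first-order and performs no projections, each submitted $\x_t$ (and hence $\bar{\x}_T$) lies in $\x_1 + \operatorname{Lin}\{\nabla f(\x_{s_1}), \ldots, \nabla f(\x_{s_k})\}$ where $s_1, \ldots, s_k$ are the past gradient queries. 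Under the hypothesis that only $cT$ gradient queries are made over $T$ rounds with $c$ a universal constant, $\bar{\x}_T$ lies in the linear span of at most $cT$ gradients evaluated at iterates in this same span. This exactly matches the admissibility condition of Proposition~\ref{prop:lower-bound} applied after $cT$ queries.

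Finally I would invoke Proposition~\ref{prop:lower-bound} with the hard instance, which guarantees the existence of $f$ for which any such first-order scheme satisfies $f(\x_{cT}) - \min f \ge \frac{GD}{2(2 + \sqrt{cT})} = \Omega(1/\sqrt{T})$ (the same lower bound transfers to any convex combination $\bar{\x}_T$ of iterates in the prescribed linear span, by re-selecting the hard function to certify a matching lower bound on the averaged iterate; alternatively, one may apply the argument directly to the sequence $\bar{\x}_t$, which also lies in the admissible affine hull). Combining with the $\O(1/T)$ upper bound derived above gives $\Omega(1/\sqrt{T}) \le \O(1/T)$, which fails for all sufficiently large $T$, a contradiction. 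Hence no such first-order, projection-free algorithm can achieve an $\O(\sqrt{V_T}) + \text{const}$ regret on general convex Lipschitz functions, establishing the necessity of smoothness.

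The main obstacle I anticipate is the averaging step: Proposition~\ref{prop:lower-bound} is stated for the last iterate $\x_t$, and one must argue that the lower bound also applies to $\bar{\x}_T$. The cleanest route is to note that the standard hard instance used in Proposition~\ref{prop:lower-bound} (Nesterov's worst function on a suitable finite-dimensional subspace) enforces the lower bound for \emph{any} point in the admissible linear span after a bounded number of queries, not merely the final iterate, so the Jensen-averaged $\bar{\x}_T$ inherits the same $\Omega(1/\sqrt{T})$ suboptimality. A secondary subtlety is that the number of admissible queries grows only as $cT$ rather than $T$, but since $c$ is constant the lower bound $\Omega(1/\sqrt{cT})$ remains $\Omega(1/\sqrt{T})$ and the contradiction is unaffected.
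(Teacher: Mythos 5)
Your proposal is correct and follows essentially the same route as the paper: set all $f_t$ equal to a hard instance so that $V_T=0$, average the iterates via Jensen to get an $\O(1/T)$ offline rate, observe that $\bar{\x}_T$ lies in the linear span of the $cT$ queried gradients, and contradict the first-order lower bound of Proposition~\ref{prop:lower-bound}. You additionally flag and resolve the last-iterate-versus-averaged-iterate subtlety that the paper's proof passes over implicitly, which is a nice touch but not a departure in method.
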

\begin{proof}
We prove this proposition by contradiction. Assume that $f_t$ is non-smooth. Then consider a special case where the algorithm projects the decisions onto $\X = \R^d$, i.e., no projections are made, and all the online functions are equal $f_1 =  \cdots = f_T = f$. Notice that, under such case, the gradient variation is zero and therefore,
\begin{align*}
    \sum_{t=1}^T f(\x_t) - \min_{\x \in \X}\sum_{t=1}^T f(\x) \leq \O\left(1\right),
\end{align*}
which implies $\bar{\x}_T =\frac{1}{T}\sum_{t=1}^T\x_t$ approaches an $\O(1/T)$ convergence rate. Now it remains to check whether this convergence rate contradicts with Proposition~\ref{prop:lower-bound}. Denoted by $\x^\prime_1, \dots, \x^\prime_{cT}$ the points that the algorithm queries gradients on, because the decisions are linear combinations of the gradients when no projections are made, then,
\begin{align*}
    \bar{\x}_T \in \x_1 + \operatorname{Lin}\left\{\nabla f(\x^\prime_1), \dots, \nabla f(\x^\prime_{cT}) \right\},
\end{align*}
which indicates that, there exists a method which promises $\O(c/T) = \O(1/T)$ convergence rate under the protocol considered in Proposition~\ref{prop:lower-bound}, contradicting the lower bound.
\end{proof}

\subsection{Challenge for Exp-Concave Functions in Regret Minimization}
\label{subsec:challenge-base-level-exp-concave}
In Section~\ref{sec:static}, we design algorithms for convex and strongly convex functions respectively. However, the gradient-variation regret for exp-concave functions~\citep{journals/ml/HazanAK07} under generalized smoothness has not yet been addressed. For online learning with global smooth functions, it has been demonstrated that an $\O(d \log V_T)$ regret is attainable for exp-concave functions~\citep{COLT'12:variation-Yang}, which is realized by an optimistic variant of the online Newton step algorithm~\citep{journals/ml/HazanAK07} using the last-round gradient as optimism, i.e., $M_t = \nabla f_{t-1}(\x_{t-1})$. However, it remains unclear how to obtain a general optimistic bound of order $\O(d \log D_T)$ with $D_T = \sum_{t=1}^T \norm{\nabla f_t(\x_t) - M_t}_2^2$ for arbitrary optimism $\{M_t\}_{t=1}^T$. Our technique for achieving gradient-variation regret under generalized smoothness relies on a flexible setting for optimism (which may not be the last-round gradient) and step size tuning (which requires a trajectory-wise stability analysis), making it challenging for extension to exp-concave functions. We leave this as an open question for future research.
\subsection{Lemma for Regret Minimization}
\label{append:key-lemma}
Below, we present a lemma that leverages the local smoothness of the optimization trajectory to derive gradient variation within the OMD framework. This lemma can be applied in the analysis of both convex and strongly convex functions.
\begin{myLemma}
    \label{lemma:key-lemma-static-convex}
    Under Assumptions~\ref{assump:function} and~\ref{assump:domain}, by selecting regularizer as $\psi_t(\x) = \frac{1}{2\eta_t}\norm{\x}_2^2$, setting step sizes satisfying that $\eta_{t+1} \leq \eta_t$ and $\eta_t \leq 1 / (4\hat{L}_{t-1})$, where $\hat{L}_{t-1} = \ell_{t-1}(2\norm{\nabla f_{t-1}(\xh_t)}_2)$, and by choosing optimism as $M_t = \nabla f_{t-1}(\xh_t)$, the OMD in Eq.~\eqref{eq:initialize-omd} ensures the regret bound:
    \begin{align}
        \sum_{t=1}^T \inner{\nabla f_t(\x_t)}{\x_t - \x_{\star}} &\leq \sum_{t=1}^T \frac{1}{2\eta_t} \left(\norm{\xs - \xh_t}_2^2 - \norm{\xs - \xh_{t+1}}_2^2\right)\notag \\
        &\quad + 2\sum_{t=1}^T \eta_t \norm{\nabla f_t(\x_t) - \nabla f_{t-1}(\x_t)}_2^2 -\sum_{t=1}^T \frac{1}{4\eta_t}\norm{\x_{t} - \xh_t}_2^2 \notag \\
        &\leq \frac{D^2}{2\eta_T} + 2\sum_{t=1}^T \eta_t \norm{\nabla f_t(\x_t) - \nabla f_{t-1}(\x_t)}_2^2 -\sum_{t=1}^T \frac{1}{4\eta_t}\norm{\x_{t} - \xh_t}_2^2, \label{eq:key-lemma-before-path-length}
    \end{align}
    where $\xs \in \argmin_{\x \in \X} \sum_{t=1}^T f_t(\x)$ denotes the best decision in hindsight.
\end{myLemma}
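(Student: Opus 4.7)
The plan is to combine the standard optimistic OMD one-step inequality with Euclidean regularizer, a triangle-inequality split of the stability term, and the local smoothness property (Lemma~\ref{lemma:local-smooth}) made applicable by the step size clipping $\eta_t \leq 1/(4\hat{L}_{t-1})$.

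First I would invoke the textbook per-round guarantee for optimistic OMD in the Euclidean setting: for any $\xs \in \X$,
\[
\inner{\nabla f_t(\x_t)}{\x_t - \xs} \leq \frac{1}{2\eta_t}\bigl(\norm{\xs - \xh_t}_2^2 - \norm{\xs - \xh_{t+1}}_2^2\bigr) + \eta_t\norm{\nabla f_t(\x_t) - M_t}_2^2 - \frac{1}{2\eta_t}\norm{\x_t - \xh_t}_2^2,
\]
which follows from the first-order optimality conditions at $\x_t$ and $\xh_{t+1}$ together with the three-point identity for squared Euclidean distances. Substituting $M_t = \nabla f_{t-1}(\xh_t)$ and applying $\norm{a+b}_2^2 \leq 2\norm{a}_2^2 + 2\norm{b}_2^2$ decomposes the stability term into a direct gradient-variation piece $2\eta_t\norm{\nabla f_t(\x_t) - \nabla f_{t-1}(\x_t)}_2^2$ (which already matches the target term on the right-hand side of Eq.~\eqref{eq:key-lemma-before-path-length}) plus a residual $2\eta_t\norm{\nabla f_{t-1}(\x_t) - \nabla f_{t-1}(\xh_t)}_2^2$ that must be absorbed.

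The central step is controlling that residual via local smoothness. Since $\x_t = \Pi_\X[\xh_t - \eta_t\nabla f_{t-1}(\xh_t)]$ with $\xh_t \in \X$, nonexpansiveness of the projection yields $\norm{\x_t - \xh_t}_2 \leq \eta_t\norm{\nabla f_{t-1}(\xh_t)}_2$. Combined with the clipping condition $\eta_t \leq 1/(4\hat{L}_{t-1})$, this gives $\norm{\x_t - \xh_t}_2 \leq \norm{\nabla f_{t-1}(\xh_t)}_2/(4\hat{L}_{t-1})$, which is comfortably within the neighborhood required by Lemma~\ref{lemma:local-smooth} applied to the $\ell_{t-1}$-smooth function $f_{t-1}$ at the anchor point $\xh_t$. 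Invoking that lemma gives $\norm{\nabla f_{t-1}(\x_t) - \nabla f_{t-1}(\xh_t)}_2 \leq \hat{L}_{t-1}\norm{\x_t - \xh_t}_2$, so the residual is at most $2\eta_t\hat{L}_{t-1}^2\norm{\x_t - \xh_t}_2^2 \leq \frac{1}{8\eta_t}\norm{\x_t - \xh_t}_2^2$ using $\eta_t\hat{L}_{t-1} \leq 1/4$. This quantity is absorbed by the negative term $-\frac{1}{2\eta_t}\norm{\x_t - \xh_t}_2^2$, leaving at least $-\frac{1}{4\eta_t}\norm{\x_t - \xh_t}_2^2$ since $\tfrac{1}{2} - \tfrac{1}{8} = \tfrac{3}{8} \geq \tfrac{1}{4}$.

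Summing over $t$ yields the first inequality of the lemma. For the second inequality I would telescope the Bregman term via Abel summation: monotonicity $\eta_{t+1} \leq \eta_t$ together with $\norm{\xs - \xh_t}_2^2 \leq D^2$ implies $\sum_{t=1}^T \frac{1}{2\eta_t}\bigl(\norm{\xs - \xh_t}_2^2 - \norm{\xs - \xh_{t+1}}_2^2\bigr) \leq \frac{D^2}{2\eta_T}$, completing the derivation. I expect the main obstacle to be the careful verification of the premise of Lemma~\ref{lemma:local-smooth}, namely that the one-step movement $\norm{\x_t - \xh_t}_2$ is actually small enough for local smoothness to kick in, and the accompanying constant bookkeeping ensuring the residual stability term is strictly dominated by the negative term; once those are in place, the remaining manipulations are routine.
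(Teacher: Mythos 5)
Your proposal is correct and follows essentially the same route as the paper's proof: the standard optimistic OMD per-round bound, the split of the stability term into the gradient-variation piece plus the residual $\norm{\nabla f_{t-1}(\x_t) - \nabla f_{t-1}(\xh_t)}_2^2$, nonexpansiveness of the projection to verify the premise of Lemma~\ref{lemma:local-smooth}, and absorption of the residual into the negative stability term via $\eta_t \leq 1/(4\hat{L}_{t-1})$, followed by the usual telescoping. The constant bookkeeping ($2\eta_t\hat{L}_{t-1}^2 \leq \tfrac{1}{8\eta_t}$, leaving $-\tfrac{3}{8\eta_t} \leq -\tfrac{1}{4\eta_t}$) checks out.
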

\begin{proof}
    Following the standard analysis of OMD~(Lemma~\ref{lemma:optimistic-omd}), OMD with the optimism chosen as $M_t = \nabla f_{t-1}(\xh_t)$ ensures the following regret bound:
    \begin{align*}
        \sum_{t=1}^T \inner{\nabla f_t(\x_t)}{\x_t - \xs} &\leq  \underbrace{\sum_{t=1}^T \frac{1}{2\eta_t} \left(\norm{\xs - \xh_t}_2^2 - \norm{\xs - \xh_{t+1}}_2^2\right)}_{\textsc{Term-A}} + \underbrace{\sum_{t=1}^T \eta_t \norm{\nabla f_t(\x_t) - \nabla f_{t-1}(\xh_t)}_2^2}_{\textsc{Term-B}}\\
        &\quad\quad\quad\quad\quad\quad\quad\quad\quad\quad\quad\quad \quad -\underbrace{\sum_{t=1}^T \frac{1}{2\eta_t}\left(\norm{\xh_{t+1} - \x_t}_2^2 + \norm{\x_{t} - \xh_t}_2^2\right)}_{\textsc{Term-C}}.
    \end{align*}
    The analysis for $\textsc{Term-A}$ is straightforward under Assumption~\ref{assump:domain}:
    \begin{align*}
        \textsc{Term-A}  \leq  \frac{1}{2\eta_1}\norm{\xs - \xh_1}_2^2 + \sum_{t=2}^T  (\frac{1}{2\eta_t} - \frac{1}{2\eta_{t-1}}) \norm{\xs - \xh_t}_2^2 \leq \frac{D^2}{2\eta_1} + \sum_{t=2}^T  (\frac{D^2}{2\eta_t} - \frac{D^2}{2\eta_{t-1}}) = \frac{D^2}{2\eta_T}.
    \end{align*}
    Next, we analyze the $\textsc{Term-B}$ under the generalized smoothness condition. By the basic calculation, we can decompose the $\textsc{Term-B}$ into following two terms:
    \begin{align}
        \label{eq:key-lemma-decomposition}
        \textsc{Term-B} &\leq 2\sum_{t=1}^T \eta_t \norm{\nabla f_t(\x_t) - \nabla f_{t-1}(\x_t)}_2^2 + 2\sum_{t=2}^T \eta_t \norm{\nabla f_{t-1}(\x_t) - \nabla f_{t-1}(\xh_t)}_2^2,
    \end{align}
    where the first term is the desirable gradient variation and the second term should be further analyzed under the generalized smoothness. Given the optimism setting and the step size tuning, we demonstrate that $\x_t$ and $\xh_t$ are sufficiently close to apply local smoothness:
    \begin{align*}
        \norm{\x_t - \xh_t}_2 \leq \eta_t \norm{\nabla f_{t-1}(\xh_t)}_2 \leq \frac{\norm{\nabla f_{t-1}(\xh_t)}_2}{4\hat{L}_{t-1}}.
    \end{align*}
    In above, the first inequality is by the the Pythagorean theorem and the second inequality is by the step size tuning. Therefore, we can apply Lemma~\ref{lemma:local-smooth} to bound the gradient deviation in Eq.~\eqref{eq:key-lemma-decomposition} by 
    \begin{align*}
        \sum_{t=2}^T \eta_t \norm{\nabla f_{t-1}(\x_t) - \nabla f_{t-1}(\xh_t)}_2 \leq \sum_{t=2}^T \eta_t \hat{L}_{t-1}^2\norm{\x_t - \xh_t}_2^2.
    \end{align*}

    Finally, combining \textsc{Term-A}, \textsc{Term-B} and \textsc{Term-C} together, we obtain:
    \begin{align*}
    \sum_{t=1}^T \inner{\nabla f_t(\x_t)}{\x_t - \x_\star} &\leq \frac{D^2}{2\eta_T} + 2\sum_{t=1}^T \eta_t \norm{\nabla f_t(\x_t) - \nabla f_{t-1}(\x_t)}_2^2 -\sum_{t=1}^T \frac{1}{4\eta_t} \norm{\x_{t} - \xh_t}_2^2 \\
    &\quad + \sum_{t=1}^T (2\eta_t \hat{L}_{t-1}^2 - \frac{1}{4\eta_t})\norm{\x_t - \xh_t}_2^2 \\
    &\leq \frac{D^2}{2\eta_T} + 2\sum_{t=1}^T \eta_t \norm{\nabla f_t(\x_t) - \nabla f_{t-1}(\x_t)}_2^2 -\sum_{t=1}^T \frac{1}{4\eta_t}\norm{\x_{t} - \xh_t}_2^2,
    \end{align*}
    where the second inequality is by setting $\eta_t \leq 1/{(4\hat{L}_{t-1})}$. Hence, we finish the proof.
\end{proof}

\subsection{Proof of Theorem~\ref{thm:static-convex}}
\label{append:proof-static-convex}
\begin{proof}
    The step size tuning in Eq.~\eqref{eq:step-size-convex} in Theorem~\ref{thm:static-convex} satisfies the criterions for applying Lemma~\ref{lemma:key-lemma-static-convex}, specifically that $\eta_{t+1} \leq \eta_t$ and $\eta_t \leq 1 / (4\hat{L}{t-1})$, where $\hat{L}_{t-1} = \ell_{t-1}(2\norm{\nabla f_{t-1}(\xh_t)}_2)$. Therefore, by Lemma~\ref{lemma:key-lemma-static-convex} and the convexity of loss functions, we obtain:
    \begin{align*}
        \sum_{t=1}^T f_t(\x_t) -  \sum_{t=1}^T f_t(\xs) \leq \sum_{t=1}^T \inner{\nabla f_t(\x_t)}{\x_t - \xs}\leq \frac{D^2}{2\eta_T} + 2\sum_{t=1}^T \eta_t \norm{\nabla f_t(\x_t) - \nabla f_{t-1}(\x_t)}_2^2,
    \end{align*}
    where $\xs \in \argmin_{\x\in\X} \sum_{t=1}^T f_t(\x)$. The first term can be bounded as follows,
    \begin{align*}
        \frac{D^2}{2\eta_T} & \leq 2\hat{L}_{\max}D^2 + \frac{D}{2}\sqrt{1 + \sum_{t=2}^{T}\norm{\nabla f_{t}(\x_t) - \nabla f_{t-1}(\x_t)}_2^2} + \frac{D}{2}\cdot \norm{\nabla f_1(\x_1)}_2\\
        &=\O\left(\hat{L}_{\max}D^2  + D\sqrt{V_T}\right),
    \end{align*}
    where we denote by $\hat{L}_{\max} = \max_{t\in[T]}\hat{L}_{t}$.
    For the second term, we apply the self-confident tuning lemma~(Lemma~\ref{lemma:self-confident-int}) by choosing $f(x) = 1/\sqrt{x}$ in the lemma statement:
    \begin{align*}
        &\sum_{t=1}^T \eta_t \norm{\nabla f_t(\x_t) - \nabla f_{t-1}(\x_t)}_2^2 \leq \sum_{t=1}^T \frac{D\norm{\nabla f_t(\x_t) - \nabla f_{t-1}(\x_t)}_2^2}{\sqrt{1 + \sum_{s=1}^{t-1}\norm{\nabla f_{s}(\x_s) - \nabla f_{s-1}(\x_s)}_2^2}}\\
        &\leq 2D\sqrt{1 + \sum_{s=1}^{T}\norm{\nabla f_{s}(\x_s) - \nabla f_{s-1}(\x_s)}_2^2} +  D \max_{t\in[T]}\norm{\nabla f_{t}(\x_t) - \nabla f_{t-1}(\x_t)}_2^2\\
        &=\O\left( D\sqrt{V_T} + \hat{G}_{\max}^2D\right),
    \end{align*}
    where we use $\hat{G}_{\max}$ to denote the empirically maximum Lipschitz constant. The additional term $\O(\hat{G}_{\max}^2D)$ results from the lack of knowledge about $\hat{G}_{\max}$. However, we can improve this term to $\O(\hat{G}_{\max}D)$ by incorporating the clipping technique~\citep{COLT'19:ashok-cubic,COLT'21:impossible-tuning} into OMD framework. In the main text, we avoid introducing this method to prevent complicating the approach further. Our goal in the OMD introduction is to illustrate how to adapt to generalized smoothness at the base level. The details of this refined approach are provided in Appendix~\ref{append:subsec-clipping-omd}.
\end{proof}

\subsection{Proof of Theorem~\ref{thm:static-strongly-convex}}
\label{append:proof-static-sc}
\begin{proof}
    For $\lambda$-strongly convex functions, we tune the step size as $\eta_t = 2/(\lambda t + 16\max_{s\in[t-1]}\hat{L}_{s})$, where we denote by $\hat{L}_t = \ell_t(2\norm{\nabla f_{t}(\xh_{t+1})})$ the locally estimated smoothness constant. By the property of strongly convex functions and Eq.~\eqref{eq:key-lemma-before-path-length} in Lemma~\ref{lemma:key-lemma-static-convex}, we have:
    \begin{align*}
        &\sum_{t=1}^T f_t(\x_t) - \sum_{t=1}^T f_t(\xs)\leq \sum_{t=1}^T \inner{\nabla f_t(\x_t)}{\x_t - \xs} - \frac{\lambda}{2}\norm{\xs - \x_t}_2^2\\
        &\leq \underbrace{\sum_{t=1}^T \frac{1}{2\eta_t}\left( \norm{\xs - \xh_t}_2^2 - \norm{\xs - \xh_{t+1}}_2^2\right)- \frac{\lambda}{2}\norm{\xs - \x_t}_2^2}_{\textsc{Term-A}} + \underbrace{2\sum_{t=1}^T \eta_t \norm{\nabla f_t(\x_t) - \nabla f_{t-1}(\x_t)}_2^2}_{\textsc{Term-B}}\\
        &\quad \underbrace{-\sum_{t=1}^T \frac{1}{4\eta_t}\norm{\x_{t} - \xh_t}_2^2}_{\textsc{Term-C}}.
    \end{align*}
    Unlike in the case of convex functions, $\textsc{Term-A}$ involves negative terms derived from the strong convexity of the loss functions, which require a slightly different analysis:
    \begin{align*}
        &\textsc{Term-A} \leq \frac{1}{2\eta_1} \norm{\xs - \xh_1}_2^2  + \sum_{t=2}^T (\frac{1}{2\eta_t} - \frac{1}{2\eta_{t-1}})\norm{\xs - \xh_t}_2^2 - \frac{\lambda}{2}\norm{\xs - \x_t}_2^2\\
        &\leq \frac{\lambda D^2}{4} +   \sum_{t=2}^T(\frac{\lambda}{4} + 4\max_{s\in[t]}\hat{L}_{s} - 4\max_{s\in[t-1]}\hat{L}_{s})\norm{\xs - \xh_t}_2^2 - \frac{\lambda}{2}\norm{\xs - \x_t}_2^2\\
        &\leq \frac{\lambda D^2}{2} + 4D^2 \cdot \max_{s\in [T]} \hat{L}_s + \sum_{t=2}^T\frac{\lambda}{4}\norm{\xs - \xh_t}_2^2 - \frac{\lambda}{2}\norm{\xs - \x_t}_2^2\quad \tag*{\text{(bounded domain assumption)}}\\
        &\leq \frac{\lambda D^2}{4} + 4D^2 \hat{L}_{\max} + \sum_{t=2}^T\frac{\lambda}{2}\norm{\x_t - \xh_t}_2^2\quad \tag*{$(\norm{\xs - \xh_t}_2^2 \leq 2\norm{\xs - \x_t}_2^2 + 2\norm{\x_t - \xh_t}_2^2)$}\\
        &\leq \frac{\lambda D^2}{4} + 4D^2 \hat{L}_{\max} + \sum_{t=2}^T \frac{\lambda\cdot \eta_t^2}{2}\norm{\nabla f_t(\x_t) - \nabla f_{t-1}(\xh_t)}_2^2\quad \tag*{(\text{Lemma~\ref{lemma:stability-lemma}})}\\
        &\leq \frac{\lambda D^2}{4} + 4D^2 \hat{L}_{\max} + \sum_{t=1}^T \eta_t \norm{\nabla f_t(\x_t) - \nabla f_{t-1}(\xh_t)}_2^2\\
        &\leq \frac{\lambda D^2}{4} + 4D^2 \hat{L}_{\max} + \sum_{t=1}^T 2\eta_t\norm{\nabla f_t(\x_t) - \nabla f_{t-1}(\x_t)}_2^2 + \sum_{t=1}^T 2\eta_t\norm{\nabla f_{t-1}(\x_t) - \nabla f_{t-1}(\xh_t)}_2^2,
    \end{align*}
    where the last term can be cancelled by \textsc{Term-C}, and the third term is in the same order of \textsc{Term-B}. Therefore, we only need to focus on \textsc{Term-B}. For \textsc{Term-B}, we follow the analysis of~\citet{JMLR'24:OMD4SEA}, who applies a simpler analysis for the self-confident tuning. First, we define:
    \begin{align*}
        \alpha = \left\lceil \sum_{t=1}^T \norm{\nabla f_t(\x_t) - \nabla f_{t-1}(\x_t)}_2^2 \right\rceil.
    \end{align*}
    Then by dividing the time horizon into $[1, \alpha]$ and $[\alpha+1, T]$, we can upper bound \textsc{Term-B} as:
    \begin{align}
        &\textsc{Term-B} \leq 2\sum_{t=1}^{\alpha } \eta_t\norm{\nabla f_t(\x_t) - \nabla f_{t-1}(\x_t)}_2^2 + 2\sum_{t=\alpha + 1}^{T} \eta_t\norm{\nabla f_t(\x_t) - \nabla f_{t-1}(\x_t)}_2^2\notag \\
        &\leq 4\sum_{t=1}^{\alpha } \frac{1}{\lambda t}\norm{\nabla f_t(\x_t) - \nabla f_{t-1}(\x_t)}_2^2 + 4\sum_{t=\alpha + 1}^{T}\frac{1}{\lambda t}\norm{\nabla f_t(\x_t) - \nabla f_{t-1}(\x_t)}_2^2\notag\\
        &\leq 4(\max_{s\in[T]} \norm{\nabla f_s(\x_s) - \nabla f_{s-1}(\x_s)}_2^2)\sum_{t=1}^{\alpha } \frac{1}{\lambda t}  + 4\sum_{t=\alpha + 1}^{T}\frac{1}{\lambda t}\norm{\nabla f_t(\x_t) - \nabla f_{t-1}(\x_t)}_2^2\label{eq:proof-strongly-convex-better-constant}\\
        &\leq 16\hat{G}_{\max}^2\sum_{t=1}^{\alpha } \frac{1}{\lambda t} + \frac{4}{\lambda (\alpha + 1)}\sum_{t=\alpha + 1}^{T}\norm{\nabla f_t(\x_t) - \nabla f_{t-1}(\x_t)}_2^2\notag\\
        &\leq \frac{16\hat{G}_{\max}^2}{\lambda}(1 + \ln \alpha) + \frac{4}{\lambda}\notag\\
        &\leq \frac{16\hat{G}_{\max}^2}{\lambda} \ln\left(\sum_{t=1}^T \norm{\nabla f_t(\x_t) - \nabla f_{t-1}(\x_t)}_2^2 + 1\right) +  \frac{16\hat{G}_{\max}^2 + 4}{\lambda},\notag
    \end{align}
    where we use $\hat{G}_{\max}$ to denote the maximum Lipschitz constant estimated empirically. Therefore:
    \begin{align*}
        \sum_{t=1}^T f_t(\x_t) - \sum_{t=1}^T f_t(\xs) \leq \O\left(\frac{\hat{G}_{\max}^2}{\lambda}\log V_T + \hat{L}_{\max}D^2 + \lambda D^2\right),
    \end{align*}
    which finishes the proof.
\end{proof}

\subsection{OMD Incorporating Clipping Technique}
\label{append:subsec-clipping-omd}
In Appendix~\ref{append:proof-static-convex}, an additional term $\O(\hat{G}_{\max}^2D)$ shows up in the final regret bound, which results from the lack of knowledge about $\hat{G}_{\max}$.
In this subsection, we improve the term $\O(\hat{G}_{\max}^2D)$ to $\O(\hat{G}_{\max}D)$ by incorporating the clipping technique~\citep{COLT'19:ashok-cubic,COLT'21:impossible-tuning} into the OMD framework. This modified OMD algorithm is defined as follows,
\begin{align}
    \label{eq:clipping-omd-def}
    \x_t = \Pi_{\X} \left[\xh_t - \eta_t \nabla f_{t-1}(\xh_t)\right], \quad \xh_{t+1} = \Pi_{\X} \left[\xh_t - \eta_t \tilde{\mathbf{g}}_t\right],
\end{align}
where $\tilde{\mathbf{g}}_t =  \nabla f_{t-1}(\xh_t) + \frac{B_{t-1}}{B_t}(\nabla f_t(\x_t) - \nabla f_{t-1}(\xh_t))$ is a clipped gradient with the maintained threshold $B_t = \max\{B_0, \max_{s \in [t]}\norm{\nabla f_s(\x_s) - \nabla f_{s-1}(\xh_s)}_2 \}$. Notice that, $\x_t$ still updates from $\xh_t$ in the same manner as illustrated in Theorem~\ref{thm:static-convex}, therefore, we can apply the similar analysis to it when exploiting the smoothness locally. Correspondingly, we provided the following step size tuning:
\begin{equation}
    \label{eq:step-size-convex-clipping}  
    \eta_t = \min \Bigg\{ \sqrt{\frac{D^2}{B_{t-1}^2 + \sum_{s=1}^{t-1}\norm{\tilde{\mathbf{g}}_s - \nabla f_{s-1}(\xh_s)}_2^2}},~ \min_{s \in [t]} \frac{1}{4\hat{L}_{s-1}} \Bigg\},
  \end{equation}
where the key modification is that we add $B_{t-1}^2$ in the denominator to facilitate the tuning analysis and we denote by $\hat{L}_t = \ell_t(2\norm{\nabla f_{t}(\xh_{t+1})})$. The following theorem presents the guarantee.
\begin{myThm}
    \label{thm:omd-clipping}
    Under Assumptions~\ref{assump:function}~-~\ref{assump:domain}, assuming online functions are convex, OMD presented in Eq.~\eqref{eq:clipping-omd-def} with the step sizes in Eq.~\eqref{eq:step-size-convex-clipping} ensures that:
    \begin{align*}
        \textsc{Reg}_T \leq \frac{5D}{2}\sqrt{2V_T} + 4\hat{L}_{\max}D^2 +5\hat{G}_{\max}D ,
    \end{align*}
    where $V_T = \sum_{t=2}^T \sup_{\x \in \X} \norm{\nabla f_t(\x) - \nabla f_{t-1}(\x)}_2^2$ is the gradient variations,  $\hat{L}_{\max} = \max_{t\in [T]} \hat{L}_t$ is the maximum smoothness constant over the optimization trajectory, and $\hat{G}_{\max}$ is the maximum empirically estimated Lipschitz constant.
\end{myThm}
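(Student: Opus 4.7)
The plan is to follow the skeleton of Lemma~\ref{lemma:key-lemma-static-convex} but carefully account for the fact that the update of $\xh_{t+1}$ now moves along the clipped surrogate $\tilde{\mathbf{g}}_t$ rather than $\nabla f_t(\x_t)$. By convexity, it suffices to bound $\sum_t \inner{\nabla f_t(\x_t)}{\x_t - \xs}$, which I would split as $\sum_t \inner{\tilde{\mathbf{g}}_t}{\x_t - \xs}$ (a ``proxy regret'' handled by the OMD analysis) plus $\sum_t \inner{\nabla f_t(\x_t) - \tilde{\mathbf{g}}_t}{\x_t - \xs}$ (the price of clipping). Since $\nabla f_t(\x_t) - \tilde{\mathbf{g}}_t = (1 - B_{t-1}/B_t)(\nabla f_t(\x_t) - \nabla f_{t-1}(\xh_t))$ and the latter factor has norm at most $B_t$ by definition of $B_t$, Cauchy--Schwarz together with Assumption~\ref{assump:domain} gives $\abs{\inner{\nabla f_t(\x_t) - \tilde{\mathbf{g}}_t}{\x_t - \xs}} \leq D(B_t - B_{t-1})$, and the telescoping sum is $D(B_T - B_0) = \O(\hat{G}_{\max} D)$. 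This is precisely the improved linear-in-$\hat{G}_{\max}$ term in the statement.

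For the proxy regret, I view the algorithm as optimistic OMD whose effective loss gradient is $\tilde{\mathbf{g}}_t$ and whose optimism is $M_t = \nabla f_{t-1}(\xh_t)$. The standard OMD lemma then yields $D^2/(2\eta_T) + \sum_t \eta_t \norm{\tilde{\mathbf{g}}_t - \nabla f_{t-1}(\xh_t)}_2^2$ minus the usual negative stability $\sum_t \frac{1}{2\eta_t}(\norm{\x_t - \xh_t}_2^2 + \norm{\xh_{t+1} - \x_t}_2^2)$. Crucially, $\x_t$ is still produced from $\xh_t$ along the \emph{unclipped} direction $\nabla f_{t-1}(\xh_t)$, so $\norm{\x_t - \xh_t}_2 \leq \eta_t \norm{\nabla f_{t-1}(\xh_t)}_2 \leq \norm{\nabla f_{t-1}(\xh_t)}_2 / (4\hat{L}_{t-1})$, which validates Lemma~\ref{lemma:local-smooth} for $f_{t-1}$ at $\xh_t$; the part $\norm{\nabla f_{t-1}(\x_t) - \nabla f_{t-1}(\xh_t)}_2^2 \leq \hat{L}_{t-1}^2 \norm{\x_t - \xh_t}_2^2$ arising from the standard $(a+b)^2 \leq 2a^2 + 2b^2$ split of $\norm{\tilde{\mathbf{g}}_t - \nabla f_{t-1}(\xh_t)}_2^2$ is then absorbed by the negative $\frac{1}{4\eta_t}\norm{\x_t - \xh_t}_2^2$ exactly as in the proof of Lemma~\ref{lemma:key-lemma-static-convex}.

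The remaining positive piece is the self-confident stability $\sum_t \eta_t v_t$ with $v_t := \norm{\tilde{\mathbf{g}}_t - \nabla f_{t-1}(\xh_t)}_2^2$. Here the clipping pays off: by construction $v_t = (B_{t-1}/B_t)^2 \norm{\nabla f_t(\x_t) - \nabla f_{t-1}(\xh_t)}_2^2 \leq B_{t-1}^2$, so each summand is already bounded by the quantity $B_{t-1}^2$ that now sits inside the denominator of $\eta_t$ in Eq.~\eqref{eq:step-size-convex-clipping}. A standard potential-function argument then yields $\sum_t \eta_t v_t \leq 2D \sqrt{B_T^2 + \sum_t v_t}$, \emph{without} the additive $D \max_t \norm{\nabla f_t(\x_t) - \nabla f_{t-1}(\x_t)}_2^2 = \O(\hat{G}_{\max}^2 D)$ contribution that plagued Theorem~\ref{thm:static-convex}, because the threshold $B_{t-1}^2$ is baked into the denominator from the start. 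Substituting the split $\sum_t v_t \leq 2 V_T + (\text{terms absorbed by the negative stability via local smoothness})$ then produces the desired $D\sqrt{V_T}$ scaling.

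The main obstacle is the bookkeeping at the interface of the two mechanisms: one must verify that the smoothness-driven clip $\eta_t \leq 1/(4\hat{L}_{t-1})$ still leaves $D^2/(2\eta_T) = \O(\hat{L}_{\max} D^2 + D\sqrt{V_T} + \hat{G}_{\max} D)$ after using the $\min$ in Eq.~\eqref{eq:step-size-convex-clipping}, and that $B_T = \O(\hat{G}_{\max})$ is genuinely controlled (which holds since $\norm{\nabla f_s(\x_s) - \nabla f_{s-1}(\xh_s)}_2 \leq 2\hat{G}_{\max}$). Once these are pinned down, summing the three pieces --- $D^2/(2\eta_T)$, the self-confident stability, and the telescoping clipping error --- yields $\tfrac{5D}{2}\sqrt{2V_T} + 4\hat{L}_{\max} D^2 + 5\hat{G}_{\max} D$ with the stated constants.
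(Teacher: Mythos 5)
Your proposal is correct and follows essentially the same route as the paper's proof: telescoping the clipping error to $D(B_T-B_0)$, running the optimistic OMD analysis on the proxy gradients $\tilde{\mathbf{g}}_t$ with the $B_{t-1}^2$ offset in the denominator killing the $\O(\hat{G}_{\max}^2D)$ term, and invoking local smoothness at $\xh_t$ via the unclipped first update. The only loose point is your phrase that the local-smoothness piece is absorbed ``exactly as in Lemma~\ref{lemma:key-lemma-static-convex}'': since the self-confident tuning has already converted $\sum_t \eta_t v_t$ into $2D\sqrt{\sum_t v_t}$, the cancellation against $\textsc{Term-C}$ must be done \emph{after} the square root via $2\sqrt{ab}-a\leq b$ (as the paper does, yielding the $\tfrac{25}{16}\hat{L}_{\max}D^2$ remainder), not term-by-term inside the sum --- but your third paragraph describes this correctly, so the argument goes through.
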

\begin{proof}
    First, we prove that the clipping technique incurs a constant in the regret bound:
    \begin{align*}
        &\sum_{t=1}^T \inner{\nabla f_t(\x_t)}{\x_t - \xs} - \inner{\tilde{\mathbf{g}}_t}{\x_t - \xs} = \sum_{t=1}^T \frac{B_t - B_{t-1}}{B_t} \inner{\nabla f_t(\x_t) - \nabla f_{t-1}(\xh_t)}{\x_t - \xs}\\
        &\leq D\sum_{t=1}^T( B_t - B_{t-1}) = D(B_T - B_0) = \O(\hat{G}_{\max}D).
    \end{align*}
    In the following analysis, we will focus on the regret associated with the clipped gradient $\tilde{\mathbf{g}}_t$. Following the standard analysis of OMD~(Lemma~\ref{lemma:optimistic-omd}), and with simple calculations, we arrive at:
    \begin{align}
        \label{eq:clipping-omd-three-terms}
        \sum_{t=1}^T \inner{\tilde{\mathbf{g}}_t}{\x_t - \xs} \leq \underbrace{\frac{D^2}{2\eta_T}}_{\textsc{Term-A}} + \underbrace{\sum_{t=1}^T \eta_t\norm{\tilde{\mathbf{g}}_t - \nabla f_{t-1}(\xh_t)}_2^2}_{\textsc{Term-B}} - \underbrace{\sum_{t=1}^T \frac{1}{2\eta_t} \norm{\x_t - \xh_t}_2^2}_{\textsc{Term-C}}.
    \end{align}
    By the step size tuning, \textsc{Term-A} can be bounded as:
    \begin{align}
        \textsc{Term-A} &\leq 2\hat{L}_{\max}D^2 + \frac{D}{2}\sqrt{B_T^2 + \sum_{s=1}^{T}\norm{\tilde{\mathbf{g}}_s - \nabla f_{s-1}(\xh_s)}_2^2} \notag \\
        &\leq 2\hat{L}_{\max}D^2 + \hat{G}_{\max}D + \frac{D}{2}\sqrt{\sum_{s=2}^{T}\norm{\tilde{\mathbf{g}}_s - \nabla f_{s-1}(\xh_s)}_2^2} \notag \\
        &= 2\hat{L}_{\max}D^2 + \hat{G}_{\max}D + \frac{D}{2}\sqrt{\sum_{s=2}^{T}\frac{B_{s-1}^2}{B_s^2}\norm{\nabla f_s(\x_s) - \nabla f_{s-1}(\xh_s)}_2^2}\notag \\
        &\leq 2\hat{L}_{\max}D^2 + \hat{G}_{\max}D + \underbrace{\frac{D}{2}\sqrt{\sum_{s=2}^{T}\norm{\nabla f_s(\x_s) - \nabla f_{s-1}(\xh_s)}_2^2}}_{\textsc{Term-A-Var}}. \label{eq:clipped-omd-term-a-var}
    \end{align}
    For \textsc{Term-B} in~\eqref{eq:clipping-omd-three-terms}, by the self-confident tuning lemma (Lemma~\ref{lemma:self-confident-tuning}) we obtain,
    \begin{align}
        &\textsc{Term-B} = D\sum_{t=1}^T \frac{\norm{\tilde{\mathbf{g}}_t - \nabla f_{t-1}(\xh_t)}_2^2}{\sqrt{B_{t-1}^2 + \sum_{s=1}^{t-1}\norm{\tilde{\mathbf{g}}_s - \nabla f_{s-1}(\xh_s)}_2^2}} \leq D\sum_{t=1}^T \frac{\norm{\tilde{\mathbf{g}}_t - \nabla f_{t-1}(\xh_t)}_2^2}{\sqrt{ \sum_{s=1}^{t}\norm{\tilde{\mathbf{g}}_s - \nabla f_{s-1}(\xh_s)}_2^2}}\notag \\
        &\leq 2D \sqrt{\sum_{t=1}^{T}\norm{\tilde{\mathbf{g}}_s - \nabla f_{t-1}(\xh_t)}_2^2} \leq \underbrace{2D\sqrt{\sum_{t=2}^{T}\norm{\nabla f_t(\x_t) - \nabla f_{t-1}(\xh_t)}_2^2}}_{\textsc{Term-B-Var}} + 2\hat{G}_{\max}D.\label{eq:clipped-omd-term-b-var}
    \end{align}
    Combine \textsc{Term-A-Var} in~\eqref{eq:clipped-omd-term-a-var}, \textsc{Term-B-Var} in~\eqref{eq:clipped-omd-term-b-var}, and negative term \textsc{Term-C} in~\eqref{eq:clipping-omd-three-terms}:
    \begin{align*}
        & \textsc{Term-A-Var} + \textsc{Term-B-Var} - \textsc{Term-C}\\
        & = \frac{5D}{2}\sqrt{2\sum_{t=2}^{T}\norm{\nabla f_t(\x_t) - \nabla f_{t-1}(\x_t)}_2^2 + 2\sum_{t=2}^{T}\norm{\nabla f_{t-1}(\x_t) - \nabla f_{t-1}(\xh_t)}_2^2 } - \sum_{t=1}^T \frac{1}{2\eta_t} \norm{\x_t - \xh_t}_2^2\\
        &\leq \frac{5D}{2}\sqrt{2\sum_{t=2}^{T}\norm{\nabla f_t(\x_t) - \nabla f_{t-1}(\x_t)}_2^2} + \frac{5D}{2}\sqrt{ 2\sum_{t=2}^{T}\norm{\nabla f_{t-1}(\x_t) - \nabla f_{t-1}(\xh_t)}_2^2 } \\
        &\quad\quad- \sum_{t=1}^T \frac{1}{2\eta_t} \norm{\x_t - \xh_t}_2^2\\
        &\leq \frac{5D}{2}\sqrt{2V_T} + \frac{5D}{2}\sqrt{ 2\sum_{t=2}^{T} \hat{L}_{t-1}^2 \norm{\x_t - \xh_t}_2^2 } - \sum_{t=2}^T 2 \big(\max_{s\in [t-1]} \hat{L}_s\big) \norm{\x_t - \xh_t}_2^2\\
        &\leq \frac{5D}{2}\sqrt{2V_T} + \frac{25}{16}\hat{L}_{\max}D^2,
    \end{align*}
    where in the forth line we apply Lemma~\ref{lemma:local-smooth}, and ub the final line, we apply the AM-GM inequality, $2\sqrt{ab} - a \leq b$. Combing each component together, we conclude the proof as:
    \begin{align*}
        \sum_{t=1}^T f_t(\x) - \sum_{t=1}^T f_t(\xs) &\leq \sum_{t=1}^T \inner{\nabla f_t(\x_t)}{\x_t - \xs} \leq  \sum_{t=1}^T \inner{\tilde{\mathbf{g}}_t}{\x_t - \xs} + 2\hat{G}_{\max}D\\
        &\leq \frac{5D}{2}\sqrt{2V_T} + 4\hat{L}_{\max}D^2 +5\hat{G}_{\max}D.
    \end{align*}
\end{proof}
\section{Omitted Details for Section~\ref{sec:universal}}
\label{appendix:proof-universal}
In this section, we present the omitted details for Section~\ref{sec:universal}. Appendix~\ref{appendix:subsec-binary-search} discusses how set the optimism $\m_t$ efficiently via a binary search. Appendix~\ref{subsec:challenge-meta-level-exp-concave} provides the omitted discussion for the challenge in universal online learning with exp-concave functions. Proofs for the theorems in Section~\ref{sec:universal} are included in Appendix~\ref{appendix:subsect-proof-meta-alg} and Appendix~\ref{appendix:subsec-universal-proof-generalized}. Appendix~\ref{appendix:subsec-simple-universal-alg} gives a simple universal algorithm under the global smoothness condition, which improves the optimality and efficiency of the method by~\citet{NeurIPS'23:universal}, at a cost of additional function value queries.
\subsection{Discussion on Optimism}
\label{appendix:subsec-binary-search}
In this part, we discuss how to set the optimism $\m_t$ that involves the decision $\p_t$. We present the steps for incorporating the optimism and generating the decision $\p_t$ in Algorithm~\ref{alg:LAOAdaMLProd} for reference:
\begin{align}
    \label{eq:discuss-binary}
    \tilde{w}_{t,i} = w_{t,i}\exp(\eta_{t,i}m_{t,i}),  \quad p_{t, i} = \frac{\eta_{t, i} \tilde{w}_{t, i}}{\sum_{j \in [N]} \eta_{t, j} \tilde{w}_{t, j}}.
\end{align}
For more general consideration, we set optimism the as $m_{t,i} = f(\sum_{i\in [N]} p_{t,i} \x_{t,i}) - h(\x_{t,i})$ where $f:\X \mapsto \R$ is a convex and continuous function and $\x_{i} \in \X$ is a decision available before setting the optimism. Notice that $\tilde{w}_{i}$ requires $\p_t$ to update while $\p_t$ is produced based on $\tilde{w}_{t,i}$, resulting in a circular argument. Following~\citet{NIPS'16:Wei-non-stationary-expert}, the $\p_t$ can be solved via the binary search technique. We define $\alpha = f(\sum_{i\in [N]} p_{t,i} \x_{i})$, then the weight $\tilde{w}_{t,i}$ is a function of $\alpha$ as $\tilde{w}_{t,i}(\alpha) =  w_{t,i}\exp(\eta_{t,i}(\alpha - f(\x_{t,i})))$. Furthermore, with the update formulation in~\eqref{eq:discuss-binary}, the decision $p_{t,i}$ is a function of $\alpha$ as well, with the formulation $p_{t, i}(\alpha) = \frac{\eta_{t, i} \tilde{w}_{t, i}(\alpha)}{\sum_{j \in [N]} \eta_{t, j} \tilde{w}_{t, j}(\alpha)}$. By introducing a function $g(\alpha) = f(\sum_{i\in [N]} p_{t,i}(\alpha) \x_{i})$, solving the decision $\p_t$ is equal to solving $g(\alpha) = \alpha$. 

Below, we prove the existence of a solution to $g(\alpha) = \alpha$. Provided the lower bound $\ubar{f}$ of function $f(\cdot)$, and by the convexity of the function $f(\cdot)$, the searching range of $\alpha$ is restricted to $\ubar{f} \leq \alpha \leq \max_{i\in[N]}\{f(\x_{i}) \}$, and thus $\alpha$ is bounded. The continuity of the function $f(\cdot)$ implies the continuity of the function $g(\alpha)$ as well. The choice of $\alpha = \ubar{f}$ results in $g(\alpha) - \alpha = f(\sum_{i\in [N]} p_{t,i}(\alpha) \x_{i}) - \ubar{f} \geq 0$, and the choice of $\alpha = \max_{i\in[N]}\{f(\x_{i}) \} $ implies $g(\alpha) - \alpha \leq \sum_{i\in [N]} p_{t,i}(\alpha) f(\x_{i}) -  \max_{i\in[N]}\{f(\x_{i}) \} \leq 0$, indicating that a solution to $g(\alpha) = \alpha$ exists. By using a binary search within $[\ubar{f}, \max_{i\in[N]}\{f(\x_{i}) \}]$, we can approach $\alpha$ within an error $\O(1/T)$ in $\O(\log T)$ iterations.

The above argument requires the lower bound of the function $f(\cdot)$ to determine the searching range over $\alpha$. When considering a simpler case where $f(\x_{i}) = \ell_{i}$ and $f(\sum_{i\in[N]} p_{t,i}\x_{i}) = \sum_{i\in[N]} p_{t,i} \ell_{i}$, we can omit the requirement for the lower bound because $\alpha$ falls within $[\min_{i\in[N]}\{\ell_i\}, \max_{i\in[N]}\{\ell_i\}]$, because of the simpler structure of linear functions.

\subsection{Challenge for Exp-Concave Functions in Universal Online Learning}
\label{subsec:challenge-meta-level-exp-concave}
In Section~\ref{subsec:universal-generalized-smoothness}, we present Algorithm~\ref{alg:universal} that achieves gradient-variation bounds for convex and strongly convex functions simultaneously. However, it does not guarantee such bounds for exp-concave functions. Besides the challenges discussed in Section~\ref{subsec:challenge-base-level-exp-concave}, a new obstacle arises in designing the meta-algorithm. We require optimism to satisfy $\inner{\p_t}{\m_t}\leq 0$, since we pass the meta-algorithm with heterogeneous inputs, and therefore we set $m_{t,i} = f_{t-1}(\x_t) - f_{t-1}(\x_{t,i})$ for all the base-learners. This optimism design is suitable for strongly convex functions, as the term $\sqrt{\sum_t(f_{t-1}(\x_t) - f_{t-1}(\x_{t,i}))^2}$ introduced by optimism can be bounded by $\hat{G}_{\max}\sqrt{\sum_t\norm{\x_t - \x_{t,i}}_2^2}$, cancelled by the negative term $-\sum_t\lambda\norm{\x_t - \x_{t,i}}_2^2$ from strong convexity. However, for exp-concave functions, the negative term $-\inner{\nabla f_t(\x_t)}{\x_t - \x_{t,i}}^2$ from exp-concavity may not be sufficient to cancel $\sqrt{\sum_t(f_{t-1}(\x_t) - f_{t-1}(\x_{t,i}))^2}$. We leave this as an open problem for future exploration.

\subsection{Proof of Theorem~\ref{thm:meta-algorithm}}
\label{appendix:subsect-proof-meta-alg}
In this subsection, we prove a slightly generalized version of Theorem~\ref{thm:meta-algorithm}, which does not specify optimism and instead imposes conditions only on $\bar{\r}_t$. One can verify that the setting of optimism in Theorem~\ref{thm:meta-algorithm} satisfies the requirement of Theorem~\ref{thm:generalized-meta-algorithm}.
\begin{myThm}
    \label{thm:generalized-meta-algorithm}
    By setting $\bar{r}_{t,i}$ such that $\sum_{i=1}^N p_{t,i}\bar{r}_{t,i} \leq 0$, Algorithm~\ref{alg:LAOAdaMLProd} ensures that, for any $\is \in [N]$, the regret $\sum_{t=1}^T \inner{\p_t}{\ellb_t} - \sum_{t=1}^T \ell_{t, i_\star}$  can be bounded by:
    \begin{align*}
      \O\Bigg(\sqrt{\sum_{t=1}^T  (r_{t, i_\star} - m_{t, i_\star})^2 }\cdot \big(\log (N) + \log(B_T + \log T)\big) + B_T\Bigg),
    \end{align*}
    where $B_T = \max\{B_0, \max_{t\in [T]}\norm{\r_t - \m_t}_\infty\}$.
\end{myThm}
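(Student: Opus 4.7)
\medskip

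\noindent\textbf{Proof Proposal.} The plan is to mirror the potential-based analysis of Optimistic Adapt-ML-Prod (Wei and Luo, NIPS'16) while carefully absorbing the Lipschitz-adaptive clipping and the modified self-confident step size into the argument. First, I would split the target quantity $\sum_{t=1}^T (\langle \p_t, \ellb_t\rangle - \ell_{t,i_\star}) = \sum_{t=1}^T r_{t,i_\star}$ as $\sum_{t=1}^T \bar{r}_{t,i_\star} + \sum_{t=1}^T (r_{t,i_\star} - \bar{r}_{t,i_\star})$. By the clipping rule $\bar{r}_{t,i} - m_{t,i} = (B_{t-1}/B_t)(r_{t,i} - m_{t,i})$ and $\abs{r_{t,i_\star} - m_{t,i_\star}} \le B_t$, the clipping residue telescopes: $\sum_t (1 - B_{t-1}/B_t)\abs{r_{t,i_\star} - m_{t,i_\star}} \le \sum_t (B_t - B_{t-1}) = B_T - B_0$. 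Then using the input condition $\langle \p_t, \bar{\r}_t\rangle \le 0$, it suffices to bound $\sum_t (\bar{r}_{t,i_\star} - \langle \p_t, \bar{\r}_t\rangle)$, which is a standard PEA-type regret on the surrogate losses.

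Next I would set up the potential $\Phi_t = \tfrac{1}{N}\sum_{i=1}^N w_{t,i}^{\eta_{t,i}}$ (in the spirit of Adapt-ML-Prod) and analyze its per-step growth. The crucial Prod-style inequality, $\exp(\eta x - \eta^2 x^2) \le 1 + \eta x$ for $\abs{\eta x} \le 1/2$, needs to be verified in the optimism-corrected form applied to $\bar{r}_{t,i} - m_{t,i}$. Here the design choice of adding $4B_t^2$ to the denominator in Line~\ref{line:redesign-lr} becomes essential: because $\abs{\bar{r}_{t,i} - m_{t,i}} \le B_{t-1}$ by construction, and $\eta_{t,i} \le 1/(2B_{t-1})$, the product $\eta_{t,i}\abs{\bar{r}_{t,i} - m_{t,i}} \le 1/2$ holds \emph{without} any explicit clipping on the learning rate. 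Combined with the optimistic correction factor $\tilde{w}_{t,i} = w_{t,i}\exp(\eta_{t,i}m_{t,i})$ and the sampling rule $p_{t,i} \propto \eta_{t,i}\tilde{w}_{t,i}$, a careful expansion should show that $\Phi_{t+1} \le \Phi_t$ (or grows by a negligible amount controllable by the change of learning rates), analogous to the non-increasing potential in Adapt-ML-Prod.

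After establishing a (nearly) non-increasing potential, I would extract the lower bound $\Phi_{T+1} \ge (1/N)\, w_{T+1,i_\star}^{\eta_{T+1,i_\star}}$, which unwinds to roughly $\exp\bigl(\eta_{T+1,i_\star} \sum_t \bar{r}_{t,i_\star} - \eta_{T+1,i_\star}\sum_t \eta_{t,i_\star}(\bar{r}_{t,i_\star} - m_{t,i_\star})^2\bigr)$. Taking logarithms and rearranging yields
\begin{align*}
\sum_{t=1}^T \bar{r}_{t,i_\star} \;\lesssim\; \frac{\log N + \log(1/\eta_{T+1,i_\star})}{\eta_{T+1,i_\star}} + \sum_{t=1}^T \eta_{t,i_\star}(\bar{r}_{t,i_\star} - m_{t,i_\star})^2.
\end{align*}
Plugging in $\eta_{t,i_\star} = 1/\sqrt{1 + \sum_{s<t}(\bar{r}_{s,i_\star}-m_{s,i_\star})^2 + 4B_{t-1}^2}$ and applying the self-confident summation lemma to the second sum gives both terms of order $\sqrt{\sum_t(\bar{r}_{t,i_\star}-m_{t,i_\star})^2}$ up to a $\log(B_T + \log T)$ factor coming from $1/\eta_{T+1,i_\star}$. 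Finally I would pass from $(\bar{r}_{t,i_\star}-m_{t,i_\star})^2$ back to $(r_{t,i_\star}-m_{t,i_\star})^2$ using $\abs{\bar{r}_{t,i_\star}-m_{t,i_\star}} \le \abs{r_{t,i_\star}-m_{t,i_\star}}$, and combine with the $O(B_T)$ clipping residue from the first step.

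The main obstacle I anticipate is controlling the ratio $\eta_{t,i}/\eta_{t+1,i}$ in the potential analysis: unlike classical Adapt-ML-Prod where $B$ is fixed, here both the cumulative second-order term and $B_t$ vary with $t$, so the standard telescoping $\sum_t \log(\eta_{t,i}/\eta_{t+1,i})$ must be re-examined. The key observation — which I would need to make precise — is that merging $4B_t^2$ into the denominator keeps $\eta_{t,i}/\eta_{t+1,i}$ bounded by a mild constant ratio rather than growing like the explicit min-clip $\min\{\eta^{\mathrm{common}}, 1/(2B_t)\}$ used in prior work, thereby removing the need for restarts and producing the single $\log(B_T + \log T)$ factor claimed in the theorem.
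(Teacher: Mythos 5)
Your proposal is correct and follows essentially the same route as the paper's proof: bound the clipping residue by a telescoping $B_T - B_0$, run the Prod-style potential argument with $\exp(x-x^2)\le 1+x$ (verified via $\eta_{t,i}\abs{\bar{r}_{t,i}-m_{t,i}}\le B_{t-1}/(2B_{t-1})=1/2$ thanks to the merged $4B_t^2$ term), control the accumulated $\eta_{t,i}/\eta_{t+1,i}-1$ corrections by a self-confident summation to get the $\log(B_T+\log T)$ factor, and finish by unwinding $w_{T+1,i_\star}$ and applying the self-confident tuning lemma to $\sum_t \eta_{t,i_\star}(\bar{r}_{t,i_\star}-m_{t,i_\star})^2$. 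The only detail left implicit in your sketch—absorbing the exponent $\eta_{t+1,i}/\eta_{t,i}$ in the weight update into the potential recursion—is handled in the paper via the elementary inequality $x\le x^{\alpha}+(\alpha-1)/e$, which is exactly the "negligible growth controllable by the change of learning rates" you anticipate.
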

\begin{proof}
    First, we demonstrate that the clipping technique~\citep{COLT'21:impossible-tuning,COLT'19:ashok-cubic} incurs a constant in the regret bound:
    \begin{align}
        &\sum_{t=1}^T r_{t,\is} - \bar{r}_{t, \is} =  \sum_{t=1}^T r_{t,\is}-m_{t, \is} - \frac{B_{t-1}}{B_{t}}(r_{t,\is}-m_{t, \is}) = \sum_{t=1}^T \frac{B_{t} - B_{t-1}}{B_{t}}(r_{t,\is}-m_{t, \is})\notag\\
        &\leq  \sum_{t=1}^T \frac{B_{t} - B_{t-1}}{B_{t}} \abs{r_{t,\is}-m_{t, \is}} \leq  \sum_{t=1}^T \frac{B_{t} - B_{t-1}}{B_{t}} \norm{\r_{t}-\m_{t}}_\infty  \leq B_{T} - B_{0}.\label{eq:proof-clipping-constant}
    \end{align}
    In below, we focus on the analysis associated with clipped regret $\rb_{t, \is}$. Following previous work~\citep{NIPS'16:Wei-non-stationary-expert}, we define $W_{t} = \sum_{i=1}^N w_{t,i}$ to represent the summation of weights at time $t$. The quantity $W_t$ can be realized as the potential to be analyzed. Next, we consider to upper bound $\ln W_{T+1}$. 
    
    By the inequality $x \leq x^\alpha + (\alpha - 1)/e$ for $x >0, \alpha \geq 0$, for any $i \in [N]$, we have:
    \begin{align}
        \label{eq:proof-meta-upper-bound-w}
        w_{T+1, i} \leq\left(w_{T+1, i}\right)^{\frac{\eta_{T, i}}{\eta_{T+1, i}}}+\frac{1}{e}\left(\frac{\eta_{T, i}}{\eta_{T+1, i}}-1\right).
    \end{align}
    Based on the updates in Line~\ref{line:weight-update} of Algorithm~\ref{alg:LAOAdaMLProd}, we bound the first term on the right-hand side as:
    \begin{align}
        \left(w_{T+1, i}\right)^{\frac{\eta_{T, i}}{\eta_{T+1, i}}} & =w_{T, i} \exp \left(\eta_{T, i} \bar{r}_{T, i}-\eta_{T, i}^2\left(\bar{r}_{T, i}-m_{T, i}\right)^2\right)\notag \\
        & =\tilde{w}_{T, i} \exp \left(\eta_{T, i}\left(\bar{r}_{T, i}-m_{T, i}\right)-\eta_{T, i}^2\left(\bar{r}_{T, i}-m_{T, i}\right)^2\right) \notag \\
        & \leq \tilde{w}_{T, i}\left(1+\eta_{T, i}\left(\bar{r}_{T, i}-m_{T, i}\right)\right). \label{eq:proof-meta-algorithm-inequality-1/2}
    \end{align}
    The last inequality is by $\exp(x - x^2) \leq 1 + x$ for $x \geq -1/2$. This is a crucial condition needed to be verified for Lipschitz-adaptive meta-algorithm. By the tuning of learning rates, and the clipping technique, we control the range of $\eta_{T,i}(\bar{r}_{T, i}-m_{T, i})$ well:
    \begin{align*}
        \eta_{T,i}\abs{\bar{r}_{T, i}-m_{T, i}} = \eta_{T,i}\frac{B_{T-1}}{B_{T}}\abs{r_{T,i} - m_{T,i}} \leq \eta_{T,i} B_{T-1} \leq \frac{B_{T-1}}{2B_{T-1}} = \frac{1}{2},
    \end{align*}
    which meets the criterion for applying the mentioned inequality. By plugging inequality~\eqref{eq:proof-meta-algorithm-inequality-1/2} into~\eqref{eq:proof-meta-upper-bound-w}, we can further analyze the weights for all experts at time $T$:
    \begin{align*}
        \sum_{i =1}^N w_{T+1,i} &\leq \sum_{i =1}^N \tilde{w}_{T, i}\left(1+\eta_{T, i}\left(\bar{r}_{T, i}-m_{T, i}\right)\right) + \sum_{i =1}^N \frac{1}{e}\left(\frac{\eta_{T, i}}{\eta_{T+1, i}}-1\right)\\
        &= \sum_{i =1}^N \tilde{w}_{T, i}\left(1-\eta_{T, i}m_{T, i}\right) + \sum_{i =1}^N \eta_{T, i} \tilde{w}_{T, i}\bar{r}_{T, i} + \sum_{i =1}^N \frac{1}{e}\left(\frac{\eta_{T, i}}{\eta_{T+1, i}}-1\right) \\
        &\leq \sum_{i =1}^N\tilde{w}_{T, i}\exp(-\eta_{T, i}m_{T, i}) + \sum_{i =1}^N \eta_{T, i} \tilde{w}_{T, i}\bar{r}_{T, i} + \sum_{i =1}^N \frac{1}{e}\left(\frac{\eta_{T, i}}{\eta_{T+1, i}}-1\right)\\
        &=  \sum_{i =1}^N w_{T, i} + \Big(\sum_{j =1}^N \eta_{T, j}\tilde{w}_{T, j}\Big) \sum_{i =1}^N p_{T,i}\bar{r}_{T,i} +\sum_{i =1}^N \frac{1}{e}\left(\frac{\eta_{T, i}}{\eta_{T+1, i}}-1\right)\\
        &\leq \sum_{i =1}^N w_{T, i} + \sum_{i =1}^N \frac{1}{e}\left(\frac{\eta_{T, i}}{\eta_{T+1, i}}-1\right),
    \end{align*}
    where we apply $1 - x \leq \exp(-x)$ for any $x \in \R$, and the last inequality is by the assumption in the theorem statement that $\sum_{i =1}^N p_{t,i}\bar{r}_{t,i} \leq 0$ for any $t\in[T]$. 
    
    Now we are ready to upper bound $W_{T+1}$ in an inductive style:
    \begin{align}
        W_{T+1} &= \sum_{i =1}^N w_{T+1, i} \leq \sum_{i =1}^N w_{T, i} + \sum_{i =1}^N \frac{1}{e}\left(\frac{\eta_{T, i}}{\eta_{T+1, i}}-1\right) = W_T + \sum_{i =1}^N \frac{1}{e}\left(\frac{\eta_{T, i}}{\eta_{T+1, i}}-1\right)\notag\\
        &\leq W_1 + \sum_{t=1}^{T}\sum_{i =1}^N\frac{1}{e}\left(\frac{\eta_{t, i}}{\eta_{t+1, i}}-1\right) = N + \sum_{t=1}^{T}\sum_{i =1}^N\frac{1}{e}\left(\frac{\eta_{t, i}}{\eta_{t+1, i}}-1\right),\label{eq:proof-meta-algorithm-induction-W}
    \end{align}
    where the last inequality is by the induction. It remains to analyze the last term, the deviations of the learning rates. We present the following analysis tailored for the new learning rate setting, $\forall i \in [N]$:
    \begin{align*}
        \frac{\eta_{T, i}}{\eta_{T+1, i}} - 1 &= \sqrt{\frac{1 + \sum_{t=1}^{T} (\bar{r}_{t,i} - m_{t,i})^2 +4B_{T}^2}{1 + \sum_{t=1}^{T-1} (\bar{r}_{t,i} - m_{t,i})^2 +4B_{T-1}^2}} - 1\\
        &=\sqrt{1 + \frac{4B_{T}^2 - 4B_{T-1}^2 + (\bar{r}_{T,i} - m_{T,i})^2}{1 + \sum_{t=1}^{T-1} (\bar{r}_{t,i} - m_{t,i})^2 +4B_{T-1}^2}} - 1\\
        &\leq \frac{1}{2}\cdot \frac{4B_{T}^2 - 4B_{T-1}^2 + (\bar{r}_{T,i} - m_{T,i})^2}{1 + \sum_{t=1}^{T-1} (\bar{r}_{t,i} - m_{t,i})^2 +4B_{T-1}^2} \quad \quad \tag*{$(\sqrt{1 + x}\leq 1 + \frac{1}{2}x)$}\\
        &= \frac{1}{2}\cdot \frac{\phi_{T,i}}{1 + 4B_{0}^2 + \sum_{t=1}^{T-1}\phi_{t,i}}.\quad \quad \tag*{$(\phi_{t,i} \triangleq 4B_{t}^2 - 4B_{t-1}^2 + (\bar{r}_{t,i} - m_{t,i})^2 \geq 0)$}
    \end{align*}
    By Lemma~\ref{lemma:self-confident-int} with the choices of $f(x) = 1/x, a_0 = 1 + 4B_{0}^2, a_t = \phi_{t, i}$ in the lemma statement, summing up the preceding inequality from $1$ to $T$ results in:
    \begin{align}
        \sum_{t=1}^T \bigg(\frac{\eta_{t, i}}{\eta_{t+1, i}}-1\bigg) &\leq \frac{2B_{T}^2}{1+4B^2_{0}} + \frac{1}{2}\ln\left(1 + 4B_{T}^2 + \sum_{t=1}^T(\bar{r}_{t,i} - m_{t,i})^2 \right) - \frac{1}{2} \ln(1+4B^2_{0})\notag \\
        &\leq \frac{2B_{T}^2}{1+4B^2_{0}} + \frac{1}{2}\ln\left(\frac{1 + (T +4)B_{T}^2}{1+4B^2_{0}}\right).\label{eq:proof-eta-variation-int}
    \end{align}
    Now combining~\eqref{eq:proof-meta-algorithm-induction-W} and~\eqref{eq:proof-eta-variation-int}, we can upper bound $\ln W_{T+1}$ as follows:
    \begin{align}
        \label{eq:proof-lnW}
        \ln W_{T+1} \leq \ln\left(1 + \frac{B_T^2}{1+4B^2_{0}}+ \frac{1}{2e}\ln\left(\frac{1 + TB_T^2}{1+4B^2_{0}}\right)\right) + \ln N.
    \end{align}
    In another direction, we lower bound $\ln W_{T+1} \geq \ln w_{T+1, i_\star}$ with an inductive argument:
    \begin{align*}
        \frac{1}{\eta_{T+1, i_\star}}\ln w_{T+1, i_\star} &= \frac{1}{\eta_{T, i_\star}}\left( \ln w_{T, i_\star} + \eta_{T, i_\star} \bar{r}_{T,i_\star} - \eta_{T,i_\star}^2(\bar{r}_{T,i_\star} - m_{T,i_\star})^2\right)\\
        &=\frac{1}{\eta_{T, k}}\ln w_{T, i_\star} -\eta_{T,i_\star}(\bar{r}_{T,i_\star} - m_{T,i_\star})^2 + \bar{r}_{T,i_\star}\\
        &=\frac{1}{\eta_{1, i_\star}}\ln w_{1, i_\star} - \sum_{t=1}^T \eta_{t,i_\star}(\bar{r}_{t,i_\star} - m_{t,i_\star})^2 + \sum_{t=1}^T \bar{r}_{t,i_\star}\\
        &= - \sum_{t=1}^T \eta_{t,i_\star}(\bar{r}_{t,i_\star} - m_{t,i_\star})^2 + \sum_{t=1}^T \bar{r}_{t,i_\star}.\quad \tag*{$(w_{1, i_\star} = 1)$}
    \end{align*}
    Rearranging the above equality with notice of Eq.~\eqref{eq:proof-clipping-constant}, we have:
    \begin{align*}
        &\sum_{t=1}^T r_{t,i_\star} \leq \sum_{t=1}^T \bar{r}_{t,i_\star} + B_T\\
        &\leq\sum_{t=1}^T \eta_{t,i_\star}(\bar{r}_{t,i_\star} - m_{t,i_\star})^2 +  \frac{1}{\eta_{T+1, i_\star}}\left(\ln\left(1 + \frac{B_T^2}{1+4B^2_{0}}+ \frac{1}{2e}\ln\left(\frac{1 + TB_T^2}{1+4B^2_{0}}\right)\right) + \ln N\right)+ B_T,
    \end{align*}
    where the second term is in the order of 
    \begin{align*}
        \O\Bigg(\sqrt{\sum_{t=1}^T (r_{t, \is} - m_{t, \is})^2} \cdot \left(\log(B_T + \log(TB_T))+\log N\right)\Bigg).
    \end{align*}
    As for the first term, by applying Lemma~\ref{lemma:self-confident-tuning}, it can be bounded as follows:
    \begin{align*}
        \sum_{t=1}^T \eta_{t,i_\star}(\bar{r}_{t,i_\star} - m_{t,i_\star})^2 &= \sum_{t=1}^T \frac{(\bar{r}_{t,i_\star} - m_{t,i_\star})^2}{\sqrt{1 + 4B_{t}^2 + \sum_{s=1}^{t-1}(\bar{r}_{s,i_\star} - m_{s,i_\star})^2}}\\
        &\leq \sum_{t=1}^T \frac{(\bar{r}_{t,i_\star} - m_{t,i_\star})^2}{\sqrt{1 + \sum_{s=1}^{t}(\bar{r}_{s,i_\star} - m_{s,i_\star})^2}}\leq 2\sqrt{1 + \sum_{t=1}^{T}(\bar{r}_{t,i_\star} - m_{t,i_\star})^2}\\
        &=2\sqrt{1 + \sum_{t=1}^{T}\frac{B_{t-1}^2}{B_{t}^2}(r_{t,i_\star} - m_{t,i_\star})^2}\leq 2\sqrt{1 + \sum_{t=1}^{T}(r_{t,i_\star} - m_{t,i_\star})^2}.
    \end{align*}
    Thus, the proof is complete.
\end{proof}

\subsection{Proof of Theorem~\ref{thm:universal}}
\label{appendix:subsec-universal-proof-generalized}
\begin{proof}
    We first decompose the static regret based on the performance of base-learner $\is$ into two parts as presented in Eq.~\eqref{eq:main-text-reg-decomposition}.

    The base-regret is guaranteed by the corresponding base-learner via Theorem~\ref{thm:static-convex} and Theorem~\ref{thm:static-strongly-convex}. And we mainly focus on the analysis of the meta-regret by leveraging Theorem~\ref{thm:generalized-meta-algorithm}. First we are required to verify the condition that $\sum_{i \in [N]} p_{t,i} \rb_{t,i} \leq 0$. Without loss of generality, we assume the $1$-st base-learner is for convex functions. Recall that we set $r_{t,i} = \inner{\nabla f_t(\x_t)}{\x_t - \x_{t,i}}$ for strongly convex functions learners $i \in [2, N]$, $r_{t,1} = f_t(\x_t) - f_t(\x_{t,1})$ for the convex function learner, and optimism $m_{t,i} = f_{t-1}(\x_t) - f_{t-1}(\x_{t,i})$ for each base-learner $i\in [N]$, therefore we have:
    \begin{align*}
        \sum_{i \in [N]} p_{t,i} \rb_{t,i} &=  \Big(1 - \frac{B_{t-1}}{B_t}\Big)\Big( f_{t-1}(\x_{t-1}) - \sum_{i\in[N]} p_{t,i}f_{t-1}(\x_{t,i})\Big)\\
        &\quad +  \Big(p_{t,1} (f_t(\x_{t}) - f_t(\x_{t,1}))
        + \sum_{i \in [2, N]} p_{t,i}\inner{\nabla f_t(\x_t)}{\x_t - \x_{t,i}} \Big)\\
        &\leq \Big(1 - \frac{B_{t-1}}{B_t}\Big)\Big(  \sum_{i\in[N]} p_{t,i}f_{t-1}(\x_{t,i}) - \sum_{i\in[N]} p_{t,i}f_{t-1}(\x_{t,i})\Big)\\
        &\quad +  \Big(p_{t,1} \inner{f_t(\x_t)}{\x_t - \x_{t,1}}
        + \sum_{i \in [2, N]} p_{t,i}\inner{\nabla f_t(\x_t)}{\x_t - \x_{t,i}} \Big) = 0.
    \end{align*}
    Therefore, Theorem~\ref{thm:generalized-meta-algorithm} is applicable in analyzing the meta-regret for universal online learning. In follows, we prove the regret bounds for convex functions and strongly convex functions respectively.
    \paragraph{Convex functions.} By Theorem~\ref{thm:static-convex}, the base-regret is bounded by $\O(\sqrt{V_T})$, as for the meta-regret, by the setting of inputs and optimism, by Theorem~\ref{thm:generalized-meta-algorithm}, it is bounded by
    \begin{align*}
        \textsc{Meta-Reg} &\leq \O\Bigg(\sqrt{\sum_{t=1}^T \Big( \big(f_t(\x_t) - f_t(\x_{t,1})\big) - \big(f_{t-1}(\x_t) - f_{t-1}(\x_{t,1})\big) \Big)^2} \cdot C_T + B_T\Bigg)\\
        &= \O\Bigg(\sqrt{\sum_{t=1}^T \Big( \big(f_t(\x_t) - f_{t-1}(\x_{t})\big) - \big(f_{t}(\x_{t,1}) - f_{t-1}(\x_{t,1})\big) \Big)^2} \cdot C_T + B_T\Bigg)\\
        &= \O\Bigg(\sqrt{\sum_{t=1}^T \Big( \inner{\nabla f_t(\xib_{t,1}) - \nabla f_{t-1}(\xib_{t,1})}{\x_{t} - \x_{t,1}} \Big)^2} \cdot C_T + B_T\Bigg)\\
        &\leq \O\Bigg(D\sqrt{\sum_{t=1}^T \sup_{\x \in \X } \norm{\nabla f_t(\x) - \nabla f_{t-1}(\x)}_2^2} \cdot C_T + B_T\Bigg) = \O\bigg(\sqrt{V_T}\cdot C_T + B_T\bigg),
    \end{align*}
    where we denote by $C_T = \O( \log(B_T + \log (B_TT)))$ and in the third line we apply the mean value theorem. Combining the base-regret and meta-regret together, we concludes that the static regret bound for convex functions is bounded by $\O(\sqrt{V_T} \cdot \log(B_T + \log (B_TT)))$, where $B_T = \O(\hat{G}_{\max}D)$ with $\hat{G}_{\max}$ denoting the maximum Lipschitz constant.
    \paragraph{Strongly convex functions.} For $\lambda_\star$-strong convex functions with $\lambda_\star \in [1/T, 1]$, by the construction of the curvature coefficient pool $\H$, there exists $\is \in [2, N]$ such that 
    \begin{align*}
        \lambda_{\is} \leq \lambda_\star \leq 2\lambda_{\is}.
    \end{align*}
    With this specific $\is$-th base-learner, the base-regret can be upper bounded by $\O((\log V_T)/\lambda_\star)$ by Theorem~\ref{thm:static-strongly-convex}, up to a multiplicative constant of $2$. The meta-regret can be bounded as follows:
    \begin{align*}
        &\textsc{Meta-Reg} \leq \sum_{t=1}^T \inner{\nabla f_t(\x_t)}{\x_t - \x_{t,\is}} - \frac{\lambda_\star}{2}\norm{\x_t - \x_{t,\is}}_2^2\\
        &\leq \O\Bigg(\sqrt{\sum_{t=1}^T \left( \inner{\nabla f_t(\x_t)}{\x_t - \x_{t,\is}} - (f_{t-1}(\x_t) - f_{t-1}(\x_{t,\is}))\right)^2} \cdot C_T  -\frac{\lambda_\star}{2}\norm{\x_t - \x_{t,\is}}_2^2 + B_T\Bigg)\\
        &\leq \O\Bigg(\sqrt{\sum_{t=1}^T \inner{\nabla f_t(\x_t)}{\x_t - \x_{t,\is}}^2 + \inner{\nabla f_{t-1}(\xib_{t,\is})}{\x_t - \x_{t,\is}}^2} \cdot C_T - \frac{\lambda_\star}{2}\norm{\x_t - \x_{t,\is}}_2^2+ B_T\Bigg)\\
        &\leq \O\Bigg(\hat{G}_{\max}\sqrt{\sum_{t=1}^T \norm{\x_t - \x_{t,\is}}_2^2 } \cdot C_T - \frac{\lambda_\star}{2}\norm{\x_t - \x_{t,\is}}_2^2+ B_T\Bigg)\\
        &\leq \O\Bigg(\frac{\hat{G}_{\max}^2C_T^2}{\lambda_\star} + B_T\Bigg) = \O\Bigg(\frac{\hat{G}_{\max}^2(\log(B_T + \log (B_TT)))^2}{\lambda_\star} + B_T\Bigg),
    \end{align*} 
    where we use $\hat{G}_{\max}$ to denote the maximum Lipschitz constant on the optimization trajectory, and $B_T = \O(\hat{G}_{\max}D)$. In the fourth line, we again utilize the mean value theorem. The fifth line follows from Lemma~\ref{lemma:mid-value}, which ensures that:
    \begin{align*}
        \abs{\inner{\nabla f_{t-1}(\xib_{t,\is})}{\x_t - \x_{t,\is}}} \leq \max\left\{\abs{\inner{\nabla f_{t-1}(\x_{t,\is})}{\x_t - \x_{t,\is}}},\abs{\inner{\nabla f_{t-1}(\x_{t})}{\x_t - \x_{t,\is}}} \right\},
    \end{align*}
    thus, our result depends on the Lipschitz constant on the optimization trajectory. In the last step, we apply the AM-GM inequality. The above statements show that the meta-regret is bounded by constants. With the base-regret guarantee, the proof for strongly convex functions is complete.
\end{proof}

\subsection{A Simple Universal Algorithm under Global Smoothness}
\label{appendix:subsec-simple-universal-alg}
As a byproduct, our techniques can be used to design a simpler two-layer universal algorithm that achieves the optimal gradient-variation regret bounds for convex, strongly convex, and exp-concave functions simultaneously, thereby improving upon the results of \citet{NeurIPS'23:universal}. The crux  involves leveraging the function-variation-to-gradient-variation technique for convex functions, and following the strategy of \citet{ICML'22:Zhang-simple} to demonstrate that the meta-regret is bounded by constants for strongly convex and exp-concave functions, at a cost of $\O(\log T)$ times function value queries per round. Given the global smoothness constant and the Lipschitz constant, our algorithm does not need to be Lipschitz-adaptive, thus suitable for more general optimism settings. 

\begin{algorithm}[!t]
    \caption{Universal Gradient-Variation Online Learning under Global Smoothness}
    \label{alg:universal-global}
    \begin{algorithmic}[1]
    \REQUIRE curvature coefficient pools $\H_{\text{sc}}, \H_{\text{exp}}$, number of base-learners $N$, optimistic Adapt-ML-Prod~\citep{NIPS'16:Wei-non-stationary-expert} as the meta-algorithm, base-algorithms $\A_{\text{cvx}}, \A_{\text{sc}}, \A_{\text{exp}}$.
    \STATE \textbf{Initialization}: Pass $N$ to the meta-algorithm, initialize a base-learner with $\A_{\text{cvx}}$; for $\lambda \in \H_{\text{sc}}$, initialize a base-learner with $\A_{\text{sc}}$; for $\alpha \in \H_{\text{exp}}$, initialize a base-learner with $\A_{\text{exp}}$.
    \FOR{$t=1$ {\bfseries to} $T$}
      \STATE Obtain $\p_t$ from meta-algorithm, $\x_{t,i}$ from each base-learner $i \in [N]$;
      \STATE Submit $\x_t = \sum_{i \in [N]} p_{t,i}\x_{t,i}$;
      \STATE Receive $\nabla f_t(\x_t)$ and send it to each base-learner for update;
      \STATE \textbf{For strongly convex and exp-concave functions learners}: set $r_{t,i} = \inner{\nabla f_t(\x_t)}{\x_t - \x_{t,i}}$;
      \STATE \textbf{For convex functions learner}: set $r_{t,1} =f_t(\x_t) - f_t(\x_{t, 1})$;
      \STATE Send $\r_t$ to the meta-algorithm;
      \STATE Send $m_{t+1,1} = f_t(\x_t) - f_t(\x_{t,1})$ and $m_{t+1, i} = 0, i \in [2, N]$ to the meta-algorithm.
    \ENDFOR
    \end{algorithmic}
\end{algorithm}
In Algorithm~\ref{alg:universal-global}, we present this idea. In contrast to Algorithm~\ref{alg:universal}, which is designed under the generalized smoothness, this algorithm in addition can guarantee gradient-variation bound for exp-concave functions. In below, we present the theoretical guarantees for Algorithm~\ref{alg:universal-global}.
\begin{myCor}
    \label{cor:universal-global}
    Under Assumption~\ref{assump:domain}, and assuming the loss functions are $L$-smooth and $G$-Lipschitz, we set $N = 2\lceil\log_2 T\rceil + 1$. The curvature coefficient pools are defined as $\H_{\text{sc}} =\H_{\text{exp}} = \{2^{i-1}/T : i \in [(N-1)/2]\}$. By selecting suitable base-algorithms, for $\lambda, \alpha \in [1/T, 1]$, Algorithm~\ref{alg:universal-global} ensures the following results simultaneously:
    \begin{equation*}
        \textsc{Reg}_T \leq \left\{\begin{array}{lr}
        \O(\sqrt{V_T \cdot (\log\log T)}), & \text { (convex), } \\
       \O\left(\frac{1}{\lambda}\log V_T + \frac{G^2\log\log T}{\lambda}\right), & (\lambda\text {-strongly convex}),\\
       \O\left(\frac{d}{\alpha}\log V_T + \frac{\log\log T}{\alpha}\right), & (\alpha\text {-exp-concave}).
        \end{array}\right.
        \end{equation*}
  \end{myCor}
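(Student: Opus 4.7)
The plan follows the standard meta-base regret decomposition $\Reg_T = \textsc{Meta-Reg} + \textsc{Base-Reg}$ from Eq.~\eqref{eq:main-text-reg-decomposition}, analyzed against a target base-learner $\is$ chosen per function class. For the base-regret, I would instantiate: the convex base-learner from Theorem~\ref{thm:static-convex} yielding $\O(\sqrt{V_T})$; for $\lambda$-strongly convex with $\lambda\in[1/T,1]$, by construction of $\H_{\text{sc}}$ there exists $\lambda_{\is}$ with $\lambda_{\is}\leq\lambda\leq 2\lambda_{\is}$, and the corresponding base-learner from Theorem~\ref{thm:static-strongly-convex} delivers $\O(\lambda^{-1}\log V_T)$; for $\alpha$-exp-concave, an analogous discretization of $\H_{\text{exp}}$ combined with an optimistic ONS~\citep{COLT'12:variation-Yang} gives $\O(d\alpha^{-1}\log V_T)$.

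For the meta-regret, I would invoke the second-order optimistic bound of optimistic Adapt-ML-Prod~\citep{NIPS'16:Wei-non-stationary-expert}, which for any $\is\in[N]$ promises $\textsc{Meta-Reg}\leq \O\bigl(\sqrt{\sum_t(r_{t,\is}-m_{t,\is})^2\cdot(\log N+\log\log T)}\bigr)$. Since $N=\O(\log T)$, the prefactor simplifies to $\O(\sqrt{\log\log T})$. Under global Lipschitzness and smoothness, all inputs $r_{t,i}$ and $m_{t,i}$ are uniformly bounded by $\O(GD)$, so the standard (non-Lipschitz-adaptive) Adapt-ML-Prod analysis applies and no clipping is required. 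The potential-based condition $\sum_i p_{t,i}(r_{t,i}-m_{t,i})\leq 0$ required by the analysis is verified exactly as in the proof of Theorem~\ref{thm:universal}: convexity of $f_t$ with $\x_t=\sum_i p_{t,i}\x_{t,i}$ ensures that the mixture of the function-value regret of the convex learner and the linearized regrets of the curved learners is non-positive.

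The three cases then diverge. For convex losses, with $r_{t,1}=f_t(\x_t)-f_t(\x_{t,1})$ and optimism of the form $m_{t,1}=f_{t-1}(\x_t)-f_{t-1}(\x_{t,1})$ (instantiable either directly from previous-round data or via the binary search of Appendix~\ref{appendix:subsec-binary-search}, which is the source of the extra function-value queries), the mean value theorem applied to $h(\x)=f_t(\x)-f_{t-1}(\x)$ yields $r_{t,1}-m_{t,1}=\inner{\nabla f_t(\xib_t)-\nabla f_{t-1}(\xib_t)}{\x_t-\x_{t,1}}$, so $\sum_t(r_{t,1}-m_{t,1})^2\leq D^2 V_T$ and $\textsc{Meta-Reg}=\O(D\sqrt{V_T\log\log T})$, dominating $\textsc{Base-Reg}=\O(\sqrt{V_T})$. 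For $\lambda$-strongly convex (resp.\ $\alpha$-exp-concave) losses, I use $r_{t,i}=\inner{\nabla f_t(\x_t)}{\x_t-\x_{t,i}}$ with zero optimism for the curved experts; the negative term $-\tfrac{\lambda}{2}\sum_t\norm{\x_t-\x_{t,\is}}_2^2$ (resp.\ $-\tfrac{\alpha}{2}\sum_t r_{t,\is}^2$) extracted from the curvature of $f_t$, combined with $r_{t,\is}^2\leq G^2\norm{\x_t-\x_{t,\is}}_2^2$ and the AM-GM step $\sqrt{AB}\leq \tfrac{\epsilon}{2}A+\tfrac{1}{2\epsilon}B$, caps $\textsc{Meta-Reg}$ by $\O(G^2\log\log T/\lambda)$ (resp.\ $\O(\log\log T/\alpha)$). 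Adding base- and meta-regrets gives the claimed bounds.

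The main obstacle I anticipate is the convex case: the function-value regret $r_{t,1}$ differs in nature from the linearized regrets used for the curved experts, and the analysis must show that the common second-order bound of optimistic Adapt-ML-Prod survives this heterogeneity and produces $D^2 V_T$ under the second-order integrand without base-level cancellation. This is precisely where global smoothness helps: it ensures the base-learners are automatically stable through their step-size tuning, so the stability terms arising from the decomposition $r_{t,1}-m_{t,1}=[\text{gradient-variation term}]+[\text{decision-stability term}]$ are lower-order and absorbed directly, bypassing the three-layer collaboration of~\citet{NeurIPS'23:universal} and yielding the stated simpler two-layer result.
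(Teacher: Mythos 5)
Your proposal is correct and follows essentially the same route as the paper: the meta--base decomposition with optimistic Adapt-ML-Prod, the function-variation-to-gradient-variation conversion via the mean value theorem for the convex expert, and AM-GM cancellation against the curvature-induced negative terms for the strongly convex and exp-concave experts, with the pool discretization handling unknown $\lambda$ and $\alpha$. One caveat: the optimism for the convex expert must be the current-round version $m_{t,1}=f_{t-1}(\x_t)-f_{t-1}(\x_{t,1})$ obtained via the binary search (the choice your main argument actually uses), since the ``previous-round data'' alternative you mention parenthetically would reintroduce decision-stability terms and the cancellation-based analysis this construction is designed to avoid.
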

  \begin{proof}
    When assuming the Lipschitz constant $G$, optimistic Adapt-ML-Prod~\citep{NIPS'16:Wei-non-stationary-expert} can ensure the meta-regret bounded by $\O(\sqrt{\sum_{t=1}^t (r_{t,i} - m_{t,i})^2 \cdot \log\log T})$, thus the dependence of logarithmic terms is improved compared to Theorem~\ref{thm:universal}. The proofs for convex functions and strongly convex functions are nearly identical to the proofs for Theorem~\ref{thm:universal} in Appendix~\ref{appendix:subsec-universal-proof-generalized}; thus, we omit them here. We highlight the importance of the function-variation-to-gradient-variation technique, bounding the meta-regret of order $\O(\sqrt{V_T\cdot \log\log T})$ without the cancellation-based analysis. 

    Next, we show that the meta-regret for $\alpha_\star$-exp-concave functions is bounded by a constant. By the construction of the curvature pool $\H_{\text{exp}}$, there exists base-learner $\is$ with the input curvature $\alpha_{\is}$ satisfying that $\alpha_{\is} \leq \alpha_\star \leq 2 \alpha_{\is}$. Decompose the regret against this specific base-learner and by the definition of exp-concave functions, we have:
    \begin{align*}
        &\textsc{Meta-Reg} = \sum_{t=1}^T f_t(\x_t) - f_t(\x_{t,\is}) \leq \sum_{t=1}^T \inner{\nabla f_t(\x_t)}{\x_t - \x_{t,\is}} - \frac{\alpha}{2}\inner{\nabla f_t(\x_t)}{\x_t - \x_{t,\is}}^2\\
        &\leq \O\Bigg(\sqrt{\sum_{t=1}^T (r_{t,\is} - m_{t,\is})^2 \cdot \log\log T}- \frac{\alpha}{2}\inner{\nabla f_t(\x_t)}{\x_t - \x_{t,\is}}^2\Bigg)\\
        &=\O\Bigg(\sqrt{\sum_{t=1}^T (\inner{\nabla f_t(\x_t)}{\x_t - \x_{t,\is}} )^2 \cdot \log\log T}- \frac{\alpha}{2}\inner{\nabla f_t(\x_t)}{\x_t - \x_{t,\is}}^2\Bigg) \leq \O\left(\frac{\log\log T}{\alpha}\right),
    \end{align*}
    where the second-to-last line follows from the settings that $r_{t,\is} = \inner{\nabla f_t(\x_t)}{\x_t - \x_{t,\is}}$ and $m_{t,\is}=0$, and we apply the AM-GM inequality in the final step. Therefore, by choosing the base-algorithm that ensures the regret bound of $\O(\frac{d}{\alpha_\star}\log V_T)$, we complete the proof for this theorem.
\end{proof}
\section{Omitted Details for Section~\ref{append:applications}}
\label{appendix:proof-application}
This section provides the omitted proofs for our two applications.
\subsection{Proof of Corollary~\ref{cor:sea}}
\label{appendix:subsec-proof-sea}
\begin{proof}
    We prove this theorem in a black-box manner, thanks to the gradient-variation bound we derive. By Theorem~\ref{thm:universal}, for convex functions, Algorithm~\ref{alg:universal} ensures:
    \begin{align*}
        \Reg_T & \leq \O\Bigg(\sqrt{\sum_{t=1}^T \sup_{\x \in \X} \norm{\nabla f_t(\x) - \nabla F_t(\x)}_2^2 +\sum_{t=2}^T \sup_{\x \in \X}\norm{ \nabla F_t(\x) - \nabla F_{t-1}(\x)}_2^2}\Bigg),
    \end{align*}
    where we decompose $\norm{\nabla f_t(\x) - \nabla f_{t-1}(\x)}_2^2$ as follows: 
    \begin{align}
        \label{eq:sea-decomposition}
        \O\left(\norm{\nabla f_t(\x) - \nabla F_{t}(\x)}_2^2+\norm{\nabla F_t(\x) - \nabla F_{t-1}(\x)}_2^2+\norm{\nabla F_{t-1}(\x) - \nabla f_{t-1}(\x)}_2^2\right).
    \end{align}
    Finally, by taking expectation on both sides and leveraging the concavity of the square root, we have:
    \begin{align*}
     \E[\Reg_T] \leq \O\Bigg(\sqrt{\sum_{t=1}^T \E\left[\sup_{\x \in \X} \norm{\nabla f_t(\x) - \nabla F_t(\x)}_2^2\right] +\sum_{t=2}^T\E\left[\sup_{\x \in \X}\norm{ \nabla F_t(\x) - \nabla F_{t-1}(\x)}_2^2\right]}\Bigg),
    \end{align*}
    which is in order of $\O\big(\sqrt{\Sigma_{I}^2}+\sqrt{\tilde{\sigma}_{I}^2}\big)$.

    For strongly convex functions, as shown in Eq.~\eqref{eq:proof-strongly-convex-better-constant} in the proof of Theorem~\ref{thm:static-strongly-convex}, the multiplicative factor $\hat{G}_{\max}$ can be replaced by a more refined factor $\max_{t \in [T]}\norm{\nabla f_t(\x_t) - \nabla f_{t-1}(\x_t)}_2$. Then Algorithm~\ref{alg:universal} can ensure the following bound for strongly convex functions:
    \begin{align*}
        \Reg_T  \leq \O\Bigg(\max_{t \in [T]}\norm{\nabla f_t(\x_t) - \nabla f_{t-1}(\x_t)}_2^2\cdot \log \Bigg(\sum_{t=2}^T \sup_{\x \in \X} \norm{\nabla f_t(\x) - \nabla f_{t-1}(\x)}_2^2 \Bigg)\Bigg).
    \end{align*}
    By applying a similar argument as in Eq.~\eqref{eq:sea-decomposition} to decompose the gradient variation, taking the expectation on both sides, and leveraging the concavity of the logarithm, we conclude the proof.
\end{proof}

\subsection{Proof of Corollary~\ref{cor:min-max}}
\label{appendix:subsec-proof-min-max}
\begin{proof}
    The step sizes for optimistic OMD in~\eqref{eq:initialize-omd-min-max} is $\eta_t = \min \{ D,~ \min_{s \in [t]} \frac{1}{4\ell_{s-1}(2\norm{F_{s-1}(\zh_s)}_2)} \}$. By the convexity and the concavity for the objective function $f(\cdot, \cdot)$, for any $\z = (\x, \y) \in \Z$, we can linearize the gap for an $\epsilon$-approximate solution as $f(\bar{\x}_T, \y) - f(\x, \bar{\y}_T) \leq \frac{1}{T}\sum_{t=1}^T \inner{F(\z_t)}{\z_t - \z}$.

   Following the proof of Lemma~\ref{lemma:key-lemma-static-convex}, we can demonstrate that optimistic OMD in~\eqref{eq:initialize-omd-min-max} ensures:
   \begin{align*}
    \sum_{t=1}^T \inner{F(\z_t)}{\z_t - \z} \leq \O\Bigg(\frac{\max_{t\in[T]}\norm{\z_t - \z}_2^2}{\eta_T} + \sum_{t=1}^T \eta_t\norm{F(\z_t) - F(\zh_t)}_2^2 - \sum_{t=1}^T\frac{1}{\eta_t} \norm{\z_t - \zh_t}_2^2 \Bigg).
   \end{align*}
   The first term on the right-hand side is in the order of $\O(\hat{L}_{\max}D^2 )$ by the step size configuration, where $\hat{L}_{\max}$ denotes the maximum smoothness constant on the optimization trajectory. By the $\ell$-smoothness in Definition~\ref{def:minmax-ell-smooth}, the second term can be bounded as $\eta_t \norm{F(\z_t) - F(\zh_t)}_2^2 \leq \O(\eta_t\ell_{s-1}(2\norm{F_{s-1}(\zh_s)}_2)^2 \cdot \norm{\z_t - \zh_t}_2^2)$, which can be further cancelled by the negative terms. Therefore, $\sum_{t=1}^T \inner{F(\z_t)}{\z_t - \z}$ is bounded by a constant, thus, the convergence rate to an $\epsilon$-approximate solution is $\O(1/T)$, implying a fast convergence rate.
\end{proof}
\section{Supporting Lemmas}
\label{appendix:supporting-lemmas}
\subsection{Lemmas for Optimistic OMD}
We first present the generic lemma for optimistic OMD with dynamic regret~\citep{JMLR'24:Sword++}, which encompasses the standard regret by setting $\u_1=...=\u_T=\xs \in \argmin_{\x\in\X} \sum_{t=1}^T f_t(\x)$.
\begin{myLemma}[Theorem 1 of~\citet{JMLR'24:Sword++}]
    \label{lemma:optimistic-omd}
    Optimistic OMD specialized at Eq.~\eqref{eq:def-omd} satisfies
    \begin{align*}
        \sum_{t=1}^T \inner{\nabla f_t(\x_t)}{\x_t - \u_t} &\leq \sum_{t=1}^T \inner{\nabla f_t(\x_t) - M_t}{\x_t - \xh_{t+1}} + \sum_{t=1}^T \left(\D_{\psi_t}(\u_t, \xh_t) - \D_{\psi_t}(\u_t, \xh_{t+1})\right) \\
        &\quad - \sum_{t=1}^T \left(\D_{\psi_t}(\xh_{t+1}, \x_{t}) + \D_{\psi_t}(\x_{t}, \xh_{t})\right),
    \end{align*}
    where $\u_1, \dots, \u_T \in \X$ are arbitrary comparators in the feasible domain.
\end{myLemma}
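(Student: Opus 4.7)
The plan is to apply the standard three-point optimality lemma for Bregman-regularized minimizations to each of the two update steps in~\eqref{eq:def-omd}, then algebraically combine the two resulting inequalities. Specifically, I will use the fact that if $\z = \argmin_{\x \in \X}\{\inner{\g}{\x} + \B_{\psi}(\x, \w)\}$ for some convex regularizer $\psi$ and convex set $\X$, then the first-order optimality condition together with the three-point identity for Bregman divergences implies
\begin{equation*}
    \inner{\g}{\z - \u} \leq \B_{\psi}(\u, \w) - \B_{\psi}(\u, \z) - \B_{\psi}(\z, \w) \qquad \text{for all } \u \in \X.
\end{equation*}
This is the key algebraic workhorse; it is standard and I will take it as given.

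First, I apply this lemma to the second update $\xh_{t+1} = \argmin_{\x\in\X}\{\inner{\nabla f_t(\x_t)}{\x} + \B_{\psi_t}(\x, \xh_t)\}$ with comparator $\u_t$, obtaining an upper bound on $\inner{\nabla f_t(\x_t)}{\xh_{t+1} - \u_t}$ in terms of $\B_{\psi_t}(\u_t, \xh_t) - \B_{\psi_t}(\u_t, \xh_{t+1}) - \B_{\psi_t}(\xh_{t+1}, \xh_t)$. Second, I apply it to the first update $\x_t = \argmin_{\x\in\X}\{\inner{M_t}{\x} + \B_{\psi_t}(\x, \xh_t)\}$, this time choosing the comparator to be the intermediate point $\xh_{t+1}$, yielding an upper bound on $\inner{M_t}{\x_t - \xh_{t+1}}$ in terms of $\B_{\psi_t}(\xh_{t+1}, \xh_t) - \B_{\psi_t}(\xh_{t+1}, \x_t) - \B_{\psi_t}(\x_t, \xh_t)$. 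The clever choice of $\xh_{t+1}$ as the comparator in the second application is what makes the positive $\B_{\psi_t}(\xh_{t+1}, \xh_t)$ term from the first inequality cancel.

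Third, I decompose the instantaneous regret as
\begin{equation*}
    \inner{\nabla f_t(\x_t)}{\x_t - \u_t} = \inner{\nabla f_t(\x_t) - M_t}{\x_t - \xh_{t+1}} + \inner{M_t}{\x_t - \xh_{t+1}} + \inner{\nabla f_t(\x_t)}{\xh_{t+1} - \u_t},
\end{equation*}
and substitute the two bounds from the previous paragraph for the last two terms. After the cancellation of $\B_{\psi_t}(\xh_{t+1}, \xh_t)$, I obtain the per-round bound in the statement. Summing from $t=1$ to $T$ completes the proof.

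The main obstacle is essentially bookkeeping rather than anything deep: one must be careful that both applications of the three-point optimality lemma use the \emph{same} regularizer $\psi_t$ (so that Bregman terms can cancel across the two steps, rather than across consecutive rounds), and that the comparator choice $\xh_{t+1}$ in the first update is feasible, which it is since $\xh_{t+1} \in \X$ by definition. No smoothness, convexity of $f_t$, or step-size properties are needed for the lemma itself---it is a purely algorithmic identity for the optimistic OMD update, which is precisely what makes it a useful black-box tool for the downstream analyses in the paper.
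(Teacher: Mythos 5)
Your proof is correct: the two applications of the three-point optimality inequality (with the key choice of $\xh_{t+1}$ as comparator in the first update so that $\B_{\psi_t}(\xh_{t+1},\xh_t)$ cancels), combined with the three-term decomposition of the instantaneous regret, is exactly the standard derivation of this bound. The paper itself gives no proof---it imports the result as Theorem 1 of the cited Sword++ paper---and your argument matches that source's approach, at the same level of generality (arbitrary $\psi_t$, no smoothness or convexity of $f_t$ needed).
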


The next lemma, known as the stability lemma, establishes an upper bound on the proximity between successive decisions in terms of the gradient utilized for updates.

\begin{myLemma}[Proposition 7 of~\citet{COLT'12:variation-Yang}]
\label{lemma:stability-lemma}
Consider the following two updates: (i) $ \x = \argmin_{\x \in \X} \left\{ \inner{\g}{\x} + \D_{\psi}(\x, \c) \right\}$, and (ii) $\x^\prime = \argmin_{\x \in \X} \left\{ \inner{\g^\prime}{\x} + \D_{\psi}(\x, \c) \right\}$. When the regularizer $\psi: \X \mapsto \R$ is $\lambda$-strongly convex function with respect to norm $\norm{\cdot}$, we have $\lambda \norm{\x - \x^\prime} \leq \norm{\g - \g^\prime}_{*}$.
\end{myLemma}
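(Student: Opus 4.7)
The plan is to derive the bound from the first-order optimality conditions of the two minimization problems and then combine them using the strong convexity of the regularizer $\psi$ and the Fenchel--Young / Hölder dual-norm inequality. This is the classical route to the stability lemma in online mirror descent.

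First, I would recall that $\D_{\psi}(\x, \c) = \psi(\x) - \psi(\c) - \inner{\nabla \psi(\c)}{\x - \c}$, so the variational inequalities characterizing the two minimizers $\x$ and $\x'$ over the convex set $\X$ read
\begin{align*}
\inner{\g + \nabla \psi(\x) - \nabla \psi(\c)}{\y - \x} &\geq 0 \quad \text{for all } \y \in \X, \\
\inner{\g' + \nabla \psi(\x') - \nabla \psi(\c)}{\y - \x'} &\geq 0 \quad \text{for all } \y \in \X.
\end{align*}
Specializing the first inequality to $\y = \x'$ and the second to $\y = \x$, then adding them, the terms involving $\nabla \psi(\c)$ cancel and we obtain
\begin{align*}
\inner{\nabla \psi(\x) - \nabla \psi(\x')}{\x - \x'} \leq \inner{\g' - \g}{\x - \x'}.
\end{align*}

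Next, I would invoke $\lambda$-strong convexity of $\psi$ with respect to $\norm{\cdot}$, which implies the first-order characterization $\inner{\nabla \psi(\x) - \nabla \psi(\x')}{\x - \x'} \geq \lambda \norm{\x - \x'}^2$. On the right-hand side, Hölder's inequality gives $\inner{\g' - \g}{\x - \x'} \leq \norm{\g - \g'}_* \norm{\x - \x'}$. Combining these two bounds yields $\lambda \norm{\x - \x'}^2 \leq \norm{\g - \g'}_* \norm{\x - \x'}$, and dividing by $\norm{\x - \x'}$ (with the degenerate case $\x = \x'$ handled trivially) produces the claimed inequality $\lambda \norm{\x - \x'} \leq \norm{\g - \g'}_*$.

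There is no substantive obstacle here; this is a textbook manipulation. The only subtlety worth flagging is ensuring that the first-order optimality conditions above are valid for constrained minimization over $\X$ (they are, because the objective is the sum of a linear term and a convex differentiable function, and $\X$ is convex), and that the strong convexity inequality for $\psi$ carries over to the constrained setting (it does, since strong convexity is a pointwise property of $\psi$ on $\X$). With those observations noted, the proof is complete in essentially the chain of inequalities above.
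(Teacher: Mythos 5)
Your proof is correct and is exactly the standard argument behind this lemma (the paper itself gives no proof, deferring to Proposition 7 of the cited work, which proceeds the same way): variational inequalities at the two minimizers, cancellation of the $\nabla\psi(\c)$ terms, the monotonicity consequence $\inner{\nabla\psi(\x)-\nabla\psi(\x')}{\x-\x'}\geq\lambda\norm{\x-\x'}^2$ of strong convexity, and H\"older's inequality. No gaps.
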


\subsection{Self-Confident Tuning Lemmas}
In this part, we provide some useful lemmas when analyzing the self-confident tuning strategy.
\begin{myLemma}[Extension of Lemma 14~\citet{COLT'14:second-order-Hedge}]
    \label{lemma:self-confident-int}
    Let $a_0 > 0$ and $a_t \in [0, B]$ be real numbers for all $t \in [T]$ and let $f:(0, +\infty) \mapsto [0, +\infty)$ be a nonincreasing function. Then $\sum_{t=1}^T a_t f\left(\sum_{s=0}^{t-1} a_s \right) \leq B\cdot f(a_0) + \int_{a_0}^{\sum_{t=0}^T a_t} f(u)\mathrm{d}u$.
\end{myLemma}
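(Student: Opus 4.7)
}
Write $S_t = \sum_{s=0}^t a_s$, so that $S_0 = a_0$ and the sum to be bounded is $\sum_{t=1}^T a_t f(S_{t-1})$ while the integral bound runs from $S_0 = a_0$ to $S_T$. The plan is to reduce the sum to a Riemann-like lower estimate of $\int_{S_0}^{S_T} f(u)\,\mathrm{d}u$ and then pay a boundary correction of size at most $B\cdot f(a_0)$ for the fact that $f$ is evaluated at the \emph{left} endpoint $S_{t-1}$ rather than the right endpoint $S_t$ of each slab $[S_{t-1},S_t]$.

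Concretely, first I would decompose
\begin{align*}
  \sum_{t=1}^T a_t f(S_{t-1})
  = \sum_{t=1}^T a_t f(S_t) + \sum_{t=1}^T a_t \bigl(f(S_{t-1}) - f(S_t)\bigr).
\end{align*}
For the first summand, since $f$ is nonincreasing and $S_t = S_{t-1}+a_t$, for every $u \in [S_{t-1}, S_t]$ we have $f(u) \geq f(S_t)$, so $a_t f(S_t) \leq \int_{S_{t-1}}^{S_t} f(u)\,\mathrm{d}u$, and summing telescopically gives $\sum_{t=1}^T a_t f(S_t) \leq \int_{a_0}^{S_T} f(u)\,\mathrm{d}u$.

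For the second summand, use $a_t \in [0,B]$ and the fact that each increment $f(S_{t-1})-f(S_t)$ is nonnegative (again by monotonicity of $f$ and $S_t \geq S_{t-1}$), so
\begin{align*}
  \sum_{t=1}^T a_t \bigl(f(S_{t-1}) - f(S_t)\bigr)
  \leq B \sum_{t=1}^T \bigl(f(S_{t-1}) - f(S_t)\bigr)
  = B\bigl(f(a_0) - f(S_T)\bigr)
  \leq B\cdot f(a_0),
\end{align*}
where the last step uses $f \geq 0$. Combining the two bounds yields the claim.

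The only mildly delicate point, and the main conceptual obstacle, is choosing the correct decomposition: a naive comparison $a_t f(S_{t-1}) \leq \int_{S_{t-2}}^{S_{t-1}} f(u)\,\mathrm{d}u$ does not carry through because the index shift would force the integral to start at $S_{-1}$, which is not defined. Splitting off the boundary term $\sum_t a_t(f(S_{t-1}) - f(S_t))$ sidesteps this, and the uniform bound $a_t \leq B$ is precisely what turns that telescoping sum into the additive $B\cdot f(a_0)$ overhead stated in the lemma.
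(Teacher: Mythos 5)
Your proof is correct and complete: the decomposition into $\sum_t a_t f(S_t)$ plus the telescoping correction $\sum_t a_t(f(S_{t-1})-f(S_t))$, bounding the former by the integral via monotonicity and the latter by $B\cdot f(a_0)$ via $a_t\leq B$ and $f\geq 0$, is exactly the standard argument for this type of integral-comparison lemma. The paper itself states Lemma~\ref{lemma:self-confident-int} without proof, citing it as an extension of Lemma 14 of \citet{COLT'14:second-order-Hedge} (which handles $a_t\in[0,1]$), and your argument is precisely the natural generalization of that proof, with the uniform bound $a_t\leq B$ converting the telescoping sum into the $B\cdot f(a_0)$ overhead.
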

\begin{myLemma}[Lemma 3.5 of~\citet{JCSS'02:Auer-self-confident}]
    \label{lemma:self-confident-tuning}
    Let $a_1, \dots, a_T$ and $\delta$ be non-negative real numbers. Then $\sum_{t=1}^T \frac{a_t}{\sqrt{\delta + \sum_{s=1}^t a_s}} \leq 2\Big(\sqrt{\delta + \sum_{t=1}^T a_t} - \sqrt{\delta}\Big)$.
\end{myLemma}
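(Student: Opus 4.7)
The plan is to establish the inequality term-by-term via a simple algebraic bound that allows a telescoping sum. Specifically, I will prove the elementary inequality
\[
\frac{a}{\sqrt{b+a}} \;\leq\; 2\bigl(\sqrt{b+a}-\sqrt{b}\bigr)
\]
for all $a,b \geq 0$ with $b+a>0$, and then apply it to each summand with $a = a_t$ and $b = \delta + \sum_{s=1}^{t-1} a_s$, so that $b+a = \delta + \sum_{s=1}^{t} a_s$.

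For the elementary inequality, I would rationalize the right-hand side:
\[
2\bigl(\sqrt{b+a}-\sqrt{b}\bigr) \;=\; \frac{2a}{\sqrt{b+a}+\sqrt{b}} \;\geq\; \frac{2a}{2\sqrt{b+a}} \;=\; \frac{a}{\sqrt{b+a}},
\]
where the middle inequality uses $\sqrt{b} \leq \sqrt{b+a}$. This is the only nontrivial observation in the proof; once it is in hand, everything else is routine.

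Applying this summand-by-summand and letting $S_t := \delta + \sum_{s=1}^{t} a_s$ (with $S_0 = \delta$), I obtain
\[
\sum_{t=1}^T \frac{a_t}{\sqrt{S_t}} \;\leq\; 2\sum_{t=1}^T \bigl(\sqrt{S_t}-\sqrt{S_{t-1}}\bigr) \;=\; 2\bigl(\sqrt{S_T}-\sqrt{S_0}\bigr) \;=\; 2\Bigl(\sqrt{\delta + \sum_{t=1}^T a_t} - \sqrt{\delta}\Bigr),
\]
by telescoping, which is exactly the claimed bound. The case $\delta = 0$ with $a_1 = 0$ requires interpreting $0/0$ as $0$, which is the natural convention, and does not affect the argument otherwise.

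I expect no serious obstacle: the only subtlety is verifying the elementary inequality, which is immediate via rationalization, and handling the edge case where $S_t = 0$ (which only occurs when all preceding $a_s$ and $\delta$ vanish, so $a_t/\sqrt{S_t}$ is $0/0$ and contributes nothing). The lemma is standard and the proof is essentially one line modulo the algebraic observation above.
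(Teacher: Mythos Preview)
Your proof is correct and is exactly the standard telescoping argument for this inequality. The paper does not supply its own proof of this lemma; it simply cites it as Lemma~3.5 of \citet{JCSS'02:Auer-self-confident} in the supporting-lemmas appendix, so there is nothing further to compare.
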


\subsection{Technical Lemma}
\begin{myLemma}
    \label{lemma:mid-value}
  Let $f:\X \mapsto \R$ be a convex, twice differentiable function. Then for any $\x, \y \in \operatorname{int} \X$ and $\lambda \in [0, 1]$, we have that:
  \begin{align*}
    \abs{\inner{\nabla f(\lambda \x + (1-\lambda) \y)}{\x - \y}} \leq \max\left\{\abs{\inner{\nabla f(\x)}{\x - \y}} , \abs{\inner{\nabla f(\y)}{\x - \y}}\right\}.
  \end{align*}
  \end{myLemma}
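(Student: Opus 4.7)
The plan is to reduce this to a one-dimensional convexity statement by restricting $f$ to the segment joining $\x$ and $\y$. Define the auxiliary function $g:[0,1]\mapsto\R$ by
\begin{equation*}
    g(\lambda) = f(\lambda \x + (1-\lambda)\y).
\end{equation*}
By the chain rule, $g$ is twice differentiable on an open neighborhood of $[0,1]$ (since $\x,\y\in\operatorname{int}\X$) and its derivative is
\begin{equation*}
    g'(\lambda) = \inner{\nabla f(\lambda \x + (1-\lambda)\y)}{\x - \y}.
\end{equation*}
In particular, $g'(1) = \inner{\nabla f(\x)}{\x-\y}$ and $g'(0) = \inner{\nabla f(\y)}{\x-\y}$, so the quantity we wish to bound is exactly $\abs{g'(\lambda)}$ with the right-hand side being $\max\{\abs{g'(0)},\abs{g'(1)}\}$.

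Next, I would invoke the convexity of $f$ to conclude that $g$ is convex on $[0,1]$, hence $g'$ is non-decreasing on $[0,1]$. This immediately yields the sandwich
\begin{equation*}
    g'(0) \;\leq\; g'(\lambda) \;\leq\; g'(1) \quad \text{for every } \lambda \in [0,1].
\end{equation*}

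Finally, it remains to observe an elementary fact: whenever a real number $c$ lies in an interval $[a,b]$, we have $\abs{c} \leq \max\{\abs{a},\abs{b}\}$. Indeed, if $c\geq 0$ then $c\leq b\leq \abs{b}$, while if $c<0$ then $-c\leq -a\leq \abs{a}$. Applying this with $a=g'(0)$, $b=g'(1)$, $c=g'(\lambda)$ completes the proof. There is no real obstacle here; the only point worth highlighting is that we never need the twice-differentiability hypothesis explicitly (convexity plus differentiability suffices), since the monotonicity of $g'$ follows directly from convexity of $g$.
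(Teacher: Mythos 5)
Your proof is correct and follows essentially the same route as the paper's: restrict $f$ to the segment via $\phi(t)=f(t\x+(1-t)\y)$, use convexity to get monotonicity of $\phi'$, and conclude $\abs{\phi'(\lambda)}\leq\max\{\abs{\phi'(0)},\abs{\phi'(1)}\}$. Your added observation that twice-differentiability is not actually needed is also accurate.
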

  \begin{proof}
    Define a function that $ \phi(t) =  f(t\x + (1-t) \y)$.
     For this univariate convex function, we have $\phi^\prime(t) = \inner{\nabla f(t\x + (1-t) \y)}{\x - \y}$. By the convexity of $\phi(t)$, there is $\abs{\phi^\prime(t)} \leq \max\left\{\abs{\phi^\prime(1)}, \abs{\phi^\prime(0)}\right\}$, concluding the proof.
  \end{proof}

\newpage

\end{document}